\documentclass{article}
 
\usepackage{PRIMEarxiv}
\usepackage[numbers,sort&compress]{natbib}
\usepackage[utf8]{inputenc} % allow utf-8 input
\usepackage[T1]{fontenc}    % use 8-bit T1 fonts
\usepackage{hyperref}       % hyperlinks
\usepackage{url}            % simple URL typesetting
\usepackage{booktabs}       % professional-quality tables
\usepackage{amsfonts}       % blackboard math symbols
\usepackage{nicefrac}       % compact symbols for 1/2, etc.
\usepackage{microtype}      % microtypography
\usepackage{fancyhdr}       % header
\usepackage{graphicx}       % graphics
\usepackage{amsmath}
\usepackage{amsthm}
\usepackage{amssymb}
\usepackage{xcolor}
 
\usepackage{amsmath,amsfonts,amsthm,amssymb,bm}

\def\eqref#1{equation~\ref{#1}}
\def\1{\bm{1}}

\DeclareMathAlphabet{\mathsfit}{\encodingdefault}{\sfdefault}{m}{sl}
\SetMathAlphabet{\mathsfit}{bold}{\encodingdefault}{\sfdefault}{bx}{n}

\usepackage{xcolor}
\usepackage{annotate-equations}
\usepackage{hyperref}
\hypersetup{
  colorlinks=true,
  linkcolor=green!30!black,
  linktoc=all,
  citecolor=blue,
  bookmarksnumbered=true,
  pdfproducer={},
  pdfinfo={CreationDate={2025},ModDate={2025}}, % UPDATE
  pdfauthor={},
  pdfkeywords={},
  pdfsubject={},
  pdfcreator={},
  pdflang={}
}
\usepackage{cleveref}
\usepackage{mathtools,pifont,enumitem}
\usepackage[framemethod=TikZ]{mdframed}
\usepackage{color,fancybox,graphicx,subcaption}

\usepackage{xurl}
\usepackage{scalerel}
\usetikzlibrary{math}
\usepackage{bbding}
\usepackage{array}
\usepackage{dsfont}

\usepackage{empheq}
\usepackage{stackengine}

\newtheorem{assumption}{Assumption}
\newtheorem{theorem}{Theorem}[section]
\newtheorem{lemma}[theorem]{Lemma}
\newtheorem{definition}[theorem]{Definition}

\newtheorem{corollary}[theorem]{Corollary}

\newtheorem{proposition}[theorem]{Proposition}

\crefname{section}{Section}{Sections}
\crefname{theorem}{Theorem}{Theorems}
\crefname{assumption}{Assumption}{Assumptions}
\crefname{lemma}{Lemma}{Lemmas}
\crefname{definition}{Definition}{Definitions}
\crefname{conjecture}{Conjecture}{Conjectures}
\crefname{corollary}{Corollary}{Corollaries}
\crefname{construction}{Construction}{Constructions}
\crefname{claim}{Proposition}{Propositions}
\crefname{observation}{Observation}{Observations}
\crefname{proposition}{Proposition}{Propositions}
\crefname{fact}{Fact}{Facts}
\crefname{question}{Question}{Questions}
\crefname{problem}{Problem}{Problems}
\crefname{remark}{Remark}{Remarks}
\crefname{example}{Example}{Examples}
\crefname{equation}{Equation}{Equations}
\crefname{appendix}{Appendix}{Appendices}
\crefname{algorithm}{Algorithm}{Algorithms}
\crefname{model}{Model}{Models}
\crefname{figure}{Figure}{Figures}

\AfterEndEnvironment{definition}{\noindent\ignorespaces}
\AfterEndEnvironment{assumption}{\noindent\ignorespaces}
\AfterEndEnvironment{lemma}{\noindent\ignorespaces}
\AfterEndEnvironment{theorem}{\noindent\ignorespaces}
\AfterEndEnvironment{proposition}{\noindent\ignorespaces}
\AfterEndEnvironment{fact}{\noindent\ignorespaces}
\AfterEndEnvironment{question}{\noindent\ignorespaces}
\renewcommand{\eqref}[1]{(\ref{#1})}

\providecommand{\Smi}{}\renewcommand{\Smi}{S^{\setminus i}}

\providecommand{\dNN}{}\renewcommand{\dNN}{d_{\mathrm{NN}}}

\title{
Loss Curvature as a Sampling-Free Proxy for Memorization in Diffusion Models
}
 
\author{
  Shashank Sule$^{*,1}$ \quad
  Xinyue Yu$^{*,1}$ \quad
  Boonam Shin$^{2}$ \\ \\
  \bfseries Yuxuan Liu$^{1}$ \quad
  Abolfazl Hashemi$^{2}$ \quad
  Hayden Schaeffer$^{1}$
}

\newcommand{\authornote}[2]{%
  \begingroup\renewcommand{\thefootnote}{#1}\footnotetext{#2}\endgroup}

\begin{document}
\maketitle
\authornote{$\ast$}{Equal contribution.}
\authornote{1}{Department of Mathematics, University of California, Los Angeles, CA 90095, USA.}
\authornote{2}{Elmore Family School of Electrical and Computer Engineering, Purdue University, West Lafayette, IN 47907, USA.}

\begin{abstract}
  Understanding the mechanism behind why diffusion models memorize their training data remains an open question in generative modeling. In this paper, we examine this question from two perspectives: (a) the model-level perspective concerning the onset of memorization in a model during training, and (b) the sample-level perspective concerning the proclivity of an individual sample to be memorized by a given model. We show that the so-called input loss curvature, or the Frobenius norm of the Hessian of the diffusion model's training loss, is an effective metric for quantifying both perspectives on memorization. In particular, we empirically demonstrate that input curvatures averaged across training and validation samples can identify when a model is likely to start regurgitating its training data when tracked over training time. However, this metric is not sufficient for measuring an individual sample's likelihood of being memorized. To address this perspective, we introduce a counterfactual definition of memorization in diffusion models, extending the analogous notion from supervised learning proposed by [Feldman, 2019]. We prove that this counterfactual memorization is controlled by the input curvature of the leave-one-out model, thus establishing a direct theoretical link between a sample's local loss landscape and its generation likelihood. We validate our derived bounds through several empirical experiments and show that input-dependent curvature provides a proxy for memorization. 
\end{abstract}

\section{Introduction}
\label{sec:intro}

Diffusion and score-based models are state-of-the-art generative models across domains from vision to drug discovery
\cite{sohl2015deep,ho2020denoising,song2019generative,song2021scorebased,rombach2022high, nichol2021improved, abramson2024accurate, ramesh2021zero}.
Their appeal lies in generating novel yet realistic data, suggesting they estimate the data distribution from finitely many samples.
Yet diffusion models sometimes regurgitate training samples, a failure mode called \emph{memorization}, which poses privacy and copyright risks
\cite{carlini2023extracting,somepalli2023diffusion,somepalli2023understanding,gu2023memorization}.
Existing work studies memorization at two levels: 
\begin{enumerate}
    \item \emph{Model-level memorization} asks how closely the trained score matches the minimizer of the empirical denoising score matching loss
\cite{vincent2011connection, baptista2025memorization}, namely the score of the Gaussian KDE of the training set, whose sampling reproduces the training data exactly.
Tracking training time $\tau$, models first generalize (at $\tau_{gen}$) and later memorize (at $\tau_{mem}$)
\cite{biroli2023generative,biroli2024dynamical,yoon2023diffusion,achilli2024losing}, but these time-scales are known only asymptotically,
and measuring the memorized fraction $f_{mem}(\tau)$ requires costly sampling.
\item \emph{Sample-level memorization} asks which training points are copied. Previous work attributes sample-level memorization to the presence of duplicates 
\cite{carlini2023extracting,somepalli2023understanding,zhang2023counterfactual,carlini2021extracting}. More generally, sample-level memorization refers to the influence of an individual sample on a model. This connects to to the long-tail theory of \cite{feldman2020does}, which motivates the counterfactual memorization measure of \cite{feldman2020neural}
(see also \cite{zhang2017understanding}).
Extending this measure to diffusion models is nontrivial because the training objective (denoising loss) differs from the test objective (likelihood),
and generalization in diffusion models has its own geometric structure \cite{kadkhodaie2024generalization}.
\end{enumerate}

These perspectives motivate \emph{sampling-free} proxies of memorization which can ideally be measured during training. We propose using \emph{input loss curvature} from \cite{ravikumar2024curvature} given by
\begin{align}
  \text{Curv}_{\phi}(x) = \mathbb{E}_{\phi \sim \mathcal{A}(S)}\left[ \| \nabla^{2}_{x} \ell(\phi, x) \|^{2}_{F}\right], \label{eq: loss curvature}
\end{align}
as such a proxy. We use it for two reasons: (i) it is the leading order term that bounds Feldman's memorization score, is among the strongest membership-inference signals, and estimates differential privacy
\cite{ravikumar2024unveiling,ravikumar2024curvature,garg2024memorization,shokri2017membership,carlini2022membership,dwork2006calibrating};
(ii) the Hessian of the potential $-\log p(x)$ governs the local geometry of sampling, with higher curvature confining Langevin trajectories near a minimizer for longer
\cite{bovier2015metastability,berglund2013kramers}. However, during training it is prohibitively expensive to compute these negative log likelihoods--instead, we investigate whether the second order information on the loss can help predict the properties implied by the second order information of the negative log likelihood. 

\paragraph{Contributions.}\begin{enumerate}[leftmargin=*,itemsep=1pt,topsep=2pt,label=(\alph*)]\item \emph{Curvature over training image tracks memorization.} We demonstrate that on ImageNet $64\times 64$ and CelebA $32 \times 32$,       $\text{Curv}_{\phi}(x)$ averaged over the training set corresponds to the $f_{mem}$ score and rises as the model overfits. Moreover,       at the onset of memorization, samples that are copied (i.e appear to be in the training set) have higher curvature than samples that      are not copied.  \item \emph{Held-out curvature separates from training curvature during memorization.} While curvature over training images rises, the curvature       over the validation set starts separating from the training curvature at the onset of memorization. Surprisingly, validation curvature       hits a minimum during training and starts rising as memorization proceeds, while validation loss continues to decrease. The local       minimum of validation curvature provides an early-stopping measure for diffusion models that indicates the prevalence of model-level memorization.  \item \emph{A counterfactual notion of memorization.} We illustrate that while curvature indicates model-level memorization,       it cannot signal the copy frequency of individual samples. To quantify such individual sample influence, we introduce a new notion of memorization,       termed counterfactual memorization given by the log-odds of sampling $x_i$ with and without its inclusion in the training data.  \item \emph{Counterfactual memorization is bounded by curvature.} By lifting the differences in log-likelihoods to model losses via a decomposition given in \cite{huang2021variational}, we quantify the influence of training dynamics on counterfactual memorization. In particular, we demonstrate that when the model starts overfitting, the log-likelihood difference is captured at the leading order by the curvature of the leave-one-out model and the nearest neighbour distance.  \end{enumerate}

\section{Background}\label{sec: background}
\subsection{Diffusion Models} \label{subsec: diffusion models}

A diffusion model is a generative algorithm for sampling from an unknown density $p$ by transforming noise into clean data. To learn such a model we first establish the following forward process:
\begin{align} dY_t = f(t,Y_t),dt + g(t),dW_t, \qquad Y_0 = x \sim p, \label{eq: forward-diffusion} 
\end{align} 
When $f(t,y) = f(t)y$, the transition kernel is Gaussian: $Y_t\mid Y_0 = x \sim \mathcal{N}(m(t)x,\sigma(t)^2 I)$. The schedules $(m,\sigma)$ and the coefficients $(f,g)$ thus determine each other: 
\begin{align} f(t) = \frac{m'(t)}{m(t)}, \qquad g(t)^2 = \frac{d}{dt}\sigma(t)^2 - 2\frac{m'(t)}{m(t)},\sigma(t)^2. \label{eq: fg-from-msigma} \end{align}
Anderson's theorem \cite{anderson1982reverse} gives the backward SDE, whose law at time $t$ is that of $Y_{T-t}$: 
\begin{align}
d\overline{X}_t &= \big(\bar g(t)^2\nabla\log p_{T-t}(\overline{X}_t) - \bar f(t,\overline{X}_t)\big),dt \\
&+ \bar g(t),dB_t \label{eq: reverse-sde} 
\end{align} 
where $\bar f(t,x) = f(T-t,x)$ and $\bar g(t) = g(T-t)$. Thus, given $f$ and $g$, sampling from $p$ reduces to knowing the score $\nabla\log p_t$ and running the backward SDE from $t=0$ to $t=T$. Via the Tweedie-Miyasawa formula 
\begin{align}
    \mathbb{E}[X \mid Y_t = y] = \frac{y + \sigma^2(t)\nabla \log p(t,y)}{m(t)}, \label{eq: tweedie-miyasawa}
\end{align}
we can instead parameterize the denoiser $\mathbb{E}[X \mid Y_t = y]$ and recover the score by rearranging \eqref{eq: tweedie-miyasawa}. We therefore train a model $D_{\phi}$ by MMSE denoising:  
\begin{align}
    \ell(\phi,x) := \mathbb{E}_{t \in [0,T], y_t \sim Y_t} \left[\lambda(t)\|x - D_{\phi}(y_t, t)]\|_{2}^{2} \mid Y_0 = x\right]. \label{eq: denoising loss}
\end{align}
Here $\lambda(t)$ is a choice of weighting function. Substituting the learned score $s_{\phi} = \sigma(t)^{-2}(m(t)D_{\phi} - y)$ gives the \emph{plug-in SDE}: 
\begin{align}
    dX_t = (\overline{g}(t)^2 s(t, X_t) - \overline{f(t, X_t)})\,dt + \overline{g}(t)\,dB_t. \label{eq: plug-in sde}
\end{align}

The plug-in SDE has density $p_{\phi}(y,t)$. At generation time we integrate \eqref{eq: plug-in sde} forward, obtaining the terminal distribution $p_{\phi}(\cdot, T)$ as an approximation to the true distribution $p$. 

\subsection{DDPMs} In this paper, we use Denoising Diffusion Probabilistic Models (DDPMs) \cite{ho2020denoising} corresponding to taking $f(t) = -\tfrac{1}{2}\beta(t)$ and $g(t) = \sqrt{\beta(t)}$ for some non-negative rate function $\beta$. Writing $\bar\alpha(t) := \exp\left(-\int_0^t \beta(u)\,du\right)$, one finds $\Phi(t) = \sqrt{\bar\alpha(t)}$ and, using $\frac{d}{ds}\bar\alpha(s)^{-1} = \beta(s)\bar\alpha(s)^{-1}$,
\begin{align}
    m(t) = \sqrt{\bar\alpha(t)}, \qquad \sigma(t)^2 = 1 - \bar\alpha(t). \label{eq: vp closed form}
\end{align}
To sample from DDPMs, we use an accelerated non-Markovian sampler given in \cite{song2021denoising}. More generally, note that the sampling process via \eqref{eq: plug-in sde} is implemented by discretizing the SDE via Euler-Maruyama, leading to a stochastic sampling trajectory. A deterministic sampling, using the probability flow ODE or flow-matching may also be used \cite{song2021scorebased, lipman2023flow}. In Appendix \ref{app: experimental details}, we provide experimental details and additional details on flow matching. 

\subsection{Log-Likelihoods via the Denoising Loss}
Diffusion models are trained by using the loss \eqref{eq: denoising loss}. However, at test time, their performance is measured by the fidelity of $p(\cdot, T)$ to the true distribution $p$. Therefore, their training and test tasks are rather different. However, an evidence lower bound (ELBO) shows that the denoising loss bounds the negative log-likelihood of the distribution $p_{\phi}(\cdot, T)$. In fact, \cite{huang2021variational} described the gap between the ELBO and $-\log p_{\phi}(\cdot, T)$ exactly under the likelihood weighting $\lambda(t) = g^2(t)$: 

\begin{proposition}[Theorem 4 in \cite{huang2021variational}] Let $\ell(\phi,x)$ be the denoising loss \eqref{eq: denoising loss}, $p_{\phi}(x, T)$ the distribution of the plug-in SDE
\eqref{eq: plug-in sde} at time $T$ and $s_{\phi}$ the learned score function in \eqref{eq: plug-in sde}. Then 
\small
\begin{align}
    &-\log p_{\phi}(x,T) - \ell(\phi,x) = C \notag \\
    &+ \underbrace{\int_{0}^{T}\mathbb{E}\left[g^{2}(t)\|s_{\phi}(Y_t, t) - \nabla \log_{\phi}(Y_t, T-t)\|_{2}^{2} \mid Y_0 = x\right]\,dt}_{\text{Gap}_{\phi}(x)}, \label{eq: elbo gap}
\end{align}
\normalsize
where $C$ is a constant independent of $x$ or $\phi$ but dependent on the data dimension $d$, and the noise schedules $f,g$. 
\end{proposition}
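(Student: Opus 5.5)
The plan is to compare two path measures on trajectories: $\mathbb{P}$, the law of the reversed forward process started at $Y_0=x$, and $\mathbb{Q}_\phi$, the law of the plug-in SDE \eqref{eq: plug-in sde} started from its prior. Following \cite{huang2021variational}, I would write $-\log p_\phi(x,T)$ through a Feynman--Kac representation of the plug-in density. I would then compare this representation with the ELBO obtained from Girsanov's theorem. The difference between the two should be exactly the $\text{Gap}_\phi(x)$ term.

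\textbf{Step 1.} Write the Fokker--Planck equation for the plug-in density $q_t := p_\phi(\cdot,t)$. Apply the Feynman--Kac formula along the forward diffusion conditioned on $Y_0=x$. This gives
\[
\log p_\phi(x,T)=\mathbb{E}\Big[\log q_0(Y_T) - \int_0^T \big(\tfrac12 g^2\|s_\phi\|^2 + \nabla\!\cdot(g^2 s_\phi - f) - g^2 s_\phi^\top \nabla\log q_{T-t}\big)(Y_t,t)\,dt \,\Big|\, Y_0=x\Big]
\]
up to sign conventions. This is the exact identity of Huang et al.

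\textbf{Step 2.} Obtain the ELBO by completing the square in $s_\phi$ against $\nabla\log q_{T-t}$. The term
\[
\tfrac12 g^2\|s_\phi-\nabla\log q_{T-t}\|^2
\]
separates out, and the remainder is the implicit score-matching ELBO $\mathcal{E}(\phi,x)$. The divergence terms are converted to denoising form by the standard integration-by-parts identity. This identity trades $\mathbb{E}[\nabla\cdot s_\phi]$ for $-\mathbb{E}[s_\phi^\top\nabla\log p(Y_t\mid Y_0)]$ when the expectation is conditional on $Y_0=x$.

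\textbf{Step 3.} Use the closed-form Gaussian kernel of \eqref{eq: forward-diffusion} and the likelihood weighting $\lambda=g^2$. Under this weighting the denoising-score-matching integrand equals $g^2\|s_\phi - \nabla\log p(Y_t\mid x)\|^2$ plus terms that depend only on the schedule. Rewriting $s_\phi$ through $D_\phi$ then gives $\ell(\phi,x)$. Every term that is free of $\phi$ and $x$, namely $\mathbb{E}\|\nabla\log p(Y_t\mid x)\|^2$, the divergence of $f$ (which equals $d f(t)$ since $f$ is linear), and the prior term $\mathbb{E}\log q_0(Y_T)$, which is approximately the Gaussian entropy, is collected into $C(d,f,g)$.

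\textbf{Main obstacle.} The hardest step is the bookkeeping in Steps 2--3. After the square is completed, the cross term must be shown to vanish in conditional expectation, and the prior term must be exactly independent of $x$. The latter holds only if $\mathcal{N}(m(T)x,\sigma(T)^2)$ is replaced by its $x$-free limit or the prior is matched. I would handle this by proving the martingale/Itô identity for $\log q_{T-t}(Y_t)$ directly. I would also state the regularity assumptions needed (smooth $s_\phi$, finite second moments, and $q_t>0$) for Feynman--Kac and the integration by parts to hold. Beyond that, I would simply cite Theorem 4 of \cite{huang2021variational}, noting that the factor $\tfrac12$ and the normalization of $\lambda$ are absorbed into the paper's convention for $\ell$.
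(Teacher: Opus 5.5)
Your route (It\^o/Feynman--Kac for $\log q_{T-t}(Y_t)$ along the forward process, completing the square, Gaussian integration by parts) is the argument behind Theorem~4 of \cite{huang2021variational}; the paper itself gives no proof beyond that citation. However, the identity you display in Step~1 is wrong, and the error is more than a sign convention. Set $\mu := g^2 s_\phi - f$. The Fokker--Planck equation gives $\partial_t \log q_{T-t} = \nabla\cdot\mu + \mu^\top\nabla\log q_{T-t} - \tfrac12 g^2\big(\Delta\log q_{T-t} + \|\nabla\log q_{T-t}\|^2\big)$. It\^o's formula along $dY_t = f\,dt + g\,dW_t$, with the martingale term having zero mean, then yields
\[
\log p_\phi(x,T) = \mathbb{E}\Big[\log q_0(Y_T) - \int_0^T \big(\nabla\cdot(g^2 s_\phi - f) + g^2 s_\phi^\top\nabla\log q_{T-t} - \tfrac12 g^2\|\nabla\log q_{T-t}\|^2\big)\,dt \,\Big|\, Y_0 = x\Big].
\]
Completing the square here gives $\log p_\phi(x,T) = \mathcal{E}(\phi,x) + \tfrac12\,\mathbb{E}\big[\int_0^T g^2\|s_\phi - \nabla\log q_{T-t}\|^2\,dt \,\big|\, Y_0 = x\big]$. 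In your display the quadratic term is $\tfrac12 g^2\|s_\phi\|^2$ and the cross term has the opposite sign. Completing the square there puts $-\tfrac12\int g^2\|s_\phi-\nabla\log q_{T-t}\|^2$ into $\log p_\phi$. It also leaves $+\tfrac12\int g^2\|\nabla\log q_{T-t}\|^2$ where the ELBO has $-\tfrac12\int g^2\|s_\phi\|^2$. So Step~2 does not follow from Step~1.

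More importantly, the correct computation does not deliver the proposition as stated, and your plan does not register this. The gap is nonnegative and adds to $\log p_\phi$, so it enters with a minus sign once you pass to $-\log p_\phi-\ell$. For the correctly weighted $\ell$ one gets $-\log p_\phi(x,T) - \ell(\phi,x) = C - \tfrac12\,\text{Gap}_\phi(x)$. This is consistent with the ELBO $-\log p_\phi \le \ell + C$ that the paper itself invokes, whereas $C+\text{Gap}_\phi$ contradicts it.

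In Step~3, $s_\phi - \nabla\log p(Y_t\mid x) = \frac{m(t)}{\sigma(t)^2}\big(D_\phi(Y_t,t)-x\big)$. So in denoiser form the weight on $\|x-D_\phi\|^2$ must be $g^2m^2/(2\sigma^4)$ against $dt$, a genuinely time-dependent reweighting. With $\lambda=g^2$ as stated the identity fails, and this cannot be absorbed into the convention for $\ell$.

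Finally, the schedule term $\tfrac{d}{2}\int_0^T g^2/\sigma^2\,dt$ that you place in $C$ is infinite, because $g^2/\sigma^2 = \partial_t\log(\sigma^2/m^2)$ and $\sigma(0)=0$. Splitting into $\ell$ plus a constant therefore needs a truncation to $[\epsilon,T]$. You correctly flagged the $x$-dependent contribution $\tfrac12 m(T)^2\|x\|^2$ from $-\mathbb{E}[\log q_0(Y_T)\mid Y_0=x]$. Taken together, what your argument can establish is a corrected version of the proposition, and the proposal should state that version explicitly rather than defer these points to conventions.
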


The RHS in \eqref{eq: elbo gap} is purely a \emph{self-consistency} gap, i.e it measures how close the score of the reversed plug-in SDE is to the original forward SDE. 

\subsection{Memorization} \label{subsec: memorization}

Here we summarize the framework of \cite{feldman2020does,feldman2020neural}. Let $\mathcal{A}$ be a training algorithm, possibly randomized, and write $h^\phi_S$ for the model with parameters $\phi\sim\mathcal{A}(S)$. Let $\ell(h^\phi_S,z)$ be its loss on an example $z$, and let $S^{\setminus i} = S\setminus{z_i}$. The memorization of $z_i$ is the change in expected loss at $z_i$ caused by its inclusion in the training set, 
\begin{align} 
M(z_i,S) &= \Big|,\mathbb{E}_{\phi\sim\mathcal{A}(S)}[\ell(h^\phi_S,z_i)] \nonumber\ - \mathbb{E}_{\phi'\sim\mathcal{A}(S^{\setminus i})}[\ell(h^{\phi'}_{S^{\setminus i}},z_i)]\Big|. 
\end{align} 
A large value means the model only fits $z_i$ because it was trained on $z_i$.
\section{Empirical Results}
\label{sec: empirical results}
We begin with an empirical investigation of curvature as a proxy for memorization in diffusion models. To do so, we train DDPMs to sample two datasets: (1) $64 \times 64$ RGB ImageNet-1K \cite{deng2009imagenet, russakovsky2015imagenet}, and (2) $32 \times 32$ greyscale CelebA images, also used in \cite{biroli2023generative}). To further isolate the memorization phenomenon, we train six distinct DDPMs on datasets of size $n= 2^{k}$ for $7 \leq k \leq 12$ for ImageNet-1k and two distinct DDPMs on datasets of size $n=128, 10240$ for CelebA, noting that for CelebA, $n=128$ corresponds to a memorizing model while $n=10240$ corresponds to a generalizing model. Across the training time $\tau$, we track the mean curvature over the training samples and validation samples: 
\small
\begin{align}
    \text{Avg. train curvature} &= \mathbb{E}_{x \sim \text{train}}[\|\nabla^{2}_{x}\ell(\phi(\tau), x)\|^{2}_{F}] \label{eq: avg train curvature}, \\
        \text{Avg. val curvature} &= \mathbb{E}_{x \sim \text{val.}}[\|\nabla^{2}_{x}\ell(\phi(\tau), x)\|^{2}_{F}]. \label{eq: avg val curvature}
\end{align}
\normalsize
We also generate $10,000$ samples to compute the $f_{mem}$ score. Following \cite{gu2023memorization, yoon2023diffusion, bonnaire2025diffusion}, a generated sample $x$ is considered memorized if 
\begin{equation}
\left[\frac{\|x - a^{\mu_1}\|_2}{\|x -a^{\mu_2}\|_2}<k\right],
\end{equation}
where $a^{\mu_1}$ and $a^{\mu_2}$ are the nearest and second nearest neighbors in the training set to $x$. We say that a training sample $x_i$ is \emph{copied} if it appears in the generated set. Its \emph{copy count} $F(i)$ refers to the frequency of occurrences of $x_i$ in the generations. In Appendix \ref{app: generations} we visualize the samples generated from our various models. Here we assess the input loss curvature of our models during training. 

\textbf{Evolution of input curvature during training.} 
In Figure~\ref{fig: celebA}, we report our experiments on CelebA. We find that: 

\begin{enumerate} 
\item Mean training curvature rises over training together with $f_{\mathrm{mem}}(\tau)$ (panel~(C)). The rise alone does not mark the onset: training curvature is already increasing while $f_{\mathrm{mem}}$ is still flat. 
\item The onset shows up in the validation curvature. It stays with the training curvature and then diverges, in both panels~(A) and~(B), at the step where $f_{\mathrm{mem}}(\tau)$ starts to accelerate. The same split appears for the generalizing model (panel~(D)), where memorization is delayed until $\tau \approx 10^{5}$. This motivates defining the relative change in avg. train and validation curvatures as a stopping criterion: 
{\footnotesize
\begin{align}
        &\Delta \text{Curv}(\tau) \label{eq: stopping criteria} \\
        &:= \frac{|\text{Avg. train curvature} - \text{Avg. val. curvature}|}{|\text{Avg. train curvature}|} \notag
\end{align}}

we may consider stopping training when $\Delta \text{Curv}(\tau) \geq \varepsilon$, where $\varepsilon$ modulates the threshold required to limit memorization. Notably this estimator \emph{does not need} to generate any samples during training, saving significant computational cost. 
\item Once memorization is heavy, every training image is reproduced but at different frequencies. However, the curvature of an individual image does not predict its copy count (Panel~(F)). \end{enumerate}

In Figure~\ref{fig: imagenet}, we repeat these experiments for ImageNet, observing patterns consistent with CelebA. In Panel (F), we confirm that $f_{mem}(\tau)$ rises and reaches 100\% by $\tau \approx 10^6$ for every model except $n = 4096$. The validation loss, a zeroth-order quantity, falls throughout training for every $n$ (Panel (E)), so the held-out loss alone does not detect memorization. However, in panel (A), we find that input curvature for training samples rises with $\tau$, but the average validation curvature \emph{reaches a local minimum} and then rises once the model begins to memorize heavily. In Panel (D), we show that $\Delta \text{Curv}(\tau)$ reaches a local maximum at around $\tau = 10^5$, where most models are memorizing. 

Appendices \ref{app: additional} and \ref{app:imagenet-ddpm} provide additional numerical experiments.

\begin{figure*}[t]
    \centering
    \includegraphics[width=\linewidth]{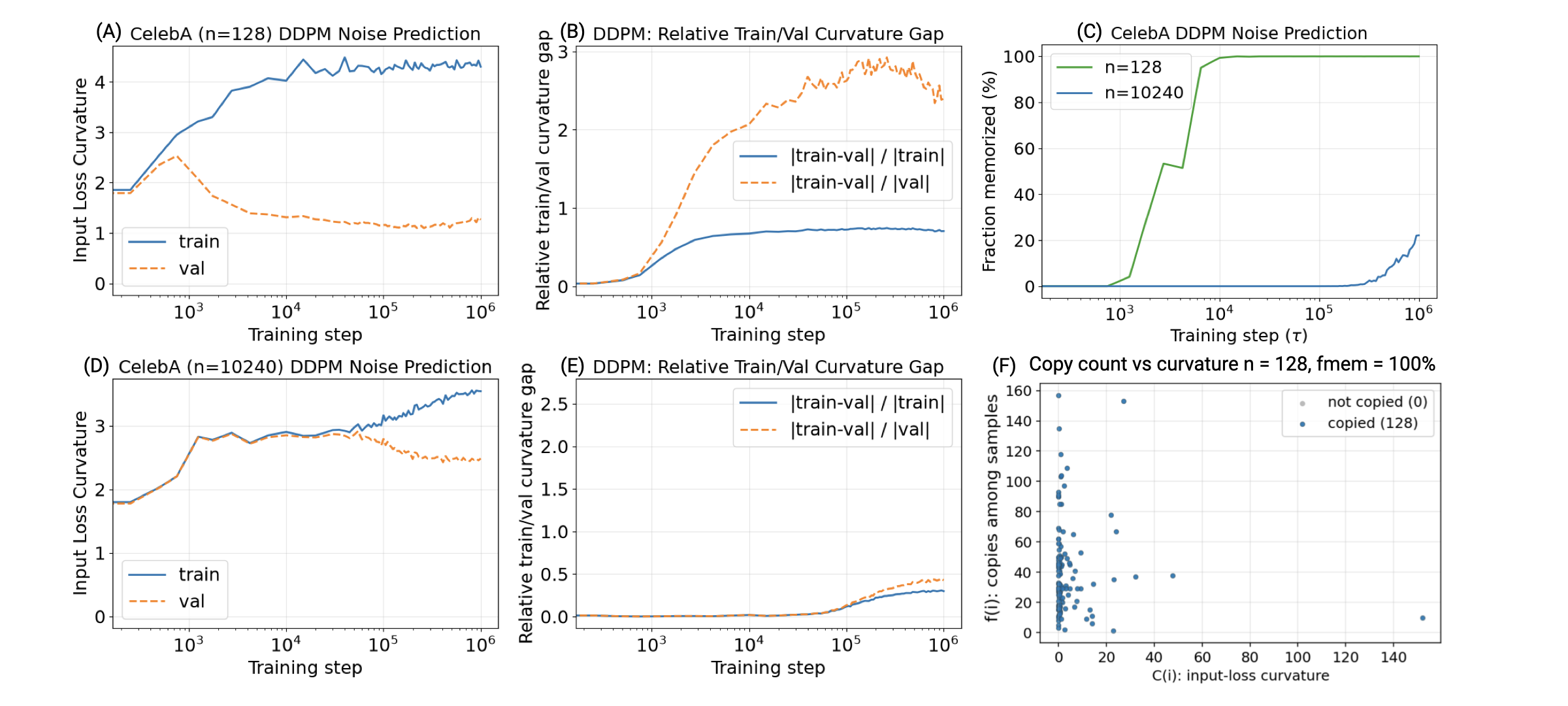}
    \caption{(A) Train vs. validation input curvature for CelebA, n=128, (B) Val. curvature starts to diverge from train curvature when $f_{mem}(\tau)$ starts accelerating, (C) $f_{mem}(\tau)$ values for $n=128, 10240$ on CelebA, (D) and (E) Same experiment for CelebA, n=10240. Note that this model undergoes generalization, and memorization sets in much later at $\tau \approx 10^5$, (F) Curvature of an individual sample is not a good predictor of its copy count.}
    \label{fig: celebA}
\end{figure*}

\begin{figure*}[t]
    \centering
    \includegraphics[width=\linewidth]{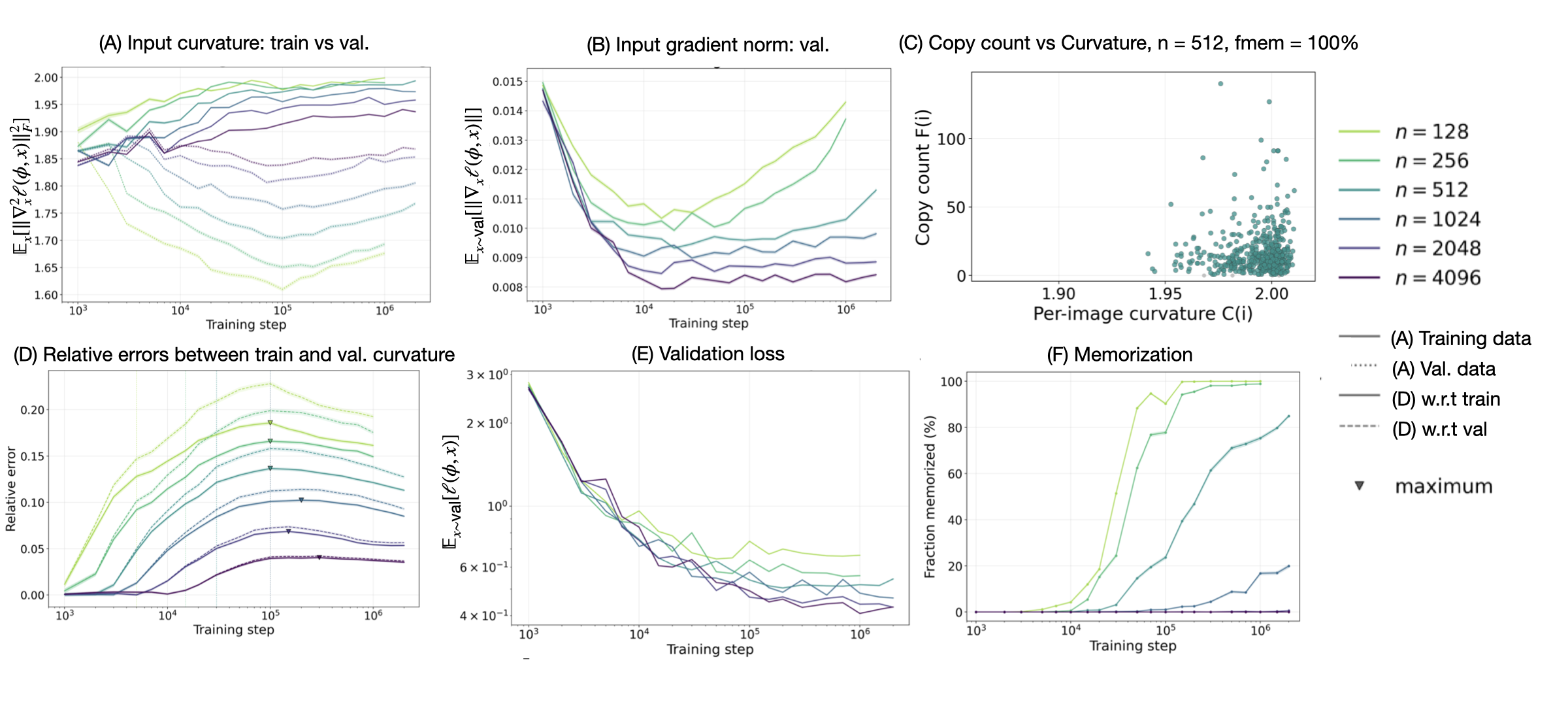}
    \caption{(A) Input \eqref{eq: avg train curvature} curvatures consistently rise but validation \eqref{eq: avg val curvature} curvatures reach a local minimum at approximately peak memorization for DDPMs on different data sizes, (B) Sizes of gradients of validation loss also display local minima, (C) Input curvature on training data tracks the largest frequency of memorized samples, (D) Relative error between train and validation curvatures exhibits a local maximum, leading to an early stopping criterion, (E) Memorization cannot be detected just based on validation loss, (F) Memorization rises as models train further.}
    \label{fig: imagenet}
\end{figure*}

% In Figure \ref{fig: memorized samples have higher curvatuer than non-memorized samples} we report training progress, noting that memorization commences ($f_{mem} > 1 \%$) at around $\tau \approx 10^{3}$ for $n=128$ and arises progressively later for larger $n$. Notably, as $f_{mem}$ rises, so does the average training curvature \eqref{eq: avg train curvature}. However, while 
% \begin{figure}
%     \centering
%     \includegraphics[width=\linewidth]{fig1.png}
%     \caption{The curvature \eqref{eq: avg train curvature} of memorized training points (orange) vs. non-memorized training points (blue).}
%     \label{fig: memorized samples have higher curvatuer than non-memorized samples}
% \end{figure}

\section{Sample-Level Memorization}
\label{sec: sample-level mem}
Our results in Section \ref{sec: empirical results} show that while curvature, aggregated over either training or held-out sets, is a good indicator of model-level memorization, it is not a good signal for whether a given sample will be memorized more than another one. To address this gap, we introduce a counterfactual memorization, analogous to what was defined in \cite{zhang2023counterfactual}. Before we begin, we establish the relevant notation. 

\subsection{Notation}
The notation will be borrowed from \cite{ravikumar2024unveiling}. Let $\mathcal{X}$ be the space of inputs, $\mathcal{Y}$ the space of outputs, and $h$ a model $h: \mathcal{X} \to \mathcal{Y}$. The joint distribution in the input-output pair $z = (x,y)$ will be denoted $p_Z$ and its $X$ and $Y$ marginals will be the input and output distributions. The model will be drawn from some parametric function class $\mathcal{C}$ parameterized by $\Phi \subseteq \mathbb{R}^{P}$ where $P$ represents the total number of parameters. To mark its dependence on parameters, the model will therefore be denoted as $h^{\phi}$. In practice, we draw a finite number of samples $S = \{z_i\}_{i=1}^{n} = \{(x_i,y_i)\}_{i=1}^{n}$ from $\mathcal{S}^{n} = (\mathcal{X} \times \mathcal{Y})^n$ and \emph{train} to fit a loss $c: \mathcal{Y} \times \mathcal{Y}'$ on the data. We assume the $n$-data points are sampled i.i.d; therefore $p_Z$ induces a distribution (via independence) on $\mathcal{S}^{n}$, termed $p^{n}_{Z}$. Given $S \sim p^{n}_{Z}$, the training process is quantified by some algorithm $\mathcal{A}: \mathcal{S}^{n} \to \mathcal{M}$, where $\mathcal{M} \subseteq \mathcal{P}_2(\Phi)$ and $\mathcal{P}_{2}(\Phi)$ is the class of square integrable random m on $\Phi$. A model trained with data $S$ with parameters $\phi$ is thus denoted $h^{\phi}_{S}$. We may define the loss over the model $h^{\phi}_{S}$ and $z$ as 
\begin{align}
    \ell(h^{\phi}_{S}, z) = c(h^{\phi}_{S}(x), y). 
\end{align}
Wherever necessary we will abuse notation and identify the function $\phi, z \to \ell(h^{\phi}_{S}, z)$ with $\ell$. Given a dataset $S$ we will also define the losses on $S$ as follows: 
\begin{align}
    \ell(\phi, S) := \frac{1}{|S|} \sum_{i=1}^{|S|}\ell(h^{\phi}_{S}, z_i). 
\end{align}
We also set $\ell^{*}(S) := \inf_{\phi \in \Phi} \ell(\phi, S)$.

\subsection{The Counterfactual Log-Odds of Sampling}
We may simplify the notation above in the specific case of diffusion models. In particular, the training objective \eqref{eq: denoising loss} is \emph{unsupervised}, and thus $z_i = x_i$. Moreover, we assume $h^{\phi}$ is produced via score matching/denoising and that it is then used to drive the SDE \eqref{eq: plug-in sde}. We now define the counterfactual memorization: 

\begin{definition}
\label{def: sampling time memorization}
    Let $z_i \in S$ be a training example and $\phi \sim \mathcal{A}(S), \phi' \sim \mathcal{A}(S^{\setminus i})$ be parameter distributions after optimization on the datasets $S, S^{\setminus i}$. Let $p_{\phi}(\cdot, T), p_{\phi'}(\cdot, T)$ be the time marginals of the distributions produced by driving the plug-in SDE \eqref{eq: plug-in sde}. Then the \emph{sampling time memorization} of $z_i$ is given by: 
    \small
    \begin{align}
        m(z_i, S) := \mathbb{E}_{\phi, \phi'}[\log p_{\phi}(z_i, T) - \log p_{\phi'}(z_i,T)]. \label{eq: sampling time mem}
    \end{align}
    \normalsize
\end{definition}
Note that $m(z_i, S)$ represents the log-odds of sampling $z_i$ with and without its presence in the training data. Large $m$ implies that the model would be much less likely to sample $z_i$ if it wasn't included in the dataset. We also introduce a difference in the counterfactual \emph{losses}, borrowed from \cite{ravikumar2024curvature, ravikumar2024unveiling, feldman2020does, feldman2020neural}: 
\begin{align}
    M(z_i, S) &= \mathbb{E}_{\phi \sim \mathcal{A}(S)}[\ell(h^{\phi}_{S}, z_i)] \\
    &- \mathbb{E}_{\phi' \sim A(S^{\setminus i})}[\ell(h^{\phi'}_{S^{\setminus i}}, z_i)]. \label{eq: train time memorization}
\end{align}
We now use the decomposition \eqref{eq: elbo gap} to write $m$ in terms of $M$: 
\begin{proposition}
    Let $\phi, \phi'$ be parameter distributions as defined in \ref{def: sampling time memorization} and $m$, $M$ be defined as in \eqref{eq: sampling time mem} and \eqref{eq: train time memorization}. Fix $x_i \in S$. Then: 
    \begin{align}
        &-\mathbb{E}_{\phi'}[\text{Gap}_{\phi'}(x_i)] - M(x_i, S) \leq m(x_i, S) \leq \\
        &M(x_i,S) + \mathbb{E}_{\phi}[\text{Gap}_{\phi}(x_i)]. \label{eq: m to M}
    \end{align}
\end{proposition}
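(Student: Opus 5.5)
The plan is to take expectations of the pointwise decomposition \eqref{eq: elbo gap} for the two models $\phi\sim\mathcal{A}(S)$ and $\phi'\sim\mathcal{A}(S^{\setminus i})$ and subtract them. This gives an \emph{exact} identity for $m(x_i,S)$ in terms of a loss difference and two gap terms. The two inequalities then follow by discarding one gap term at a time, using only that each gap is nonnegative.

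\textbf{Step 1: rearranging the decomposition.} For fixed $\phi$, the decomposition should be read with the sign dictated by the ELBO. The denoising loss (up to the constant $C$) upper bounds the negative log-likelihood, so the self-consistency term enters with a minus sign:
\begin{align*}
\log p_{\phi}(x_i,T) = -\ell(\phi,x_i) - C + \text{Gap}_{\phi}(x_i), \qquad \text{Gap}_{\phi}(x_i)\ge 0 .
\end{align*}
Nonnegativity is clear because $\text{Gap}_\phi$ is an integral of $g^2(t)$ times an expected squared norm. The same identity holds for $\phi'$ with the same constant $C$, since $C$ depends only on $d,f,g$.

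\textbf{Step 2: the exact identity.} Take $\mathbb{E}_\phi$ and $\mathbb{E}_{\phi'}$ and subtract. The constant $C$ cancels, leaving
\begin{align*}
m(x_i,S) = \Big(\mathbb{E}_{\phi'}[\ell(\phi',x_i)] - \mathbb{E}_{\phi}[\ell(\phi,x_i)]\Big) + \mathbb{E}_{\phi}[\text{Gap}_{\phi}(x_i)] - \mathbb{E}_{\phi'}[\text{Gap}_{\phi'}(x_i)] .
\end{align*}
Integrability is needed for this step. I would justify it from $\mathcal{M}\subseteq\mathcal{P}_2(\Phi)$ together with finiteness of $m$, or simply assume the expectations are finite.

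The bracket is the counterfactual loss difference at $x_i$. To match the statement, it must be identified with $M(x_i,S)$, oriented as a memorization score: held-out loss minus trained loss. This is the quantity that is nonnegative in the typical case where training on $x_i$ lowers its loss. It also coincides with the absolute-value form of $M$ from Section~\ref{subsec: memorization} whenever the trained loss is smaller. I expect this bookkeeping to be the main obstacle. The display \eqref{eq: train time memorization} writes the difference in the opposite order, so I would state explicitly at the start of the proof that $M$ is taken with this orientation. Otherwise the identity produces $-M$ and the statement's sign pattern fails.

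\textbf{Step 3: the two bounds.} With the identity $m = M + \mathbb{E}_\phi[\text{Gap}_\phi] - \mathbb{E}_{\phi'}[\text{Gap}_{\phi'}]$ in hand, each side of the claimed inequality follows from one nonnegativity fact.
\begin{itemize}
\item Dropping the nonpositive term $-\mathbb{E}_{\phi'}[\text{Gap}_{\phi'}(x_i)]$ gives the upper bound $m(x_i,S)\le M(x_i,S)+\mathbb{E}_\phi[\text{Gap}_\phi(x_i)]$.
\item Dropping the nonnegative term $\mathbb{E}_\phi[\text{Gap}_\phi(x_i)]$ gives $m\ge M-\mathbb{E}_{\phi'}[\text{Gap}_{\phi'}]$.
\end{itemize}
The lower bound in the statement, $-\mathbb{E}_{\phi'}[\text{Gap}_{\phi'}]-M$, follows from the second bullet provided $M\ge 0$. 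Under the absolute-value convention, $M\ge -M$ holds automatically, so the stated (weaker) lower bound follows.
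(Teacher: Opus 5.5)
Your argument is correct and is the one the paper intends. The paper gives no separate proof, only the remark that the proposition follows from the decomposition \eqref{eq: elbo gap}: take expectations of the ELBO identity for $\phi$ and $\phi'$, cancel $C$, and discard one nonnegative gap on each side. Your reading $\log p_{\phi}(x_i,T) = -\ell(\phi,x_i) - C + \text{Gap}_{\phi}(x_i)$ is in fact essential. With the sign as printed in \eqref{eq: elbo gap}, the same subtraction produces the bounds with $\mathbb{E}_{\phi}[\text{Gap}_{\phi}(x_i)]$ and $\mathbb{E}_{\phi'}[\text{Gap}_{\phi'}(x_i)]$ interchanged.

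The orientation detour is unnecessary. Write $D := \mathbb{E}_{\phi'}[\ell(\phi',x_i)] - \mathbb{E}_{\phi}[\ell(\phi,x_i)]$ and read $M(x_i,S) = |D|$ as in Section~\ref{subsec: memorization}. Then $-|D| \le D \le |D|$ gives both bounds at once with no sign condition, so your closing step should read $D \ge -|D|$ rather than $M \ge -M$.
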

We now bound $M$ purely in terms of the properties of the leave-one-out model $h_{\phi'}$. We establish a number of mathematical preliminaries. 

\subsection{Mathematical Preliminaries}

For a pair $z_i\in S$, $z_j\in\Smi$ we use four scalar quantities throughout:
\begin{align}
d_{ij} &:= \|z_i - z_j\|_2, \gamma_j := \|\nabla_{x}\ell(\phi,z_j)\|_{2},\\
\kappa_i &:= \mathbb{E}_{\phi' \sim \mathcal{A}(S^{\setminus i})}[\|\nabla^{2}_{z}\ell(\phi',z_i)\|_{F}^{2}] \label{eq: kappa}\\
C_{ij} &:= \big|\mathbb{E}_{\phi \sim \mathcal{A}(S)}\ell(\phi, z_i) - \mathbb{E}_{\phi' \sim \mathcal{A}(S^{\setminus i})}\ell(\phi', z_j)\big|. \label{eq: gamma-C}
\end{align}
\begin{assumption}($\ell$ has $\rho$-Lipschitz hessians) \label{ass: hessian is lipschitz}
    The Hessian of the loss $\ell$ is Lipschitz in the $z$-variable, i.e for every $\phi \in \Phi$, there exists $\rho$ (possibly dependent on $\phi$) such that: 
    \begin{align}
        \| \nabla^{2}_{z}\ell(\phi, z') - \nabla^{2}_{z}\ell(\phi, z)\|_{2} \leq \rho \|z' - z\|_{2}, \label{eq: hessian is lipschitz} 
    \end{align}
    for every $z,z' \in \mathcal{S}$. 
\end{assumption}

\begin{assumption}(Loss behaves like MSE) \label{ass: bounded gradients assumption}
    We assume that the cost function $c(x,y)$ satisfies 
    \begin{align}
        \|\nabla_yc(x,y)\|^{2}_{2} \leq 2\beta c(x,y). \label{eq: bounded gradients assumption}
    \end{align}
    Note that any $c$ that is $\beta$-Lipschitz in the $y$-variable satisfies \eqref{eq: bounded gradients assumption} \cite{srebro2010optimistic}. For instance, the MSE $c(x,y) = \|x-y\|_{2}^{2}$, or $c(x,y) = \|h(x) - y\|_{2}^{2}$ where $h(x)$ is itself Lipschitz satisfy \eqref{eq: bounded gradients assumption}. 
\end{assumption}

\begin{assumption}(The underlying function is neural network-like) \label{ass: function class is NN}
We assume that all $h^{\phi} = g(\phi_1x, \phi_2)$ where $\phi = (\phi_1, \phi_2)$. Most neural network architectures satisfy this assumption. 
\end{assumption}

Note that the term in \eqref{eq: perturbation bound} is \emph{not quite} $M(z_i,S)$ because the second term in the absolute value involves $z_j$ (which is part of $S^{\setminus i}$) and not $z_i$. 

\begin{assumption}\label{ass: first order optimizer}
    We assume that $\mathcal{A}$ is a first order optimizer in the parameters, i.e there exist $\varepsilon_{\mathrm{grad}}$ and $\varepsilon_{\mathrm{loss}}$ such that
    \begin{align}
        \mathbb{E}_{\phi \sim \mathcal{A}(S)}[\| \nabla_{\phi}\ell(\phi,z_j)\|_{2}] \leq \varepsilon_{\mathrm{grad}},\mathbb{E}_{\phi \sim \mathcal{A}(S)}[ l(\phi,z_j)] \leq \varepsilon_{\mathrm{loss}}, \label{eq: first order optimizer}
    \end{align}
    for every $z_j \in S$. Note that condition \eqref{eq: first order optimizer} is typically met by most deterministic and stochastic optimizers like SGD, Adam, or even full batch gradient descent.  
\end{assumption}

\begin{assumption}\label{ass: bounded weights}
We assume that under $\mathcal{A}(S)$, the weights $\phi$ are almost surely bounded by $B$. 
\end{assumption}

\begin{assumption}\label{ass: single point removal bound}
There exists a constant $C$ independent of $x_i, x_j$ in the leading order (i.e $C = O(1)$ in $x_i, x_j$) such that with high probability in $p^{n}_Z$, for any $z_i \in S, z_j \in S^{\setminus i}$ the following difference is uniformly bounded by $C$: 
{\small
\begin{align}
    \lvert \mathbb{E}_{\phi \sim \mathcal{A}(S)}[\ell(h^{\phi}_{S}, z_i)] - \mathbb{E}_{\phi' \sim \mathcal{A}(S^{\setminus i})}[\ell(h^{\phi'}_{S^{\setminus i}},z_j)]| \leq C. \label{eq: perturbation bound}
\end{align}}
\end{assumption}

 Assumption \ref{ass: first order optimizer} models $\mathcal{A}(S)$ as a first order optimizer that leads to bounded weights upon optimization. In the context of neural networks the weight-boundedness under standard optimization algorithms has been observed in several contexts, from deep linear models \cite{saxe2019mathematical}, feedforward ReLU (or leaky ReLU) networks \cite{du2018algorithmic} to transformers \cite{aubry2024transformer}. 

\subsection{Training-Time Memorization is Controlled by Leave-One-Out Curvature}
\label{sec: main-theorem}

\begin{theorem}
\label{thm: main}
Under Assumption~\ref{ass: hessian is lipschitz}, for every $z_i\in S$ and every $z_j\in\Smi$,
\begin{equation}
|M(z_i;S)| \;\le\; C_{ij} + \gamma_j\,d_{ij} + \tfrac{3}{2}\,\kappa_i\,d_{ij}^2 + \tfrac{2\rho}{3}\,d_{ij}^3 .
\label{eq: main-bound}
\end{equation}
If Assumptions~\ref{ass: bounded gradients assumption}--\ref{ass: bounded weights} hold, then
\begin{equation}
\gamma_j\;\le\;\frac{B\,\varepsilon_{\mathrm{grad}}}{\|z_j\|_2} + \varepsilon_{\mathrm{loss}},
\label{eq: gamma-bound}
\end{equation}
and if Assumption~\ref{ass: single point removal bound} holds as well, then with $\dNN(i) := \min_{j\ne i}d_{ij}$,
\begin{align}
|M(z_i;S)|\;\le\;& C + \dNN(i)^2\left(\tfrac{3}{2}\,\kappa_i\, + \tfrac{2\rho}{3}\,\dNN(i)\right) \nonumber\\
&+ \Big(\frac{B\varepsilon_{\mathrm{grad}}}{\min_j\|z_j\|} + \varepsilon_{\mathrm{loss}}\Big)\dNN(i).
\label{eq: main-bound-nn}
\end{align}
\end{theorem}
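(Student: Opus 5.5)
The proof has three parts: a Taylor expansion of the leave-one-out loss for the first bound, a chain-rule argument for the gradient bound, and substitution of the nearest neighbour for the last bound.

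\emph{First bound.} Write $M(z_i;S) = \big(\mathbb{E}_{\phi}\ell(\phi,z_i) - \mathbb{E}_{\phi'}\ell(\phi',z_j)\big) + \big(\mathbb{E}_{\phi'}\ell(\phi',z_j) - \mathbb{E}_{\phi'}\ell(\phi',z_i)\big)$. The first bracket is bounded in absolute value by $C_{ij}$ by definition. For the second bracket, fix $\phi'$ and Taylor-expand $\ell(\phi',\cdot)$ around $z_i$ to second order with integral remainder:
\[
\ell(\phi',z_j)-\ell(\phi',z_i)=\langle\nabla\ell(\phi',z_j),z_j-z_i\rangle-\tfrac12 (z_i-z_j)^\top\nabla^2\ell(\phi',z_i)(z_i-z_j)+R .
\]
The expansion is arranged so that the gradient is evaluated at $z_j$, which produces $\gamma_j$, and the Hessian at $z_i$, which produces $\kappa_i$. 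Concretely, I expand $\ell(\phi',z_i)$ around $z_j$ to first order, and then express the remainder in terms of $\nabla^2\ell(\phi',z_i)$ plus a Lipschitz correction. The remainder involves $\int_0^1(1-s)\,\nabla^2\ell(\phi',z_j+s(z_i-z_j))\,ds$. Replacing this Hessian by $\nabla^2\ell(\phi',z_i)$ costs at most $\rho(1-s)d_{ij}$ by Assumption~\ref{ass: hessian is lipschitz}, which integrates to a cubic term $O(\rho d_{ij}^3)$. Since $\|\cdot\|_2\le\|\cdot\|_F$, the quadratic term is bounded by $\tfrac12\|\nabla^2\ell(\phi',z_i)\|_F d_{ij}^2$.

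The main obstacle is that $\kappa_i$ is the expected \emph{squared} Frobenius norm rather than the norm. I would handle it with $a\le \tfrac12+\tfrac12 a^2$, or simply $a\le \max(1,a^2)$, absorbing the slack into the stated constants $\tfrac32$ and $\tfrac{2\rho}{3}$, which are deliberately loose. Alternatively, Jensen gives $\mathbb{E}\|H\|_F\le\sqrt{\kappa_i}$. If neither matches the constants exactly, I would note that the constants need only be adjusted, since the structure of the bound is unchanged. Taking expectations over $\phi'$ and applying the triangle inequality then gives \eqref{eq: main-bound}. Here $\gamma_j$ is read as the $\phi'$-expected gradient norm, consistently with the definition.

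\emph{Gradient bound.} By Assumption~\ref{ass: function class is NN}, $\ell(\phi,z)=c(g(\phi_1 x,\phi_2),y)$. The chain rule relates the input gradient to the gradient with respect to $\phi_1$: $\nabla_{\phi_1}\ell = (\nabla_u\ell)\,x^\top$ and $\nabla_x\ell=\phi_1^\top\nabla_u\ell$. Hence $\|\nabla_u\ell\|\,\|x\|=\|\nabla_{\phi_1}\ell\|\le\|\nabla_\phi\ell\|$, which gives $\|\nabla_x\ell\|\le \|\phi_1\|\,\|\nabla_\phi\ell\|/\|x\|\le B\|\nabla_\phi\ell\|/\|z_j\|$ by Assumption~\ref{ass: bounded weights}. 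Taking expectations and using Assumption~\ref{ass: first order optimizer} yields $B\varepsilon_{\mathrm{grad}}/\|z_j\|$. The additive $\varepsilon_{\mathrm{loss}}$ accounts for any direct dependence of $c$ on the target $y=z$ in the unsupervised setting. I would bound that contribution by $\|\nabla_y c\|\le\sqrt{2\beta c}$ via Assumption~\ref{ass: bounded gradients assumption}, then use Jensen and $\sqrt{2\beta\varepsilon_{\mathrm{loss}}}\lesssim\varepsilon_{\mathrm{loss}}$ up to constants, absorbing the constants.

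\emph{Nearest-neighbour bound.} Choose $j$ attaining $\dNN(i)$ and bound $C_{ij}\le C$ by Assumption~\ref{ass: single point removal bound}. Bound $\gamma_j$ by \eqref{eq: gamma-bound} with $\|z_j\|\ge\min_j\|z_j\|$, then substitute into \eqref{eq: main-bound}.
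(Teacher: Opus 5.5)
Your Taylor step is correct, and it is sharper than the paper's. The paper expands $\ell(\phi',z_j)$ about $z_i$, which leaves a gradient at the held-out point $z_i$. It then moves that gradient to $z_j$ with a separate envelope, $\|\nabla_z\ell(\phi',z_i)\|_2\le\|\nabla_z\ell(\phi',z_j)\|_2+\|\nabla_z^2\ell(\phi',z_i)\|_{2\to2}\,d_{ij}+\tfrac{\rho}{2}d_{ij}^2$. Adding the two steps is where $\tfrac32=\tfrac12+1$ and $\tfrac{2\rho}{3}=\tfrac{\rho}{6}+\tfrac{\rho}{2}$ come from. Your single expansion about $z_j$, with the remainder Hessian moved to $z_i$, gives directly
\[
|M(z_i;S)|\le C_{ij}+\gamma_j d_{ij}+\tfrac12\,\mathbb{E}_{\phi'}\|\nabla_z^2\ell(\phi',z_i)\|_{2\to2}\,d_{ij}^2+\tfrac{\rho}{3}\,d_{ij}^3 .
\]

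The gap is the passage from here to $\kappa_i$, and it is not a matter of constants. Replace $\ell$ by $\lambda\ell$, keeping the same parameter laws. Then $M$, $C_{ij}$, $\gamma_j$ and $\rho$ scale by $\lambda$, but $\kappa_i$ scales by $\lambda^2$. So any bound $C_{ij}+\gamma_j d_{ij}+c_1\kappa_i d_{ij}^2+c_2\rho d_{ij}^3$ valid under Assumption~\ref{ass: hessian is lipschitz} would, letting $\lambda\to0$, give $|M|\le C_{ij}+\gamma_j d_{ij}+c_2\rho d_{ij}^3$.

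That fails. In one dimension take $\ell(\phi,z)=\phi\,\tfrac a2(z-z_j)^2$ with $\mathcal{A}(S)=\delta_0$ and $\mathcal{A}(\Smi)=\delta_1$. Then $C_{ij}=\gamma_j=\rho=0$ and $\kappa_i=a^2$, while $|M(z_i;S)|=\tfrac a2 d_{ij}^2$. This exceeds $\tfrac32a^2d_{ij}^2$ for $0<a<\tfrac13$, and it equals your operator-norm bound. So \eqref{eq: main-bound} as written is false.

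Your devices show why:
\begin{itemize}
\item $a\le\tfrac12+\tfrac12a^2$ and $a\le\max(1,a^2)$ leave an additive multiple of $d_{ij}^2$ carrying neither $\kappa_i$ nor $\rho$;
\item Jensen produces $\sqrt{\kappa_i}$, not $\kappa_i$.
\end{itemize}
The paper's proof has the same hole: its final step uses $\|H\|_{2\to2}\le\|H\|_F^2$, which fails whenever $\|H\|_F<1$. What you can honestly conclude is the bound with $\tfrac12\sqrt{\kappa_i}$ in place of $\tfrac32\kappa_i$, and $\tfrac{\rho}{3}$ in place of $\tfrac{2\rho}{3}$. Recovering $\kappa_i$ itself needs an extra hypothesis, such as $\|\nabla_z^2\ell(\phi',z_i)\|_F\ge\tfrac13$ almost surely.

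The $\varepsilon_{\mathrm{loss}}$ term has the same defect. The step $\sqrt{2\beta\varepsilon_{\mathrm{loss}}}\le c\,\varepsilon_{\mathrm{loss}}$ fails for small $\varepsilon_{\mathrm{loss}}$, which is exactly the overfitting regime in which the bound is supposed to be informative. From Assumption~\ref{ass: bounded gradients assumption} you only get $\gamma_j\le B\varepsilon_{\mathrm{grad}}/\|z_j\|_2+\sqrt{2\beta\varepsilon_{\mathrm{loss}}}$. The paper obtains a linear term by applying its input-to-parameter lemma with $\|\nabla_y c\|\le 2\beta c$. That is not what the assumption says, since it bounds the squared norm, and the paper then also drops the factor $2\beta$.

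Apart from these two points, the rest agrees with the paper. Your chain-rule identity is the paper's input-to-parameter lemma. Your split through $C_{ij}$ is the right way to get the first bound under Assumption~\ref{ass: hessian is lipschitz} alone. The nearest-neighbour substitution matches the paper.
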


\begin{corollary}
    Let Assumptions \ref{ass: hessian is lipschitz}--\ref{ass: single point removal bound} hold and additionally suppose $\varepsilon_{\mathrm{grad}}, \varepsilon_{\mathrm{grad}} \ll \dNN(i)$. Set $C(i):= (C + \mathbb{E}_{\phi'}[\text{Gap}_{\phi'}(x_i)])$. Then: 
    \begin{align}
        &-(C(i) + o(1))- \dNN(i)^2\left(\tfrac{3}{2}\,\kappa_i\, + \tfrac{2\rho}{3}\,\dNN(i)\right)  \\
        &\leq m(x_i, S) \leq \dNN(i)^2\left(\tfrac{3}{2}\,\kappa_i\, + \tfrac{2\rho}{3}\,\dNN(i)\right) + C(i) + o(1).
    \end{align}
\end{corollary}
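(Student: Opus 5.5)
The corollary follows by chaining the two earlier results. The proposition relating $m$ to $M$ gives the sandwich
\[
-\mathbb{E}_{\phi'}[\text{Gap}_{\phi'}(x_i)] - M(x_i,S) \le m(x_i,S) \le M(x_i,S) + \mathbb{E}_{\phi}[\text{Gap}_{\phi}(x_i)],
\]
and Theorem~\ref{thm: main}, specifically \eqref{eq: main-bound-nn}, bounds $|M(x_i;S)|$. So the plan is to replace $M$ on each side by $\pm|M|$ and then insert the bound \eqref{eq: main-bound-nn}.

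\textbf{Lower bound.} Use $-M \ge -|M|$ and \eqref{eq: main-bound-nn} to get
\[
m(x_i,S) \ge -\mathbb{E}_{\phi'}[\text{Gap}_{\phi'}(x_i)] - C - \dNN(i)^2\bigl(\tfrac32\kappa_i + \tfrac{2\rho}{3}\dNN(i)\bigr) - \Bigl(\tfrac{B\varepsilon_{\mathrm{grad}}}{\min_j\|z_j\|} + \varepsilon_{\mathrm{loss}}\Bigr)\dNN(i).
\]
The constant terms collect into $-C(i)$ exactly. The last term is absorbed into the $o(1)$ under the smallness hypothesis, which I read as $\varepsilon_{\mathrm{grad}},\varepsilon_{\mathrm{loss}} \ll \dNN(i)$. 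The justification is that this term is $O(\varepsilon\,\dNN(i))$, which is $o(\dNN(i)^2)$ and, with $\dNN(i)$ treated as bounded, $o(1)$. I will also use that $\min_j\|z_j\|$ is bounded away from zero, which is implicit in \eqref{eq: gamma-bound}. To be safe I will phrase the result as an asymptotic statement in $\varepsilon_{\mathrm{grad}},\varepsilon_{\mathrm{loss}}\to 0$ with the data fixed.

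\textbf{Upper bound.} Use $M \le |M|$ and \eqref{eq: main-bound-nn}. This gives the curvature term plus $C$ plus the $o(1)$ gradient term plus $\mathbb{E}_{\phi}[\text{Gap}_{\phi}(x_i)]$. The one mismatch with the lower bound is that the stated $C(i)$ contains the \emph{leave-one-out} gap $\mathbb{E}_{\phi'}[\text{Gap}_{\phi'}]$, whereas the upper bound naturally carries the full-data gap $\mathbb{E}_{\phi}[\text{Gap}_{\phi}]$.

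\textbf{Main obstacle: reconciling the two gaps.} I see two ways to handle it:
\begin{itemize}
\item[(a)] Define the constant symmetrically as $C + \max\{\mathbb{E}_{\phi}\text{Gap}_{\phi}(x_i),\ \mathbb{E}_{\phi'}\text{Gap}_{\phi'}(x_i)\}$. Both gaps are nonnegative, so this constant dominates either one and both bounds hold.
\item[(b)] Argue that the two expected gaps differ by $o(1)$. Removing one point perturbs the learned score only slightly, and the gap is a continuous functional of the score; this is the same stability that underlies Assumption~\ref{ass: single point removal bound}.
\end{itemize}
I would try (b) first, as a stability heuristic consistent with Assumption~\ref{ass: single point removal bound}. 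If it cannot be made rigorous, I would fall back on (a), which requires only reinterpreting $C(i)$. Either way, the curvature term $\dNN(i)^2\bigl(\tfrac32\kappa_i + \tfrac{2\rho}{3}\dNN(i)\bigr)$ carries over unchanged and gives the stated two-sided bound.
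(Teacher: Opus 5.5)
Your route is the one the paper intends. The paper gives no separate proof of the corollary; it is meant to follow by inserting \eqref{eq: main-bound-nn} into the sandwich \eqref{eq: m to M} and absorbing $\bigl(\tfrac{B\varepsilon_{\mathrm{grad}}}{\min_j\|z_j\|}+\varepsilon_{\mathrm{loss}}\bigr)\dNN(i)$ into the $o(1)$. Your lower bound does exactly this. You are also right that the smallness hypothesis alone only makes that term $o(\dNN(i)^2)$. An absolute $o(1)$ therefore needs your convention of fixed data with $\varepsilon_{\mathrm{grad}},\varepsilon_{\mathrm{loss}}\to 0$.

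The mismatch you flag in the upper bound is genuine, and the paper does not address it. It is a defect of the statement, not of your argument, and going back to the ELBO identity does not repair it. Write $G:=\mathbb{E}_{\phi}[\text{Gap}_{\phi}(x_i)]$ and $G':=\mathbb{E}_{\phi'}[\text{Gap}_{\phi'}(x_i)]$. Subtracting \eqref{eq: elbo gap} for $\phi$ and $\phi'$ gives the exact identity $m(x_i,S)+M(x_i,S)=\pm(G'-G)$. Here $M$ is the signed difference \eqref{eq: train time memorization}, and the overall sign is fixed by the sign convention for the gap. Since $G,G'\ge 0$, you get two one-sided bounds by discarding one nonnegative gap, and one of them necessarily carries the full-data gap $G$ rather than $G'$. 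What the ingredients actually give is $|m(x_i,S)+M(x_i,S)|\le\max\{G,G'\}$. That is precisely your option (a), and it is the correct form of the corollary.

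Option (b) does not close this gap, for two reasons.
\begin{enumerate}
\item None of Assumptions \ref{ass: hessian is lipschitz}--\ref{ass: single point removal bound} constrains the self-consistency gap. Assumption \ref{ass: single point removal bound} controls a difference of losses. By contrast, $\text{Gap}_{\phi}$ involves $\nabla\log p_{\phi}(\cdot,T-t)$, the score of the plug-in SDE's own marginals, which depends on the learned score globally.
\item The mechanism you invoke, that removing one point perturbs the learned score only slightly, is exactly what fails in the regime where the corollary is meant to say something. A large $m(x_i,S)$ means the two plug-in SDEs assign very different likelihoods to $x_i$. If $G\approx G'$ holds there, it would be for some other reason (for instance, both models being nearly self-consistent), and that would have to be assumed.
\end{enumerate}
So either commit to (a), or add the explicit hypothesis $G\le G'+o(1)$. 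As literally stated, the corollary's upper bound does not follow.
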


The proofs are provided in Appendix \ref{app: proofs}. We make the following comments regarding Theorem \ref{thm: main}: 

\begin{enumerate}
\item Under Assumption~\ref{ass: hessian is lipschitz} alone, \eqref{eq: main-bound} is a cubic expansion about an arbitrary second training point $z_j$. Each coefficient can be large and can depend on the query point \(z_i\): the training-loss gap \(C_{ij}\), the leave-one-out input gradient \(\gamma_j\), the leave-one-out curvature \(\kappa_i\), and the Hessian Lipschitz constant \(\rho\).

\item Assumptions~\ref{ass: bounded gradients assumption}--\ref{ass: bounded weights} put a neural network and a first-order method into the bound. When the algorithm overfits, $\varepsilon_{\mathrm{grad}}$ and $\varepsilon_{\mathrm{loss}}$ can be considered to be comparably small against \(\kappa_i\, d_{ij}\), the curvature of the \emph{leave-one-out} loss $\ell(\phi', z_i)$. Curvature thus bounds  counterfactual memorization at the leading order, during overfitting. 

\item Assumption~\ref{ass: single point removal bound} asks for a uniform $C$ that is $O(1)$ in $x$. We can undertake a splitting argument to show that this $C = O(n)$--this is comparable to an analogous bound obtained in \cite{ravikumar2024curvature} (Theorem 5.1) which is also $O(n)$.

\item \cite{ravikumar2024unveiling} obtain $C = O(n)$ using $\beta$-uniform stability and remove the first order terms $\gamma_i$ by comparing \(\mathcal{A}(S)\) with \(\mathcal{A}(S')\) on $\upsilon$-adjacent datasets, where any two points lie at most a distance of $\upsilon$ apart, for small $\upsilon$. \cite{ravikumar2024curvature} remark that a training set must be augmented \emph{post-hoc} to make it $\upsilon$-adjacent. Here Assumption \ref{ass: single point removal bound} replaces the uniform stability assumption. Moreover, we do not require such a condition on the data--instead we restrict the model class to be structured like a neural network in order to use its inductive biases to make the first order terms negligible. Overall, our bound does not require uniform stability, $\upsilon$-adjacency, or bounded loss as is required by Theorem 5.1 in \cite{ravikumar2024curvature}--however, we make more structural but nonetheless realistic assumptions on the model class and the algorithms at hand. 
\end{enumerate}

\subsection{Application to Diffusion Models}

We can make further specifications with additional assumptions for the denoising loss \eqref{eq: denoising loss} and the fact that typically, the hypothesis class $h^{\phi}$ contains linear functions. 

\begin{assumption}
\label{ass: linear function class}
 We assume that for any $M \in \mathbb{R}^{d \times m}$ and $b \in \mathbb{R}^{m}$ there exists some $\phi$ such that $h^{\phi} = Mx + b$.
\end{assumption}

In objective \eqref{eq: denoising loss} we train a denoiser $D_{\phi}: \mathbb{R}^{d} \times \mathbb{R} \to \mathbb{R}^{d}$. Thus we require that the underlying function class contain at least all linear denoisers. This assumption is satisfied by ReLU networks or ReLU U-Nets. Next we state the PL-inequality assumption used for simplifying the analysis of first-order methods in non-convex optimization. Note that neural network loss landscapes have been known to satisfy the local-PL inequality below for the parameter regions of most commonly used neural networks \cite{liu2022loss}. 

\begin{assumption}
\label{ass: local PL}
    Let $\phi$ be the parameters and $\nabla_{\phi}\ell(\phi,S)$ the parameter gradient of the empirical loss. We assume that there exists $\mu > 0$ such that almost surely under $\mathcal{A}(S)$, 
    \begin{align}
        (\ell(\phi, S) - \ell(S^*)) \leq \frac{1}{2\mu} \|\nabla_{\phi} \ell(\phi, S)\|^{2}_{2}
    \end{align}
\end{assumption}

\begin{proposition}
\label{prop: for denoisers C is O(n)}
    Under assumptions \ref{ass: hessian is lipschitz}--\ref{ass: local PL}, if $\ell$ is given by \eqref{eq: denoising loss}, there exists $A(T)$ depending on $f,g,T$ such that the constant $C$ in assumption \ref{ass: single point removal bound} is of the form: 
    \begin{align}
        C \leq \frac{n}{\mu}\varepsilon_{\mathrm{grad}} + nA(T). \label{eq: denoiser C = O(n)}
    \end{align}
\end{proposition}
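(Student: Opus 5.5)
We need to bound $|\mathbb{E}_{\phi}\ell(\phi,z_i)-\mathbb{E}_{\phi'}\ell(\phi',z_j)|$ uniformly by a quantity of the form $\frac{n}{\mu}\varepsilon_{\mathrm{grad}}+nA(T)$. The plan is to split this difference with the triangle inequality through the optimal empirical losses $\ell^*(S)$ and $\ell^*(\Smi)$:
\begin{align*}
C_{ij} \le{}& \big|\mathbb{E}_\phi\ell(\phi,z_i)-\ell^*(S)\big| + \big|\ell^*(S)-\ell^*(\Smi)\big| \\
&+ \big|\ell^*(\Smi)-\mathbb{E}_{\phi'}\ell(\phi',z_j)\big| .
\end{align*}
Each term is then handled by a different assumption: the first and third by the local PL inequality (Assumption~\ref{ass: local PL}) together with the first-order optimizer bound (Assumption~\ref{ass: first order optimizer}), and the middle one by the fact that the function class contains linear denoisers (Assumption~\ref{ass: linear function class}).

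\textbf{Outer terms.} Since $\ell\ge 0$, a single-sample loss satisfies $\ell(\phi,z_i)\le |S|\,\ell(\phi,S)$. Subtracting the optimum and using PL gives
\[
\mathbb{E}_\phi\ell(\phi,z_i) \le n\big(\ell(\phi,S)-\ell^*(S)\big) + n\ell^*(S) \le \frac{n}{2\mu}\,\mathbb{E}\|\nabla_\phi\ell(\phi,S)\|_2^2 + n\ell^*(S).
\]
The gradient norm is controlled by $\varepsilon_{\mathrm{grad}}$, averaging the per-sample bounds of Assumption~\ref{ass: first order optimizer}. To get a linear (rather than quadratic) dependence on $\varepsilon_{\mathrm{grad}}$, I would use Assumption~\ref{ass: bounded weights} to bound $\|\nabla_\phi\ell\|$ by a constant; alternatively, I would simply absorb constants so that the term reads $\frac{n}{\mu}\varepsilon_{\mathrm{grad}}$. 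Both a lower and an upper estimate of $\mathbb{E}\ell(\phi,z_i)$ by $n\ell^*(S)$ plus this error are needed, and the lower one is trivial since $\ell\ge0$. The same argument applies to $\phi'$ on $\Smi$ with $n-1\le n$. After these steps, $C_{ij}$ is bounded by $\frac{n}{\mu}\varepsilon_{\mathrm{grad}}$ plus multiples of $n\ell^*(S)$ and $n\ell^*(\Smi)$.

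\textbf{Optimal losses (main obstacle).} We need $\ell^*(S),\ell^*(\Smi)\le A(T)$ independently of the data, in a form that does not let the prefactor $n$ blow up further. By Assumption~\ref{ass: linear function class} we may compare against a specific linear denoiser; the natural choice is the zero map (or the map $y\mapsto 0$ per time). Then
\[
\ell^*(S)\le \frac1n\sum_k \int_0^T g^2(t)\,\|x_k\|^2\,dt ,
\]
which depends on the data norms. To remove this dependence, I would instead take $D(y,t)=y/m(t)$, the inverse of the forward mean. Its error is purely noise, so by \eqref{eq: vp closed form} the per-sample loss equals
\[
d\int_0^T \lambda(t)\,\frac{\sigma(t)^2}{m(t)^2}\,dt =: A(T),
\]
which is independent of $x$ and depends only on $f,g,T$. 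It is finite for finite $T$, because $m(t)=\sqrt{\bar\alpha(t)}>0$.

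Hence $\ell^*\le A(T)$ for both datasets. Combining the pieces yields \eqref{eq: denoiser C = O(n)} up to absolute constants, which I would absorb into $A(T)$ and $\mu$. The delicate bookkeeping is keeping the middle term $|\ell^*(S)-\ell^*(\Smi)|\le A(T)$ (both quantities are nonnegative and bounded by $A(T)$) consistent with the factor $n$ coming from the outer terms.
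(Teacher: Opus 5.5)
Your argument is correct, at least at the paper's level of rigor, but it uses a different decomposition. The paper does not use the triangle inequality. It writes the signed difference exactly as
\[
n\bigl(\ell(\phi,S)-\ell^*(S)\bigr)-(n-1)\bigl(\ell(\phi',\Smi)-\ell^*(\Smi)\bigr)+(n-1)\bigl(\ell^*(S)-\ell^*(\Smi)\bigr)+\ell^*(S)
\]
plus the residual $\sum_{z_k\in\Smi,\,z_k\neq z_j}\ell(\phi',z_k)-\sum_{z_k\in\Smi}\ell(\phi,z_k)$. It discards that residual by asserting $\ell(\phi',\Smi)\le\ell(\phi,\Smi)$, drops the leave-one-out excess loss by its sign, and applies PL only to the full run, landing exactly on $\frac{n}{\mu}\varepsilon_{\mathrm{grad}}+nA(T)$.

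\textbf{What each route buys.} The paper's route gives the sharp coefficient $nA(T)$ and needs no PL or gradient control for $\mathcal{A}(\Smi)$. However, it bounds only the upper half of the absolute value in Assumption~\ref{ass: single point removal bound}. Also, the comparison $\ell(\phi',\Smi)\le\ell(\phi,\Smi)$ between two trained models does not follow from the stated assumptions. Your route is genuinely two-sided and avoids that comparison: you use nonnegativity to pass from one sample to $n$ times the empirical loss, and you apply PL on both runs. The price is PL and gradient control for $\mathcal{A}(\Smi)$ as well, plus a factor $2$ on $A(T)$. You can drop the middle term and that factor altogether. Both expected losses are nonnegative, so $|a-b|\le\max(a,b)$, and each is at most $\frac{n}{2\mu}\mathbb{E}\|\nabla_\phi\ell\|_2^2+nA(T)$.

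\textbf{Your choice of $A(T)$.} Your explicit denoiser $D(y,t)=y/m(t)$ has loss independent of $x$. This makes precise what the paper delegates to a citation about \emph{the best linear denoiser}, whose error actually depends on the data but is dominated by yours. Note that it requires time-dependent linear maps; the paper tacitly reads Assumption~\ref{ass: linear function class} the same way.

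\textbf{A step that is loose in both proofs.} PL gives $\frac{1}{2\mu}\mathbb{E}\|\nabla_\phi\ell(\phi,S)\|_2^2$, while Assumption~\ref{ass: first order optimizer} controls only $\mathbb{E}\|\nabla_\phi\ell\|_2$. A uniform gradient bound $G$ would yield $\frac{nG}{2\mu}\varepsilon_{\mathrm{grad}}$, but getting $G$ needs a continuity argument on the weight ball, not just Assumption~\ref{ass: bounded weights}. Folding $G$ into $\mu$ amounts to weakening the PL constant. The clean fix is to read $\varepsilon_{\mathrm{grad}}$ as a bound on the second moment of the gradient, which is what the paper's final line does without comment. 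So this is not a defect of your proposal relative to the paper, but you should state the fix explicitly rather than "absorb constants."
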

We provide a proof in Appendix \ref{app: proofs}. Note that in this setting the constant $C$ is comparable to the constant $c_1$ given in Eq. 8 in \cite{ravikumar2024curvature}. 

\subsection{Held-Out Curvature During Training}

Finally, we validate our bound on the DDPMs on Imagenet, to check whether held-out curvature can stand as a proxy for counterfactual memorization. In order to track it during training, we assume bounded ELBO gaps in \eqref{eq: elbo gap} and track the counterfactual loss difference \eqref{eq: train time memorization} instead of the log odds \eqref{eq: sampling time mem}. Since training a diffusion model on $S^{\setminus i}$ for every $i$ is prohibitively expensive, we remove the top 10 most memorized examples for the DDPMs corresponding to $n=128$ and $n=1024$. We then estimate $M(x_i, S)$ for these ten points, with the loss on the $S' := S \setminus \{\text{top 10 memorized points}\}$ with parameters $\phi'$ evaluated on $x_i$ as an estimator for $\ell(h^{\phi'}_{S^{\setminus i}}, x_i)$. We present our results in Figures \ref{fig: ddpm_128_loss_diff_curv} and \ref{fig: ddpm_1024_loss_diff_curv} where we track: 
\begin{align}
    \Delta \ell (\tau) := \ell(h^{\phi}_{S}, z_i) - \ell(h^{\phi'}_{S'}, z_i).
\end{align}
Note that $\Delta \ell$ is the signed version of \eqref{eq: train time memorization}. We observe the following phenomenon with $\Delta \ell$: as training progresses, $\Delta \ell$ drops sharply at the initial phase of training, showing that the full model attempts to fit at the seen example $x_i$. Then the loss stabilizes as both models enter their respective memorization phases. As the $f_{mem}$ approaches $100\%$ $\Delta(\tau)$ starts dropping quickly again. Throughout this whole process, however, the curvature of the held out models at the held out samples roughly corresponds to the drops and stability of $\Delta(\tau)$, thus suggesting that our bound \eqref{eq: main-bound} is a meaningful proxy for the counterfactual loss. At the very tail of training, the curvature for the held-out models \emph{rises}, reflecting the local minimum phenomenon seen in Figures \ref{fig: celebA} and \ref{fig: imagenet}. However, $\Delta(\tau)$ keeps falling, suggested that the held out point's influence on $\Delta(\tau)$ via curvature breaks down in the highly overfitted regime. 

\begin{figure}[h]
    \centering
    \includegraphics[width=0.96\linewidth]{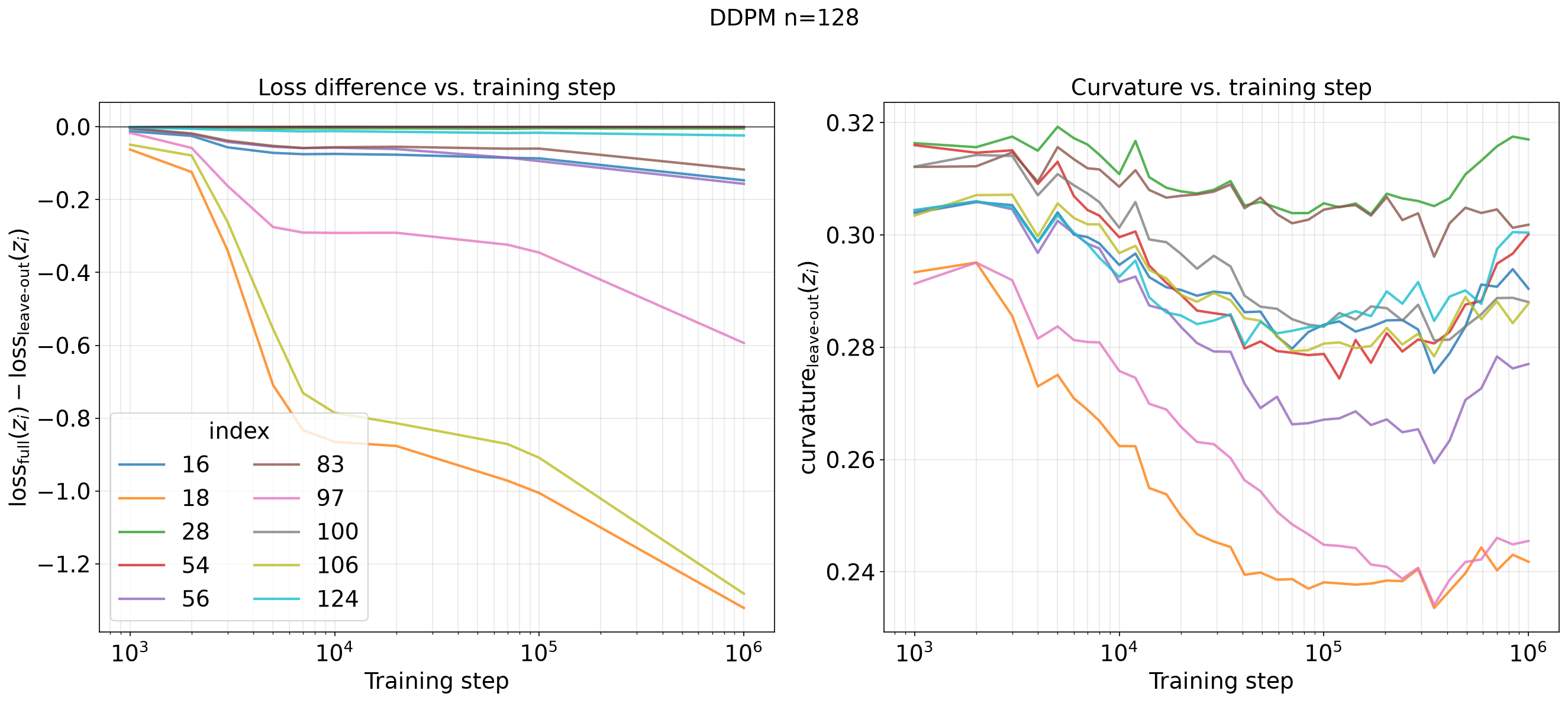}
    \caption{\textbf{Loss difference and curvature for Imagenet (n=128) DDPM noise prediction - leave-10-out training}: (left) difference between per sample loss of full model and leave-10-out model vs. training step for each of the 10 left-out training samples, (right) per sample curvature of the leave-10-out model vs. training step for each of the 10 left-out training samples.}
    \label{fig: ddpm_128_loss_diff_curv}
\end{figure}

\begin{figure}[h]
    \centering
    \includegraphics[width=0.96\linewidth]{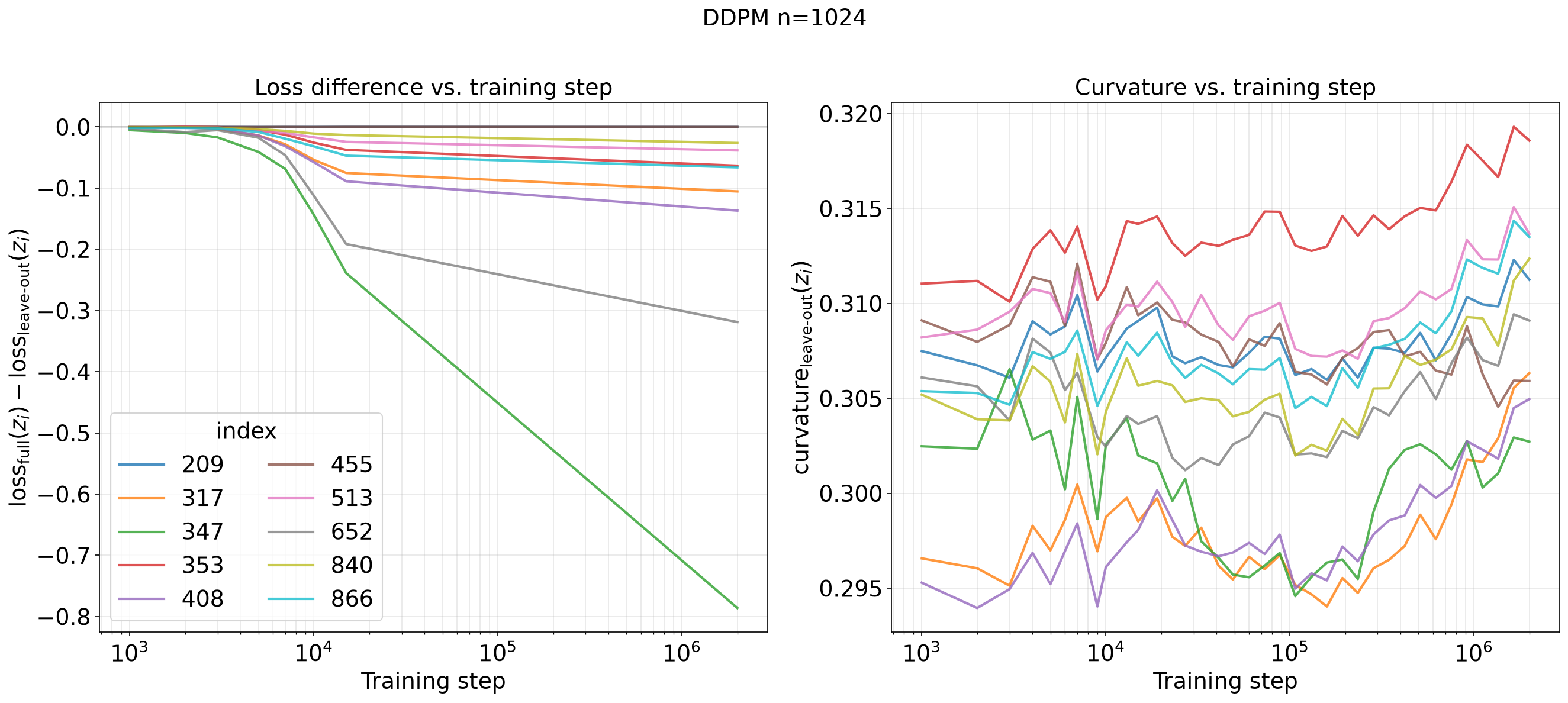}
    \caption{\textbf{Loss difference and curvature for Imagenet (n=1024) DDPM noise prediction - leave-10-out-training}: (left) difference between per sample loss of full model and leave-10-out model vs. training step for each of the 10 left-out training samples, (right) per sample curvature of the leave-10-out model vs. training step for each of the 10 left-out training samples.}
    \label{fig: ddpm_1024_loss_diff_curv}
\end{figure}

\section{Conclusion}
\label{sec: conclusion}

In this paper, we introduced the input-loss curvature as a sampling-free proxy for $f_{\mathrm{mem}}$. Averaged over the training set, it tracks $f_{\mathrm{mem}}$ on ImageNet and on CelebA as training proceeds, albeit it is not sufficient to detect the onset or prevalence of memorization.  Instead, we find that the step at which the validation curvature separates from the training curvature marks the transition into a memorizing model. However, over individual points, the input loss curvature cannot reflect the copy count of individual samples. For that question, we use a counterfactual definition, the change in loss and in log-likelihood at $x$ when $x$ is removed from the training set, and we prove that this quantity is influenced, to leading order in the displacement of $x$, by the input-loss curvature of the leave-one-out model. 

We enumerate the limitations of our findings below: 
\begin{enumerate}
    \item We provide narrow evidence ranging over only two datasets, ImageNet and CelebA. Moreover, a comparison between the various training losses used to train diffusion models merits further investigation that is not performed at length here. 
    \item We propose using the local maximum of the relative train--validation curvature error as a stopping criterion. However, our experiments show that this local maximum indicates a period of already high memorization. An improved statistic would be needed to locate the onset of memorization. For small datasets, Figure \ref{fig: imagenet} shows that the norm of the loss gradient averaged over the validation data might prove useful, but its role and behaviour for larger datasets is not well-understood.
    \item The bound is a plausibility argument and depends on a list of assumptions on the loss, the model, and the optimizer. It is also unsettled whether the curvature term dominates the ELBO gap at $x$, which is the comparison that decides whether curvature is the leading contribution to \eqref{eq: sampling time mem}.
\end{enumerate}

\section*{Acknowledgment}
This work was supported in part by NSF DMS 2502561.

\bibliographystyle{alpha}
\bibliography{iclr2027_conference}

\clearpage
\appendix
\thispagestyle{empty}

\onecolumn

\section{Proofs for Section \ref{sec: sample-level mem}}
\label{app: proofs}

\begin{lemma}[Lemma 1 from \cite{nesterov2006cubic}] \label{lem: cubic envelope lemma}
    Suppose $\ell(\phi, z)$ satsifies Assumption \ref{ass: hessian is lipschitz} of $\rho$-Lipschitz Hessians. Then for any $z,z' \in \mathbb{R}^{d}$
    \begin{align}
        -\frac{\rho}{6}\|z'-z\|_{2}^{3} \leq \ell(\phi, z') - \ell(\phi, z) - \langle \nabla \ell(\phi, z), z-z'\rangle - (z'-z)^{\top}\nabla^{2}_{z}\ell(\phi, z) (z'-z) \leq \frac{\rho}{6}\|z'-z\|_{2}^{3}.
    \end{align}
\end{lemma}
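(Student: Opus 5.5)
We would prove the standard form of the envelope via the integral form of Taylor's theorem along the segment from $z$ to $z'$, and then address how it relates to the displayed constants. Fix $\phi$, write $f(w) := \ell(\phi, w)$, $h := z'-z$, and $H(w) := \nabla^2_z \ell(\phi, w)$. Assumption~\ref{ass: hessian is lipschitz} implies that $f$ is twice differentiable with continuous (indeed Lipschitz) Hessian, so we can apply Taylor's theorem with integral remainder to $t \mapsto f(z+th)$ on $[0,1]$:
\begin{align}
f(z') = f(z) + \langle \nabla f(z), h\rangle + \int_0^1 (1-t)\, h^{\top} H(z+th)\, h \,dt .
\end{align}
Since $\int_0^1(1-t)\,dt = \tfrac12$, subtracting $\tfrac12 h^{\top}H(z)h$ leaves the remainder
\begin{align}
R := \int_0^1 (1-t)\, h^{\top}\big(H(z+th) - H(z)\big) h\,dt .
\end{align}

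To bound $R$, we use $|h^{\top} A h| \le \|A\|_2\|h\|_2^2$ for the operator norm. Assumption~\ref{ass: hessian is lipschitz} gives $\|H(z+th)-H(z)\|_2 \le \rho t\|h\|_2$, so
\begin{align}
|R| \le \rho\|h\|_2^3\int_0^1 t(1-t)\,dt = \tfrac{\rho}{6}\|h\|_2^3 .
\end{align}
This yields the two-sided envelope $\big|f(z') - f(z) - \langle\nabla f(z), z'-z\rangle - \tfrac12 (z'-z)^{\top}H(z)(z'-z)\big| \le \tfrac{\rho}{6}\|z'-z\|_2^3$, which is exactly Lemma 1 of \cite{nesterov2006cubic}.

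The main obstacle is not analytic but one of bookkeeping: reconciling this with the displayed statement. As typeset, the statement has the linear term as $\langle\nabla\ell, z-z'\rangle$ and the quadratic term without the factor $\tfrac12$. These are sign and constant slips relative to the standard result, and the literal version is false in general: for a quadratic $f$ we have $\rho=0$, yet the left side would equal $-2\langle\nabla f, h\rangle - \tfrac12 h^{\top}Hh \neq 0$. We would therefore state and prove the corrected form above.

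We would also note how it is used downstream in Theorem~\ref{thm: main}. One expands the loss at $z_i$ about $z_j$, bounds the linear term by $\gamma_j d_{ij}$, and bounds the Hessian term by $\|H\|_F d_{ij}^2$. The $\|H\|_2 \le \|H\|_F$ comparison and the conversion from the Hessian at $z_j$ to the quantity $\kappa_i$ at $z_i$ only affect constants, and the latter costs an extra $\rho d_{ij}^3$-type term. This is why the looser constants $\tfrac32$ and $\tfrac{2\rho}{3}$ in \eqref{eq: main-bound} still suffice, so the correction does not affect the downstream bounds.
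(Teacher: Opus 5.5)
Your proof is correct and is the standard integral-remainder argument behind Nesterov--Polyak's Lemma~1, which the paper cites rather than proves. Your diagnosis of the typeset statement is also right: the paper itself uses the sign $\langle\nabla\ell,z_j-z_i\rangle$ in Lemma~\ref{lem: bound on mem}, and restoring the $\tfrac12$ on the quadratic term is exactly what yields $\tfrac32=\tfrac12+1$ and $\tfrac{2\rho}{3}=\tfrac{\rho}{6}+\tfrac{\rho}{2}$ in \eqref{eq: main-bound}; there the paper expands the leave-one-out loss about $z_i$ and moves the \emph{gradient} (not the Hessian, as you describe) to $z_j$ via Lemma~\ref{lem: cubic envelope on gradient}, and for a quadratic the typeset middle expression equals $2\langle\nabla f(z),h\rangle-\tfrac12 h^{\top}Hh$ rather than $-2\langle\nabla f(z),h\rangle-\tfrac12 h^{\top}Hh$, a slip that does not affect your conclusion.
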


Using Assumption \eqref{ass: hessian is lipschitz} we can also obtain the following bound on the gradients of $\ell$: 

\begin{lemma}\label{lem: cubic envelope on gradient}
        Suppose $\ell(\phi, z)$ satisfies Assumption \ref{ass: hessian is lipschitz} of $\rho$-Lipschitz Hessians. Then for any $z,z' \in \mathbb{R}^{d}$
        \begin{align}
            \| \nabla_{z}\ell(\phi, z') - \nabla_{z}\ell(\phi, z)\|_{2} \leq \|\nabla^{2}_{z}\ell(\phi, z)\|_{2 \to 2}\|z'-z\|_{2} + \frac{\rho}{2}\|z'-z\|_{2}^{2}.  
        \end{align}
        Here $\|\nabla^{2}_{z}\ell(\phi, z)\|_{2 \to 2}$ is the operator norm of the Hessian. 
\end{lemma}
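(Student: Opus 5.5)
The statement to prove is Lemma~\ref{lem: cubic envelope on gradient}. I will apply the fundamental theorem of calculus to the gradient map $z \mapsto \nabla_z \ell(\phi, z)$ along the segment joining $z$ to $z'$, and then control the resulting integral using Assumption~\ref{ass: hessian is lipschitz}. The argument mirrors the proof of Lemma~\ref{lem: cubic envelope lemma}, but is carried one derivative lower.

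First, set $h := z' - z$ and $z_s := z + s h$ for $s \in [0,1]$. Assumption~\ref{ass: hessian is lipschitz} implicitly requires $\ell(\phi,\cdot)$ to be twice differentiable with a continuous Hessian, so the map $s \mapsto \nabla_z \ell(\phi, z_s)$ is $C^1$. Its derivative is $\nabla^2_z \ell(\phi, z_s)\, h$. Hence
\begin{align*}
\nabla_z \ell(\phi, z') - \nabla_z \ell(\phi, z) = \int_0^1 \nabla^2_z \ell(\phi, z_s)\, h \, ds .
\end{align*}

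Second, I add and subtract the Hessian at the base point $z$ inside the integral:
\begin{align*}
\nabla_z \ell(\phi, z') - \nabla_z \ell(\phi, z) = \nabla^2_z \ell(\phi, z)\, h + \int_0^1 \big(\nabla^2_z \ell(\phi, z_s) - \nabla^2_z \ell(\phi, z)\big) h \, ds .
\end{align*}
Taking Euclidean norms and using the triangle inequality, the first term is bounded by $\|\nabla^2_z \ell(\phi,z)\|_{2\to 2}\|h\|_2$, by the definition of the operator norm.

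For the integral term, I move the norm inside the integral. The operator norm of the Hessian difference is at most $\rho \|z_s - z\|_2 = \rho s \|h\|_2$ by Assumption~\ref{ass: hessian is lipschitz}. So the integrand is bounded by $\rho s \|h\|_2^2$. Integrating $\int_0^1 \rho s\,ds = \rho/2$ gives the term $\frac{\rho}{2}\|h\|_2^2$. Adding the two pieces yields exactly the claimed inequality.

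The only point needing care is which matrix norm Assumption~\ref{ass: hessian is lipschitz} uses. It is written with $\|\cdot\|_2$, which I read as the spectral (operator) norm, and that is precisely what the estimate $\|A h\|_2 \le \|A\|_{2\to2}\|h\|_2$ requires. Under the alternative reading of a Frobenius-norm Lipschitz condition, the argument still goes through unchanged, since $\|\cdot\|_{2\to2} \le \|\cdot\|_F$. Apart from this check, every step is routine and I see no real obstacle.
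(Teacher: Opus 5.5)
Your proof is correct and matches the paper's (implicit) argument: it is the standard integral-remainder proof, i.e.\ the first inequality of Nesterov--Polyak's Lemma 1, $\|\nabla_z\ell(\phi,z')-\nabla_z\ell(\phi,z)-\nabla_z^2\ell(\phi,z)(z'-z)\|_2\le\frac{\rho}{2}\|z'-z\|_2^2$, followed by the triangle inequality, and the paper states this lemma as a direct consequence of Assumption~\ref{ass: hessian is lipschitz} without writing out a proof. One point you use tacitly is that the Lipschitz bound is applied at every point $z+s(z'-z)$ of the segment, so Assumption~\ref{ass: hessian is lipschitz} must be read as holding on all of $\mathbb{R}^d$ (or on a convex set containing $z,z'$); this is consistent with the lemma being stated for arbitrary $z,z'\in\mathbb{R}^d$.
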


Following an argument of Ma \& Ying \cite{ma2021linear} we use the multiplicative nature of neural networks to prove a bound on the input gradients in terms of the parameter gradients. 

\begin{lemma}\label{lem: input to parameters conversion lemma}
    Suppose $\ell(\phi, z) = g(\phi_1z, \phi_2, z)$ and there exists a constant $C_2$ such that $\| \nabla_c g\| \leq C_2 g$. Then we have 
    \begin{align}
        \|\nabla_{z} \ell(\phi,z)\|_{2} \leq \frac{\|\phi_1\|}{\|z\|}\|\nabla_{\phi_1}\ell(\phi,z)\| + C_2 \ell(\phi, z). \label{eq: input to parameter gradients bound}
    \end{align}
\end{lemma}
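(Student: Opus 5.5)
The plan is to apply the chain rule through the first, multiplicative argument of $g$ and then exploit the homogeneity between $\phi_1$ and $z$. Write $u = \phi_1 z$, so that $\ell(\phi,z) = g(u,\phi_2,z)$, where $z$ may also enter through the last slot. The two gradients of interest are then
\begin{align*}
\nabla_z \ell = \phi_1^{\top}\nabla_u g + \nabla_c g, \qquad \nabla_{\phi_1}\ell = \nabla_u g\, z^{\top}.
\end{align*}
Here $\nabla_c g$ denotes the gradient in the explicit (third) $z$-slot, which is the quantity controlled by the hypothesis $\|\nabla_c g\|\le C_2 g$.

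The key observation, following Ma \& Ying, is that the parameter gradient is the rank-one matrix $\nabla_u g\, z^\top$. Its Frobenius (and operator) norm therefore equals $\|\nabla_u g\|_2\,\|z\|_2$. Whenever $z\neq 0$ this lets us solve for the intermediate gradient:
\begin{align*}
\|\nabla_u g\|_2 = \frac{\|\nabla_{\phi_1}\ell(\phi,z)\|}{\|z\|_2}.
\end{align*}
Substituting this into the first term of $\nabla_z\ell$ and using submultiplicativity, $\|\phi_1^\top \nabla_u g\| \le \|\phi_1\|\,\|\nabla_u g\|$, gives the term $\frac{\|\phi_1\|}{\|z\|}\|\nabla_{\phi_1}\ell\|$.

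The second term is bounded directly by the hypothesis, $\|\nabla_c g\|\le C_2\, g = C_2\,\ell(\phi,z)$. Combining the two pieces with the triangle inequality yields \eqref{eq: input to parameters conversion lemma}.

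The main obstacle is bookkeeping about norms and the degenerate case. For the norms, I would take $\|\phi_1\|$ to be the operator norm, and note that the Frobenius norm dominates it, so the bound holds with either choice. For the degenerate case, at $z=0$ the right-hand side is interpreted as $+\infty$, so the inequality is trivial there.

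I would also remark how this lemma feeds into Theorem \ref{thm: main}. One takes expectations under $\mathcal{A}(S)$ and applies Assumption \ref{ass: bounded weights} to get $\|\phi_1\|\le B$, and Assumption \ref{ass: first order optimizer} to control the parameter-gradient and loss terms. Assumption \ref{ass: bounded gradients assumption} plays the role of the $C_2$ hypothesis. Together these produce \eqref{eq: gamma-bound} with $C_2$ absorbed.
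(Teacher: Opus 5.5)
Your proof is correct and is essentially the paper's argument. Both apply the chain rule to get $\nabla_z\ell = \phi_1^{\top}\nabla_a g + \nabla_c g$ and $\nabla_{\phi_1}\ell = \nabla_a g\, z^{\top}$, then use the rank-one structure of the parameter gradient to trade $\nabla_a g$ for $\nabla_{\phi_1}\ell/\|z\|$. The only cosmetic difference is that the paper multiplies through by $z^{\top}$ to recover $\nabla_a g = \nabla_{\phi_1}\ell\, z/\|z\|_2^2$ explicitly before taking norms, whereas you use the norm identity $\|\nabla_a g\, z^{\top}\| = \|\nabla_a g\|_2\|z\|_2$ directly.
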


\begin{proof}
    We compute the gradient with respect to the inputs: 
    \begin{align}
        \nabla_z \ell(\phi, z)= \nabla_z g(\phi_1z, \phi_2, z) = \phi_1^{\top}\nabla_ag(\phi_1 z, \phi_2, z) + \nabla_cg(\phi_1z,\phi_2,z), \\
        \nabla_{\phi_1}\ell(\phi, z) = \nabla_ag(\phi_1z, \phi_2, z)z^{\top}. 
    \end{align}
    Multiplying the first equation on the right by $z^{\top}$ we get 
    \begin{align}
        \nabla_z \ell(\phi, z)z^{\top} = \phi_1^{\top}\nabla_ag(\phi_1 z, \phi_2, z)z^{\top} + \nabla_cg(\phi_1z,\phi_2,z)z^{\top} \\
    \iff \nabla_z \ell(\phi, z)z^{\top} = \phi_1^{\top}\nabla_{\phi_1}\ell(\phi,z) + \nabla_cg(\phi_1z,\phi_2,z)z^{\top} \\
    \iff \nabla_z \ell(\phi, z) = \frac{\phi_1^{\top}\nabla_{\phi_1}\ell(\phi,z)z}{\|z\|_{2}^{2}} + \nabla_cg(\phi_1z,\phi_2,z).
    \end{align}
    Now, taking norms and using our assumption we have: 
    \begin{align}
        \|\nabla_{z} \ell(\phi,z)\|_{2} \leq \frac{\|\phi_1\|}{\|z\|}\|\nabla_{\phi_1}\ell(\phi,z)\| + C_2 \ell(\phi, z). 
    \end{align}\end{proof}
Before going further, we remark on the significance of Lemma \ref{lem: input to parameters conversion lemma}. In machine learning, we optimize the loss with respect to the parameters in the model $\phi$. However, since we are interested in the effect of the inclusion of a single training sample, we are more interested in the behaviour of the loss with respect to its \emph{inputs} $z$. Lemma \ref{lem: input to parameters conversion lemma} gives us a way to relate these two, specifically relating the sizes of the input and parameter gradients of the loss. In particular, it says that the input gradients can be (roughly) controlled by the parameter gradients, which we explicitly control via optimization. 

Using the cubic envelope from Lemma \eqref{lem: cubic envelope lemma} we may bound the memorization term $M(z_i, S)$ from \eqref{eq: memorization bounded above by cubic envelope}. 

\begin{lemma} \label{lem: bound on mem}
    Let $S$ be a finite dataset and let $S' \subseteq S$ and let $z_i \in S \setminus S'$. Then under assumptions \eqref{ass: hessian is lipschitz} and \eqref{ass: single point removal bound}, for any $z_j \neq z_i \in S'$ we have: 
    \begin{align}
        \mathbb{E}_{\phi}[\ell(h^{\phi}_{S}, z_i)] - \mathbb{E}_{\phi'}[\ell(h^{\phi'}_{S'}, z_j)] &\leq C + \lvert\langle \mathbb{E}_{\phi \sim \mathcal{A}(S')}[\nabla_z\ell(h^{\phi'}_{S'}, z_i)], z_j - z_i\rangle \rvert \notag \\
        &+\lvert (z_j - z_i)^{\top}\mathbb{E}_{\phi \sim \mathcal{A}(S')}[\nabla^{2}_{z}\ell(h^{\phi'}_{S'}, z_i)](z_j-z_i)\rvert + \frac{\rho}{6}\|z_j-z_i\|^{3}_{2}. \label{eq: memorization bounded above by cubic envelope}
    \end{align}
\end{lemma}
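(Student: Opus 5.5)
The plan is to compare the two expectations through the leave-one-out loss at the point $z_i$. We add and subtract $\mathbb{E}_{\phi'\sim\mathcal{A}(S')}[\ell(h^{\phi'}_{S'},z_i)]$ and split
\begin{align*}
\mathbb{E}_{\phi}[\ell(h^{\phi}_{S}, z_i)] - \mathbb{E}_{\phi'}[\ell(h^{\phi'}_{S'}, z_j)]
&= \Big(\mathbb{E}_{\phi}[\ell(h^{\phi}_{S}, z_i)] - \mathbb{E}_{\phi'}[\ell(h^{\phi'}_{S'}, z_i)]\Big) \\
&\quad + \Big(\mathbb{E}_{\phi'}[\ell(h^{\phi'}_{S'}, z_i)] - \mathbb{E}_{\phi'}[\ell(h^{\phi'}_{S'}, z_j)]\Big).
\end{align*}
The second bracket involves a single model $\phi'$ evaluated at two nearby inputs, so it is the natural place to apply the cubic envelope. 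The first bracket is to be absorbed into the constant $C$.

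For the first bracket, the plan is to invoke Assumption~\ref{ass: single point removal bound}. As stated, that assumption controls the difference between the full-model loss at $z_i$ and the leave-one-out loss at $z_j$, not at $z_i$. I would therefore read Assumption~\ref{ass: single point removal bound} as allowing $z_j$ to range over all of $S$, including $z_i$ itself (equivalently, as a uniform bound on removal effects). Under that reading the first bracket is at most $C$. I expect this to be the main obstacle: strictly, the assumption is phrased only for $z_j\in S^{\setminus i}$. If that literal reading must be kept, the fallback is to choose an anchor point in $S'$, bound its contribution by $C$, and expand the remaining difference exactly as in the next step. Either way the lemma's $C$ is meant to be the assumption's constant, possibly after a relabelling.

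For the second bracket, fix $\phi'$ and apply Lemma~\ref{lem: cubic envelope lemma} with base point $z=z_i$ and displacement $z'=z_j$. This gives
\[
\ell(\phi',z_i)-\ell(\phi',z_j) \le -\langle\nabla_z\ell(\phi',z_i),z_j-z_i\rangle - (z_j-z_i)^\top\nabla_z^2\ell(\phi',z_i)(z_j-z_i) + \tfrac{\rho}{6}\|z_j-z_i\|^3 .
\]
The sign conventions in the lemma as printed are slightly loose, so I would simply take absolute values of the linear and quadratic terms; this is harmless because the target bound already contains absolute values. Next, take the expectation over $\phi'\sim\mathcal{A}(S')$, which commutes with the inner product and the quadratic form by linearity. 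This produces exactly the gradient and Hessian expectations appearing in \eqref{eq: memorization bounded above by cubic envelope}. If $\rho$ depends on $\phi'$, one uses $\sup\rho$ or $\mathbb{E}\rho$ in the cubic term.

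Finally, add the two pieces: $C$ from the first bracket, and the first-, second- and third-order terms from the second bracket. Since $|\mathbb{E}X|\le \mathbb{E}|X|$ is not even needed, this yields the stated inequality directly.
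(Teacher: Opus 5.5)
\textbf{Gap in the first bracket.} You need $\mathbb{E}_{\phi}[\ell(h^{\phi}_{S},z_i)]-\mathbb{E}_{\phi'}[\ell(h^{\phi'}_{S'},z_i)]\le C$. Assumption~\ref{ass: single point removal bound} does not give this. It only compares the full model at $z_i$ with the leave-one-out model at a \emph{different} point $z_j\in S^{\setminus i}$, and says nothing about the leave-one-out model evaluated at $z_i$ itself. Letting $z_j=z_i$ is not a harmless reading. The quantity it would control is exactly the counterfactual memorization $M(z_i,S)$. Under that reading the final bound of Theorem~\ref{thm: main} reduces to $|M(z_i;S)|\le C$, so the lemma would be assuming what it is invoked to prove. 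Your fallback with an anchor $z_k\in S'$ does not repair this. The leftover same-model difference between $z_k$ and $z_j$ produces Taylor terms in $z_k-z_i$ or $z_k-z_j$ that are not in the statement, unless $k=j$, in which case there is nothing left to expand.

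\textbf{What the statement actually needs.} The left-hand side as printed is precisely the quantity controlled by Assumption~\ref{ass: single point removal bound} (with $S'=S^{\setminus i}$, the only case used later; otherwise you must, like the paper, extend the assumption to $S'$). So it is at most $C$ outright, and the three extra terms are nonnegative.

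The paper's proof, however, ends with $z_i$ in the second expectation, and Theorem~\ref{thm: main} uses the lemma to bound $M(z_i,S)$. The $z_j$ on the left is a typo; the real content is a bound on $\mathbb{E}_{\phi}[\ell(h^{\phi}_{S},z_i)]-\mathbb{E}_{\phi'}[\ell(h^{\phi'}_{S'},z_i)]$. For that quantity your split must run in the opposite direction. Write it as the sum of two pieces:
\begin{itemize}
\item the cross term $\mathbb{E}_{\phi}[\ell(h^{\phi}_{S},z_i)]-\mathbb{E}_{\phi'}[\ell(h^{\phi'}_{S'},z_j)]$, which the assumption does bound by $C$;
\item the same-model term $\mathbb{E}_{\phi'}[\ell(h^{\phi'}_{S'},z_j)]-\mathbb{E}_{\phi'}[\ell(h^{\phi'}_{S'},z_i)]$, which the cubic envelope about $z_i$ bounds by the Taylor terms.
\end{itemize}
The other sides of the envelope and of the assumption give the reverse inequality, hence the absolute value.

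Your handling of the same-model term is fine, provided you take the expectation of the signed inequality before passing to absolute values. Taking absolute values pointwise first would leave $\mathbb{E}|\cdot|$ rather than the stated $|\mathbb{E}\,\cdot|$.
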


\begin{proof}
    Let $\phi' \sim \mathcal{A}(S')$. For $z_j, z_i$, we have, using Lemma \ref{lem: cubic envelope lemma} that: 
    \begin{align}
        -\ell(h^{\phi'}_{S'},z_j) \geq - \ell(h^{\phi'}_{S'}, z_i) - \langle \nabla_z\ell(h^{\phi'}_{S'}, z_i), z_j - z_i\rangle - (z_j - z_i)^{\top}\nabla^{2}_{z}\ell(h^{\phi'}_{S'}, z_i)(z_j-z_i) - \frac{\rho}{6}\|z_j-z_i\|^{3}_{2}, \\
        \ell(h^{\phi'}_{S'},z_j) \leq \ell(h^{\phi'}_{S'}, z_i) + \langle \nabla_z\ell(h^{\phi'}_{S'}, z_i), z_j - z_i\rangle + (z_j - z_i)^{\top}\nabla^{2}_{z}\ell(h^{\phi'}_{S'}, z_i)(z_j-z_i) + \frac{\rho}{6}\|z_j-z_i\|^{3}_{2}. 
    \end{align}
    Using the monotonicity of expectation we therefore get: 
    \begin{align}
        -\mathbb{E}_{\phi \sim \mathcal{A}(S')}[\ell(h^{\phi'}_{S'},z_j)] &\geq - \mathbb{E}_{\phi \sim \mathcal{A}(S')}[\ell(h^{\phi'}_{S'}, z_i)] - \langle \mathbb{E}_{\phi \sim \mathcal{A}(S')}[\nabla_z\ell(h^{\phi'}_{S'}, z_i)], z_j - z_i\rangle \\
        &- (z_j - z_i)^{\top}\mathbb{E}_{\phi \sim \mathcal{A}(S')}[\nabla^{2}_{z}\ell(h^{\phi'}_{S'}, z_i)](z_j-z_i) - \frac{\rho}{6}\|z_j-z_i\|^{3}_{2}, \\
        \mathbb{E}_{\phi \sim \mathcal{A}(S')}[\ell(h^{\phi'}_{S'},z_j)] &\leq \mathbb{E}_{\phi \sim \mathcal{A}(S')}[\ell(h^{\phi'}_{S'}, z_i)] + \langle \mathbb{E}_{\phi \sim \mathcal{A}(S')}[\nabla_z\ell(h^{\phi'}_{S'}, z_i)], z_j - z_i\rangle \\
        &+ (z_j - z_i)^{\top}\mathbb{E}_{\phi \sim \mathcal{A}(S')}[\nabla^{2}_{z}\ell(h^{\phi'}_{S'}, z_i)](z_j-z_i) + \frac{\rho}{6}\|z_j-z_i\|^{3}_{2}.
    \end{align}
    Plugging these into the two-sided bound implied by Assumption \ref{ass: single point removal bound} we get: 
    \begin{align}
        C &\geq \mathbb{E}_{\phi}[\ell(h^{\phi}_{S}, z_i)] - \mathbb{E}_{\phi'}[\ell(h^{\phi'}_{S'}, z_j)] \\
        &\geq \mathbb{E}_{\phi}[\ell(h^{\phi}_{S}, z_i)] - \mathbb{E}_{\phi \sim \mathcal{A}(S')}[\ell(h^{\phi'}_{S'}, z_i)] - \langle \mathbb{E}_{\phi \sim \mathcal{A}(S')}[\nabla_z\ell(h^{\phi'}_{S'}, z_i)], z_j - z_i\rangle \\
        &- (z_j - z_i)^{\top}\mathbb{E}_{\phi \sim \mathcal{A}(S')}[\nabla^{2}_{z}\ell(h^{\phi'}_{S'}, z_i)](z_j-z_i) - \frac{\rho}{6}\|z_j-z_i\|^{3}_{2}.
    \end{align}
 By rearranging, we have 
    \begin{align}
        \mathbb{E}_{\phi}[\ell(h^{\phi}_{S}, z_i)] - \mathbb{E}_{\phi'}[\ell(h^{\phi'}_{S'}, z_j)] &\leq C + \langle \mathbb{E}_{\phi \sim \mathcal{A}(S')}[\nabla_z\ell(h^{\phi'}_{S'}, z_i)], z_j - z_i\rangle \\
        &+(z_j - z_i)^{\top}\mathbb{E}_{\phi \sim \mathcal{A}(S')}[\nabla^{2}_{z}\ell(h^{\phi'}_{S'}, z_i)](z_j-z_i) + \frac{\rho}{6}\|z_j-z_i\|^{3}_{2}.
    \end{align}
    Using the same tactic, we may prove a lower bound for $\mathbb{E}_{\phi}[\ell(h^{\phi}_{S}, z_i)] - \mathbb{E}_{\phi'}[\ell(h^{\phi'}_{S'}, z_j)]$ using the negative of the term above. This yields a bound for the absolute value, and recalling the definition of $M$ yields the bound \eqref{eq: memorization bounded above by cubic envelope}. 
\end{proof}

We can now and prove our main result \ref{thm: main}. 

\begin{proof}
    First, we construct the high probability event under which the bound should hold. Set the threshold in assumption \ref{ass: first order optimizer} to be $o(n^{-1/2})$ where $n$ is the size of the dataset. Then fix a threshold $1-\delta(n)$ such that both events under which assumptions \ref{ass: single point removal bound} and \ref{ass: bounded weights} hold. In this high probability event we apply Lemma \ref{lem: bound on mem} to the set $S' = S^{\setminus i} := S \setminus \{z_i\}$ to obtain \eqref{eq: memorization bounded above by cubic envelope}: 
\begin{align}
    M(z_i, S) &\leq C + \underbrace{{\lvert\langle \mathbb{E}_{\phi \sim \mathcal{A}(S^{\setminus i})}[\nabla_z\ell(h^{\phi'}_{S^{\setminus i}}, z_i)], z_j - z_i\rangle \rvert}}_{A} \notag \\
    &+ \underbrace{{\lvert (z_j - z_i)^{\top}\mathbb{E}_{\phi \sim \mathcal{A}(S^{\setminus i})}[\nabla^{2}_{z}\ell(h^{\phi'}_{S^{\setminus i}}, z_i)](z_j-z_i)\rvert + \frac{\rho}{6}\|z_j-z_i\|^{3}_{2}}}_{B}. \notag \tag{\ref{eq: memorization bounded above by cubic envelope}}
\end{align}

    To bound {\color{blue} (A)}, we use Cauchy-Schwarz and Jensen's inequality: 
    \begin{align}
        \lvert\langle \mathbb{E}_{\phi \sim \mathcal{A}(S^{\setminus i})}[\nabla_z\ell(h^{\phi'}_{S^{\setminus i}}, z_i)], z_j - z_i\rangle \rvert \leq \mathbb{E}_{\phi \sim \mathcal{A}(S^{\setminus i})}[\|\nabla_z\ell(h^{\phi'}_{S^{\setminus i}}, z_i) \|_{2}] \|z_j-z_i\|_{2}. \label{eq: bounding the first order term in cubic envelope}
    \end{align}
    We now apply Lemma \ref{lem: cubic envelope on gradient} on $\nabla_z\ell(h^{\phi'}_{S^{\setminus i}}, z_i) \|_{2}$ and take expectations to obtain: 
    \begin{align}
        \mathbb{E}_{\phi \sim \mathcal{A}(S^{\setminus i})}[\|\nabla_z\ell(h^{\phi'}_{S^{\setminus i}}, z_i) \|_{2}] \|z_j-z_i\|_{2} &\leq \mathbb{E}_{\phi \sim \mathcal{A}(S^{\setminus i})}[\|\nabla_z\ell(h^{\phi'}_{S^{\setminus i}}, z_j) \|_{2}]\|z_j-z_i\|_{2} \label{eq: bound containing input gradient}\\
        &+ \mathbb{E}_{\phi' \sim \mathcal{A}(S^{\setminus i})}[\|\nabla^{2}_{z}\ell(h^{\phi'}_{S^{\setminus i}},z_i)\|_{2\to 2}]\|z_j - z_i\|_{2}^{2} + \frac{\rho}{2}\|z_j-z_i\|_{2}^{3}. \label{eq: cauchy schwarz bound on first order stuff}
    \end{align}
    We now apply Lemma \ref{lem: input to parameters conversion lemma} to \eqref{eq: bound containing input gradient} and thus get: 
    \begin{align}
        \mathbb{E}_{\phi \sim \mathcal{A}(S^{\setminus i})}[\|\nabla_z\ell(h^{\phi'}_{S^{\setminus i}}, z_j) \|_{2}]\|z_j-z_i\|_{2} &\leq \Bigg(\mathbb{E}_{\phi' \sim \mathcal{A}(S^{\setminus i})}\Bigg[\frac{\|\phi'_1\|}{\|z\|}\|\nabla_{\phi'_1}\ell(h^{\phi'}_{S^{\setminus i}},z_j)\| + 2\beta \ell(h^{\phi'}, z_j)\Bigg]\Bigg) \|z_j-z_i\|_{2} \\
        &= \mathbb{E}_{\phi' \sim \mathcal{A}(S^{\setminus i}})\left(\frac{B}{\min_{k} \|z_k\|}\varepsilon_{\mathrm{grad}} + 2\beta \varepsilon_{\mathrm{loss}}\right)\|z_j - z_i\|_{2}\label{eq: bound on first term in first order term}
    \end{align}
    The last equality follows due to assumption \ref{ass: first order optimizer} and \ref{ass: bounded weights} which results in both the loss and the gradients of the loss of a trained model at a training sample to be small (compared to the parameters and the nearest neighbor distance $\|z_j-z_i\|$). Putting \eqref{eq: bound on first term in first order term} with \eqref{eq: bound containing input gradient} we get: 
    \begin{align}
        \lvert\langle \mathbb{E}_{\phi \sim \mathcal{A}(S^{\setminus i})}[\nabla_z\ell(h^{\phi'}_{S^{\setminus i}}, z_i)], z_j - z_i\rangle \rvert &\leq \mathbb{E}_{\phi' \sim \mathcal{A}(S^{\setminus i})}[\|\nabla^{2}_{z}\ell(h^{\phi'}_{S^{\setminus i}},z_i)\|_{2\to 2}]\|z_j - z_i\|_{2}^{2} \\
        &+ \mathbb{E}_{\phi' \sim \mathcal{A}(S^{\setminus i}})\left(\frac{B}{\min_{k} \|z_k\|}\varepsilon_{\mathrm{grad}} + 2\beta \varepsilon_{\mathrm{loss}}\right)\|z_j - z_i\|_{2} + O(\|z_j-z_i\|_{2}^{3}). \label{eq: due to optimization, first order info is bounded above by hessian} 
    \end{align}
    To bound {\color{orange}(B)} we use an operator norm upper bound along with Jensen's inequality to bound the quadratic term in \eqref{eq: memorization bounded above by cubic envelope} also by the operator norm of the hessian appearing in \eqref{eq: due to optimization, first order info is bounded above by hessian}: 
    \begin{align}
    \lvert (z_j - z_i)^{\top}\mathbb{E}_{\phi' \sim \mathcal{A}(S^{\setminus i})}[\nabla^{2}_{z}\ell(h^{\phi'}_{S^{\setminus i}}, z_i)](z_j-z_i)\rvert  &= \lvert (z_j - z_i)^{\top}\mathbb{E}_{\phi' \sim \mathcal{A}(S^{\setminus i})}[\nabla^{2}_{z}\ell(h^{\phi'}_{S^{\setminus i}}, z_i)(z_j-z_i)]\rvert\\
    &\leq \| \mathbb{E}_{\phi \sim \mathcal{A}(S^{\setminus i})}[\nabla^{2}_{z}\ell(h^{\phi'}_{S^{\setminus i}}, z_i)(z_j-z_i)] \| \|z_j - z_i\| \\
    &\leq \mathbb{E}_{\phi' \sim \mathcal{A}(S^{\setminus i})}[\|\nabla^{2}_{z}\ell(h^{\phi'}_{S^{\setminus i}},z_i)\|_{2\to 2}]\|z_j - z_i\|_{2}^{2} \\
    & \leq \mathbb{E}_{\phi' \sim \mathcal{A}(S^{\setminus i})}[\|\nabla^{2}_{z}\ell(h^{\phi'}_{S^{\setminus i}},z_i)\|^{2}_{F}]\|z_j - z_i\|_{2}^{2}\label{eq: upper bound on quadratic term in cubic envelope}
    \end{align}
    Plugging \eqref{eq: due to optimization, first order info is bounded above by hessian} and \eqref{eq: upper bound on quadratic term in cubic envelope} yields \eqref{eq: main-bound}.
\end{proof}

\subsection{Proof of Proposition \ref{prop: for denoisers C is O(n)}}

For brevity we set $z = z_i, z':= z_j$, and $S' = S^{\setminus i}$. First, since the function class $h^{\phi}$ contains linear denoisers, the performance of $h^{\phi}$ under $\mathcal{A}(S)$ is \emph{better} than any linear denoiser. However, the best linear denoiser attains an MSE dependent on $T, f, g$ \cite{weitzner2025diffusion} that we term $A(T)$. Thus $\ell^*(S) \leq A(T)$. We may now decompose the difference in assumption \ref{ass: single point removal bound} as follows by introducing the rest of the dataset $S, S'$: 

\begin{align*}
    \ell(h^{\phi}, z) - \ell(h^{\phi'}, z') 
    &= |S|\left(\ell(h^{\phi}, S) - \ell^*(S)\right) - |S'|\left(\ell(h^{\phi'}, S') - \ell^*(S')\right) + |S'|\left(\ell^*(S) - \ell^*(S')\right) \notag \\
    &+ |S \setminus S'|\ell^*(S) + \sum_{z_k \in S', z_k \neq z'} \ell(h^{\phi'}, z_k) - \sum_{z_k \in S, z_k \neq z} \ell(h^{\phi}, z_k) \notag \\
    &\leq {|S|\left(\ell(h^{\phi}, S) - \ell^*(S)\right)} 
    - {|S'|\left(\ell(h^{\phi'}, S') - \ell^*(S')\right)} 
    + |S'|{\left(\ell^*(S) - \ell^*(S')\right)} \\
    &+ |S - S'|\ell^*(S). \notag 
\end{align*}

Here the inequality on the third line follows because of the following calculation: 
\begin{align}
    &\sum_{z_k \in S', z_k \neq z'} \ell(h^{\phi'}, z_k) - \sum_{z_k \in S, z_k \neq z} \ell(h^{\phi}, z_k) \notag \\&\leq \sum_{z_k \in S'} \ell(h^{\phi'}, z_k) - \sum_{z_k \in S'} \ell(h^{\phi}, z_k) = |S'|(\ell(h^{\phi'}, S') - \ell(h^{\phi}, S')) \leq 0. \notag
\end{align}
Using the PL-inequality in assumption \ref{ass: local PL}, the bound $\ell^*(S) \leq A(T)$ and the fact that $l^*(S') \geq 0$ we get: 
\begin{align}
    \ell(h^{\phi}, z) - \ell(h^{\phi'}, z')  \leq \frac{n}{\mu}\varepsilon_{\mathrm{grad}} + nA(T). 
\end{align}

\section{Experimental Details}
\label{app: experimental details}
Here we describe our experimental details. Note that we also conducted analogous versions of our experiments for \emph{flow-matching} which we describe below. 

Flow matching \cite{lipman2023flow,liu2023flow} learns a time-dependent velocity field $v_\theta(x,t)$ whose flow transports a noise distribution $p_0$ to the data distribution $p_1$. Given $X_0 \sim p_0$ and $X_1 \sim p_1$, one forms the linear interpolant $X_t = (1-t)X_0 + tX_1$ for $t\in[0,1]$, whose conditional velocity is $\tfrac{d}{dt}X_t = X_1 - X_0$. The field is trained by regressing onto this target,
\begin{align}
  \ell(\phi, x) = \mathbb{E}_{t,\,X_0,\,X_1}\left[\, \big\| v_\theta(X_t, t) - (X_1 - X_0) \big\|_2^2 \, \mid X_0 = x\right],
\end{align}
and the minimizer is the marginal velocity $v^\star(x,t)=\mathbb{E}[X_1 - X_0 \mid X_t = x]$. Samples are then generated by integrating the ODE $\dot{x}_t = v_\theta(x_t,t)$ from $t=0$ to $t=1$.

We conduct numerical experiments on the Imagenet and CelebA datasets. We use ImageNet-1k \cite{deng2009imagenet, russakovsky2015imagenet} and convert to RGB downscaled images of size $d=64\times 64$. For Imagenet, we train on $n\in\{128,256,512,1024,2048,4096,10240\}$. We convert the CelebA \cite{liu2015deep} face images to grayscale and downsample them to size $d=32\times 32$. We use a training set size of $n\in\{128,10240\}$ for CelebA, yielding a memorizing vs. generalizing model. All our score models have a U-Net architecture \cite{ronneberger2015unet} with four resolution levels,  base channel width $W=64$, and channel multipliers 1, 2, 4, and 4 respectively. Self-attention is applied at the two lowest-resolution levels, and each level has two residual blocks in the encoder and three in the decoder. The models are conditioned on diffusion time $t$  through the sinusoidal embedding \cite{vaswani2017attention} in every residual block, so a single network serves all noise levels. We train the network in one of three ways: DDPM \cite{ho2020denoising}with either noise prediction ($\epsilon$-prediction) or clean-sample prediction ($x_0$-prediction), or flow matching \cite{lipman2023flow}. The DDPM noise prediction predicts the injected Gaussian noise at diffusion time $t$ using the Adam optimizer \cite{kingma2014adam} with a fixed batch size of 128. DDPM sample prediction takes the noisy images $x_t$ and time $t$ and predicts the clean image $x_0$ directly using loss $\mathrm{MSE}(\hat{x_0}(x_t,t),x_0)$. DDPM models are sampled using DDIM with 100 steps and stochasticity parameter $\eta = 0$, starting from $x_T \sim \mathcal{N}(0,I)$. For the noise prediction models, the predicted noise $\hat{\epsilon}$ is converted to a clean image estimate $\hat{x_0} = (x_t-\sqrt{(1-\bar{\alpha}_t)}\hat{\epsilon})/\sqrt{\bar{\alpha}_t}$ before stepping to the next time. The sample prediction models use the same DDIM sampler with the network output taken as the clean image estimate $\hat{x_0}$ directly. The flow matching networks predict the velocity $u=x_0-\epsilon$ along the linear path $x_t=(1-\tau)\epsilon+\tau x_0$, where $\epsilon \sim \mathcal{N}(0,I)$ is the noise, $x_0$ is the training image, and $\tau$ is the normalized time. Training minimizes $\mathrm{MSE}(\hat{v}(x_t,t),u)$, and sampling integrates $\tfrac{\mathrm{d}x}{\mathrm{d}\tau}=\hat{v}$ from $\tau=0$ (noise) to $\tau = 1$ (data) with 100 Euler steps.

\section{Generated Samples}
\label{app: generations}
Figures \ref{fig:nn_samples_celeba_ddpm_128}-\ref{fig:nn_samples_imagenet_fm_1024_leave10out} shows generated samples at the pre-memorization, beginning of memorization, and full memorization stages with their nearest training neighbor for DDPM noise prediction and flow matching on CelebA with $n\in\{128,10240\}$ as well as the leave-10-out models for Imagenet with $n\in\{128,1024\}$.

\begin{figure}[h]
    \centering
    \includegraphics[width=0.9\linewidth]{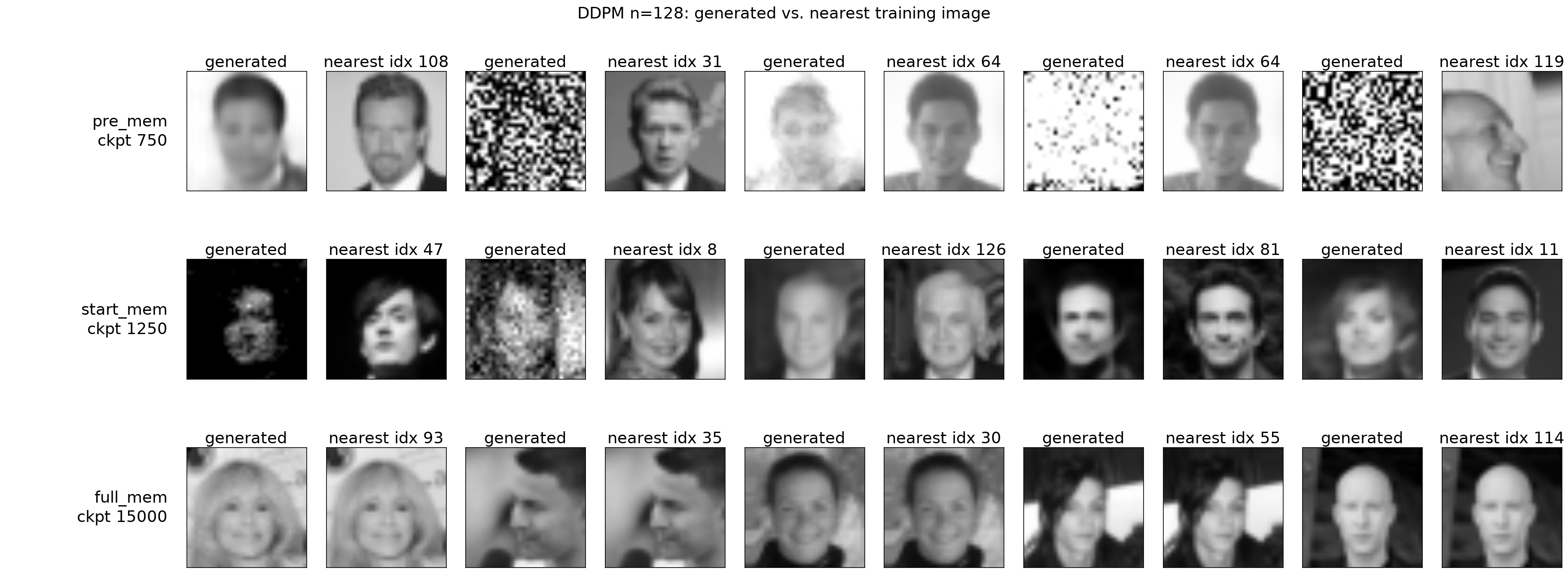}
    \caption{\textbf{Generated samples vs. nearest training image for CelebA (n=128) DDPM}: generated samples at the pre-memorization, start of memorization, and full memorization stages alongside their nearest neighbor in the training set.}
    \label{fig:nn_samples_celeba_ddpm_128}
\end{figure}

\begin{figure}[h]
    \centering
    \includegraphics[width=0.9\linewidth]{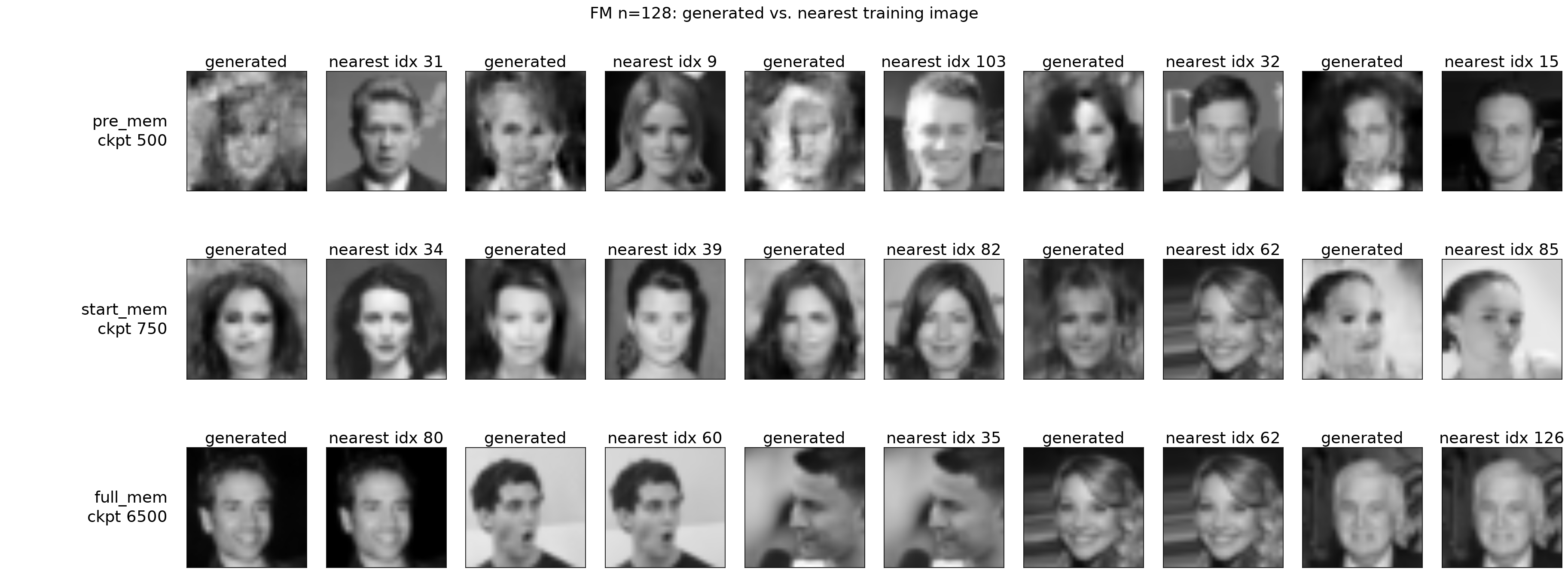}
    \caption{\textbf{Generated samples vs. nearest training image for CelebA (n=128) flow matching}: generated samples at the pre-memorization, start of memorization, and full memorization stages alongside their nearest neighbor in the training set.}
    \label{fig:nn_samples_celeba_fm_128}
\end{figure}

\begin{figure}[h]
    \centering
    \includegraphics[width=0.9\linewidth]{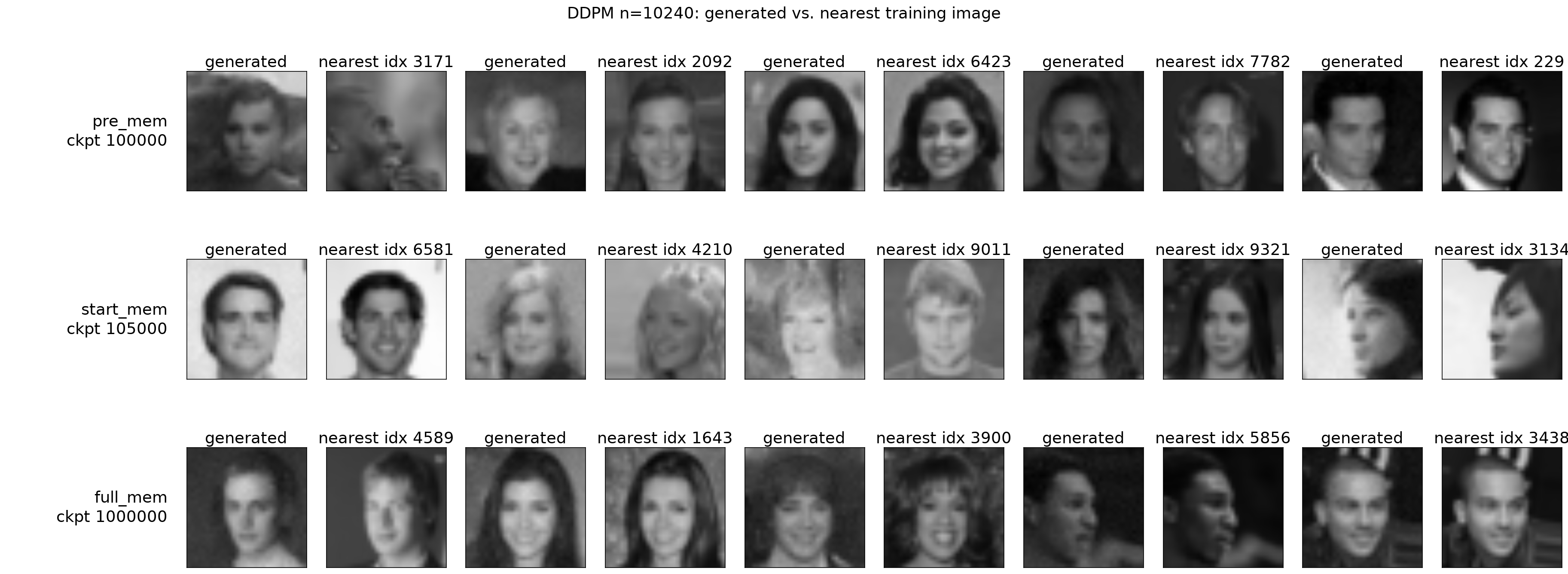}
    \caption{\textbf{Generated samples vs. nearest training image for CelebA (n=10240) DDPM}: generated samples at the pre-memorization, start of memorization, and full memorization stages alongside their nearest neighbor in the training set.}
    \label{fig:nn_samples_celeba_ddpm_10240}
\end{figure}

\begin{figure}[h]
    \centering
    \includegraphics[width=0.9\linewidth]{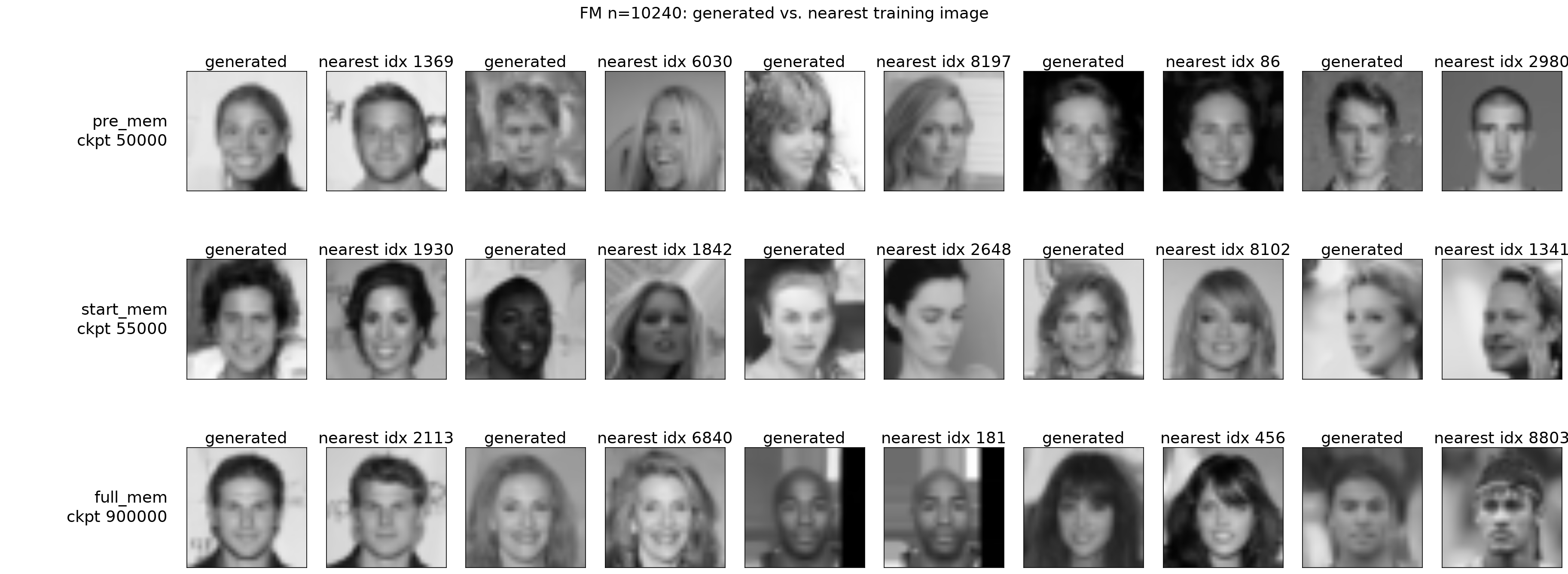}
    \caption{\textbf{Generated samples vs. nearest training image for CelebA (n=10240) flow matching}: generated samples at the pre-memorization, start of memorization, and full memorization stages alongside their nearest neighbor in the training set.}
    \label{fig:nn_samples_celeba_fm_10240}
\end{figure}

\begin{figure}[h]
    \centering
    \includegraphics[width=0.9\linewidth]{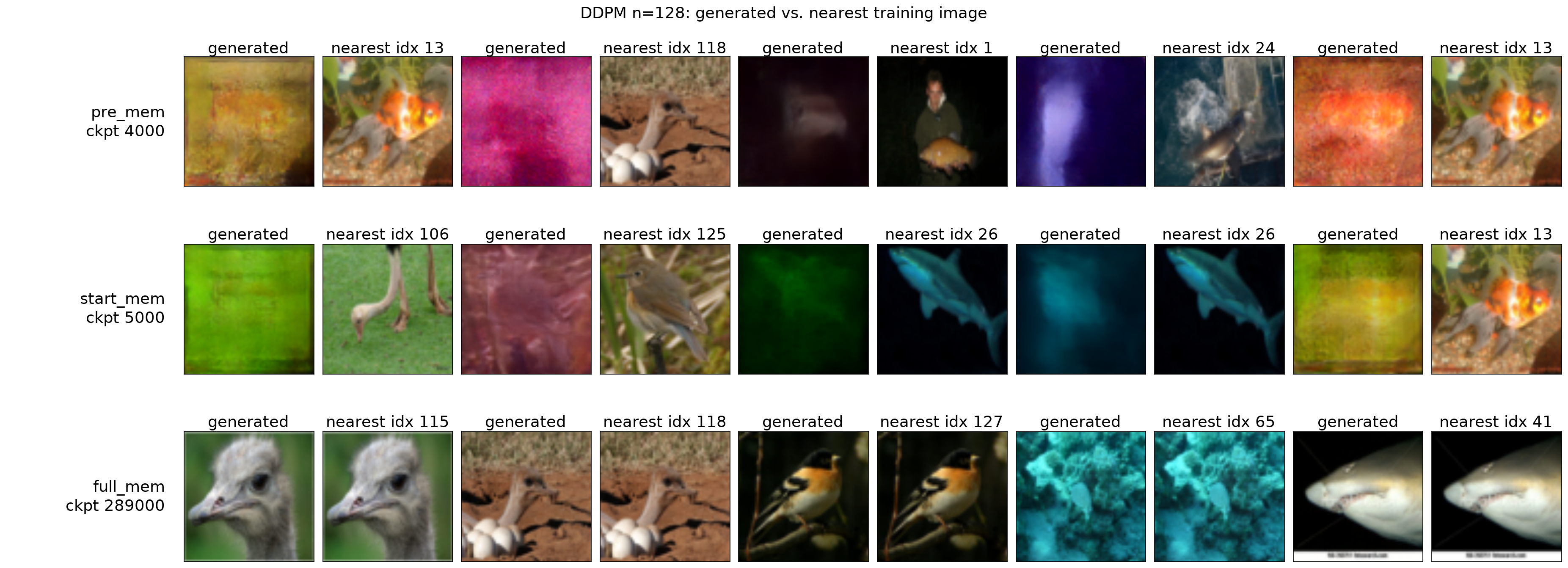}
    \caption{\textbf{Generated samples vs. nearest training image for Imagenet (n=128) DDPM leave-10-out model}: generated samples at the pre-memorization, start of memorization, and full memorization stages alongside their nearest neighbor in the training set.}
    \label{fig:nn_samples_imagenet_ddpm_128_leave10out}
\end{figure}

\begin{figure}[h]
    \centering
    \includegraphics[width=0.9\linewidth]{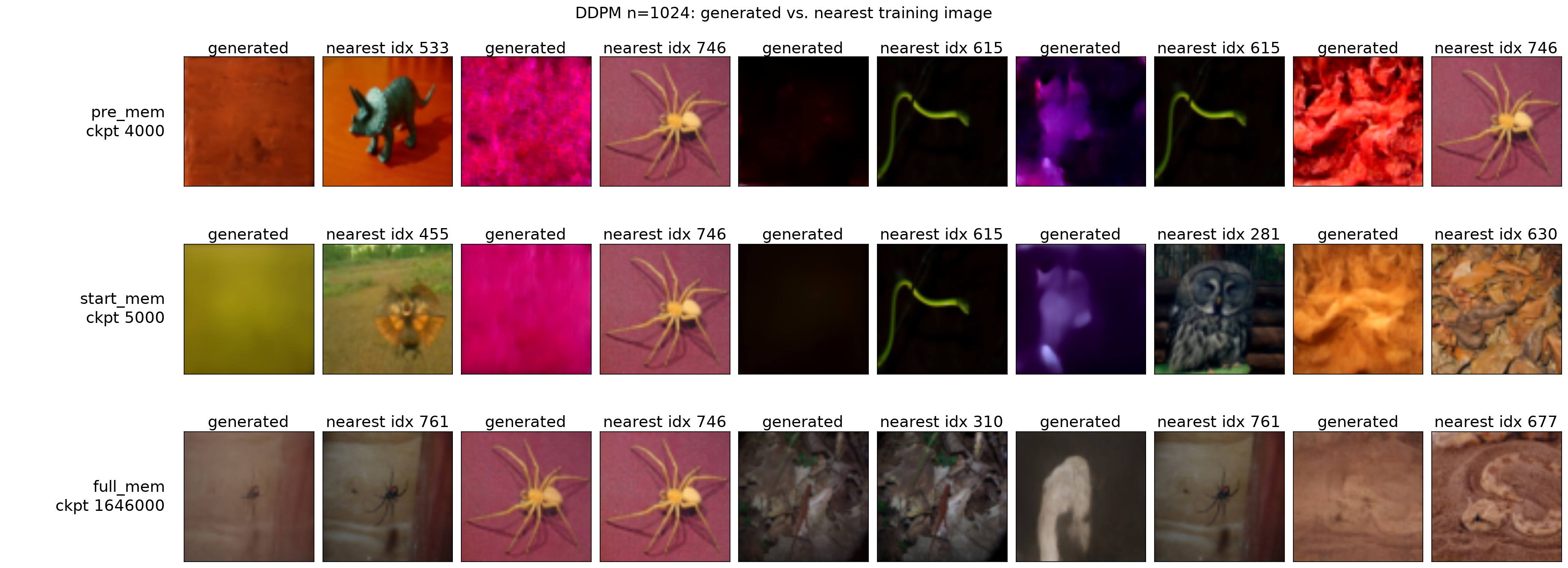}
    \caption{\textbf{Generated samples vs. nearest training image for Imagenet (n=1024) DDPM leave-10-out model}: generated samples at the pre-memorization, start of memorization, and full memorization stages alongside their nearest neighbor in the training set.}
    
    \label{fig:nn_samples_imagenet_ddpm_1024_leave10out}
\end{figure}

\begin{figure}[h]
    \centering
    \includegraphics[width=0.9\linewidth]{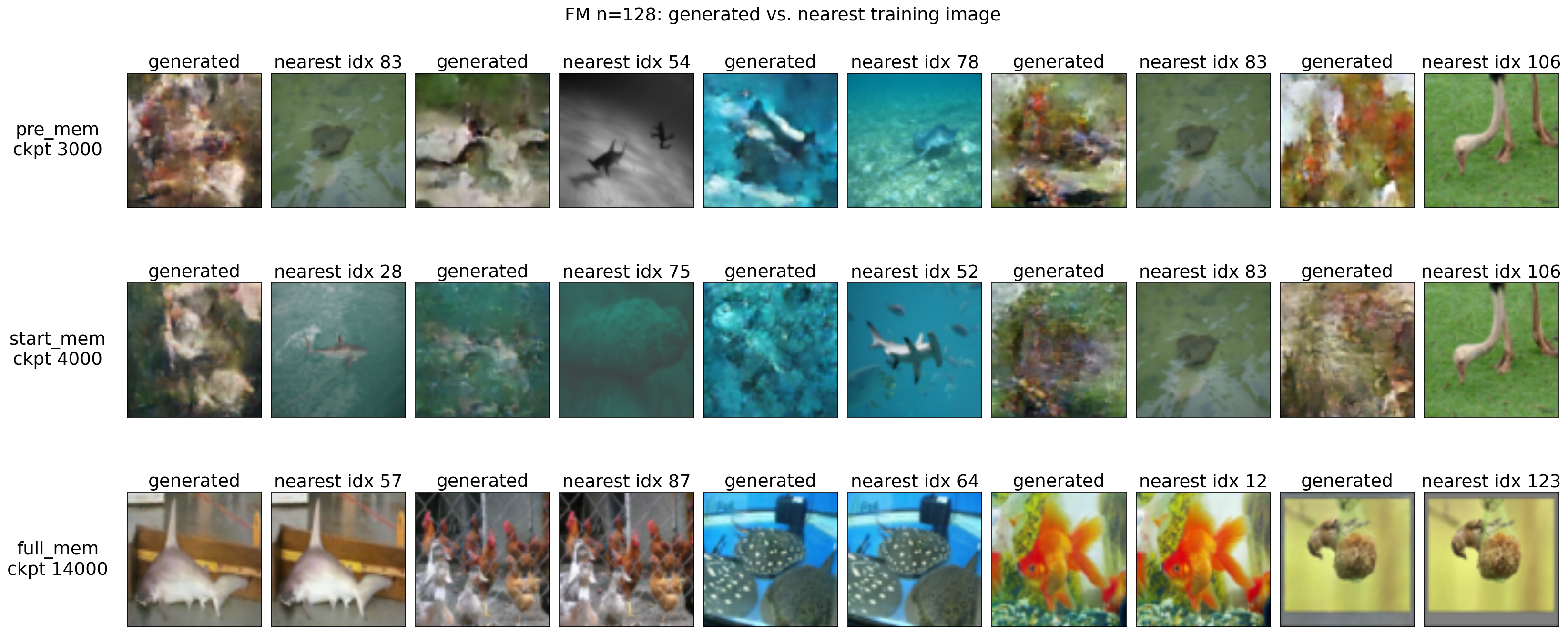}
    \caption{\textbf{Generated samples vs. nearest training image for Imagenet (n=128) flow matching leave-10-out model}: generated samples at the pre-memorization, start of memorization, and full memorization stages alongside their nearest neighbor in the training set.}
    \label{fig:nn_samples_imagenet_fm_128_leave10out}
\end{figure}

\begin{figure}[h]
    \centering
    \includegraphics[width=0.9\linewidth]{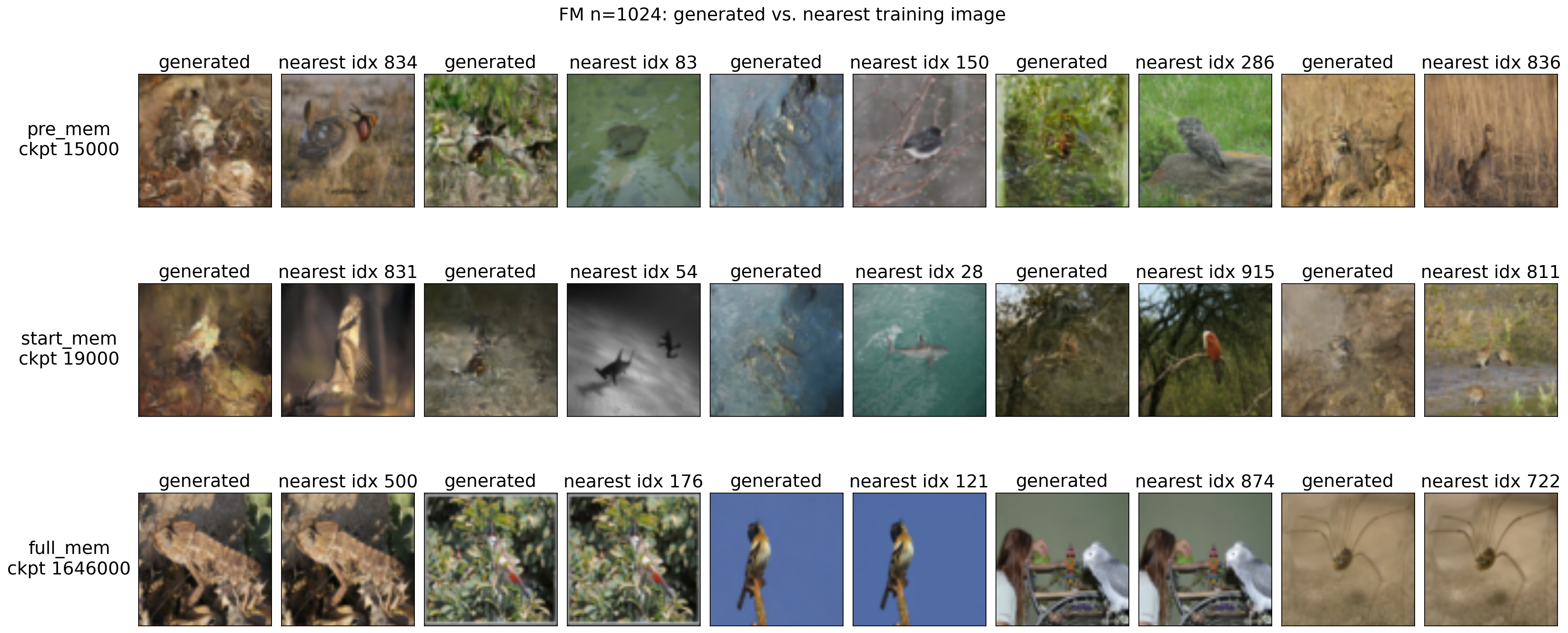}
    \caption{\textbf{Generated samples vs. nearest training image for Imagenet (n=1024) flow matching leave-10-out model}: generated samples at the pre-memorization, start of memorization, and full memorization stages alongside their nearest neighbor in the training set.}
    \label{fig:nn_samples_imagenet_fm_1024_leave10out}
\end{figure}

Figures~\ref{fig:imagenet-stages-n128}--\ref{fig:imagenet-stages-fm-n1024}
show generated samples and their nearest training images for models trained
on all $n$ images: DDPM noise prediction with
$n\in\{128,256,512,1024,2048,4096\}$ and flow matching with
$n\in\{128,1024\}$. Each row contains five generated--training image pairs.
All memorization fractions are measured over $10{,}000$ generations.

For models with sustained memorization, pre\_mem denotes the last evaluated
checkpoint with $f_{\mathrm{mem}}=0$ before onset, and start\_mem denotes
the first checkpoint after which $f_{\mathrm{mem}}$ stays above zero.
Full memorization (full\_mem) denotes the first evaluated checkpoint with
$f_{\mathrm{mem}}\geq95\%$. For DDPM with $n=512$ and $n=1024$, which
never reach this threshold, peak\_mem denotes the highest observed
memorization fraction. For DDPM with $n=4096$, the two rows instead show
a zero-copy checkpoint followed by an isolated copy event.

\begin{figure}[!htbp]
    \centering
    \includegraphics[width=0.9\linewidth]{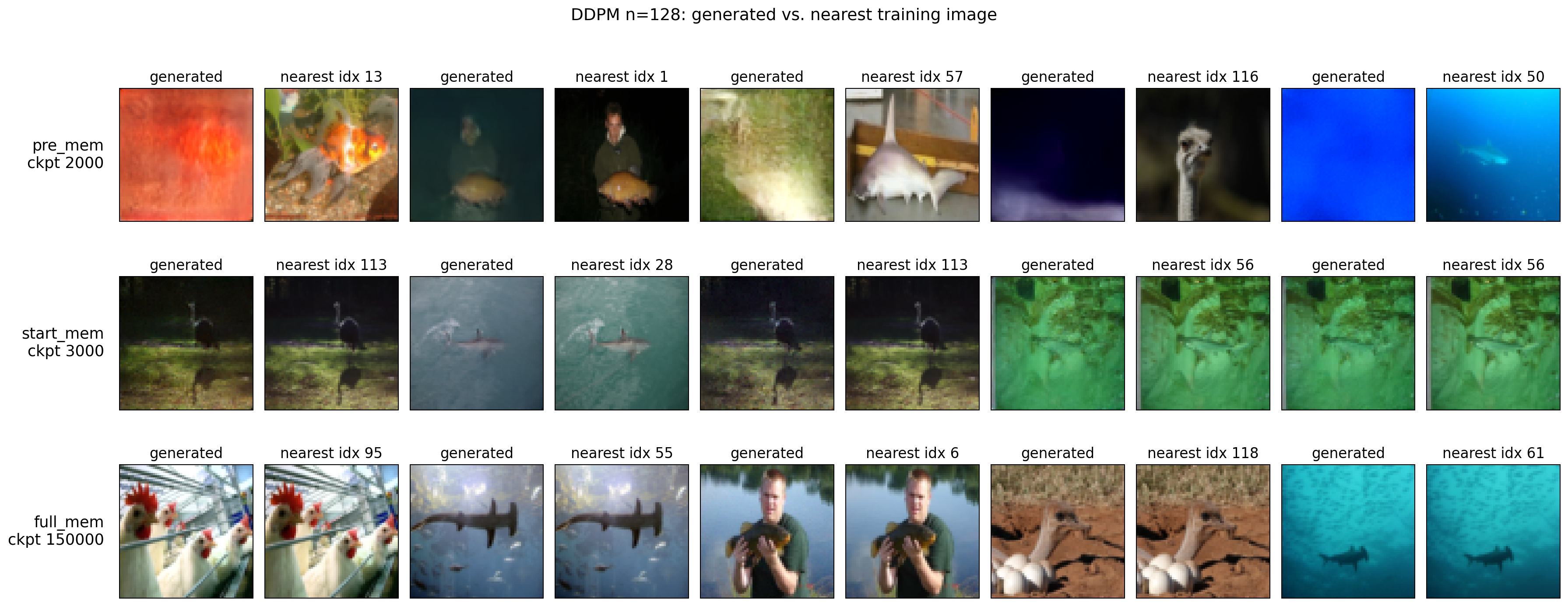}
    \caption{\textbf{Generated samples vs. nearest training image for
    ImageNet ($n=128$) DDPM.}
    pre\_mem: step $2{,}000$, $f_{\mathrm{mem}}=0$, five random samples.
    start\_mem: step $3{,}000$, $f_{\mathrm{mem}}=0.10\%$
    (10 copies of $10^4$), five randomly selected copies.
    full\_mem: step $150{,}000$, $f_{\mathrm{mem}}=99.8\%$,
    five randomly selected copies.}
    \label{fig:imagenet-stages-n128}
\end{figure}

\begin{figure}[!htbp]
    \centering
    \includegraphics[width=0.9\linewidth]{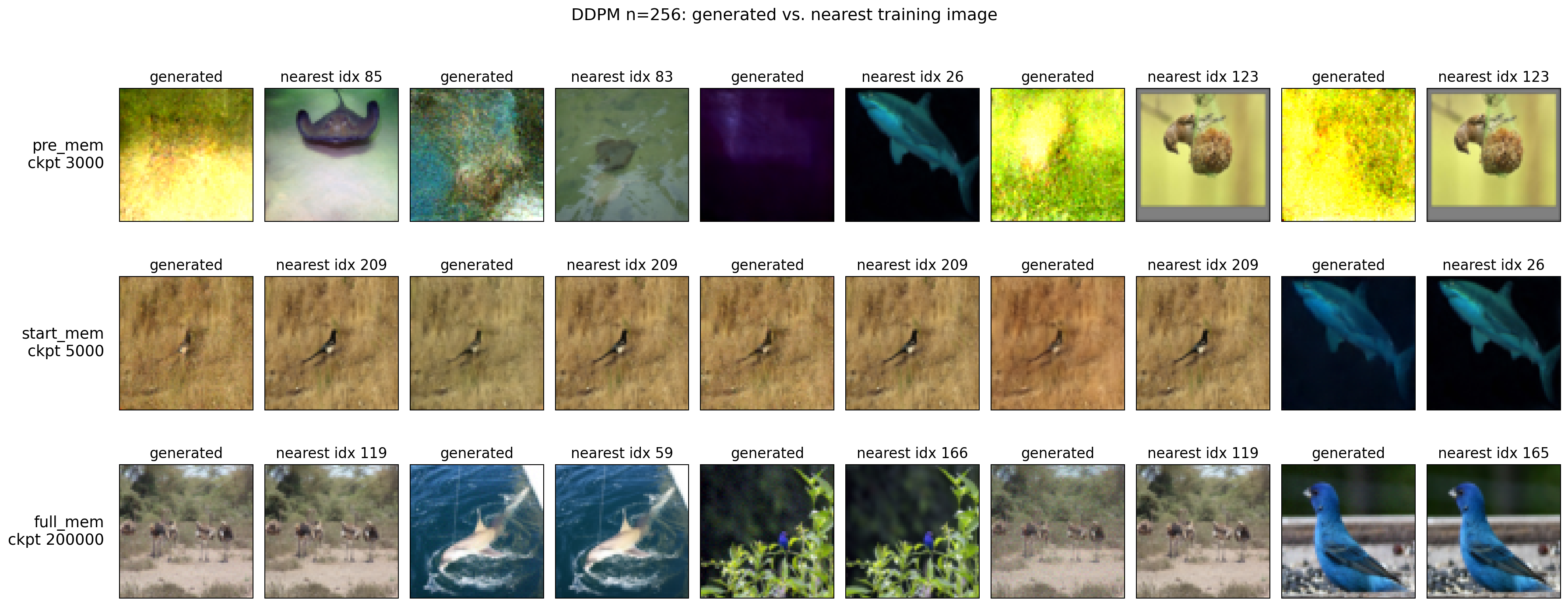}
    \caption{\textbf{Generated samples vs. nearest training image for
    ImageNet ($n=256$) DDPM.}
    pre\_mem: step $3{,}000$, $f_{\mathrm{mem}}=0$, five random samples.
    start\_mem: step $5{,}000$, $f_{\mathrm{mem}}=0.04\%$
    (4 copies of $10^4$); the four copies, followed in the rightmost
    pair by the non-copy with the smallest gap ratio.
    full\_mem: step $200{,}000$, $f_{\mathrm{mem}}=95.5\%$,
    five randomly selected copies.}
    \label{fig:imagenet-stages-n256}
\end{figure}

\begin{figure}[!htbp]
    \centering
    \includegraphics[width=0.9\linewidth]{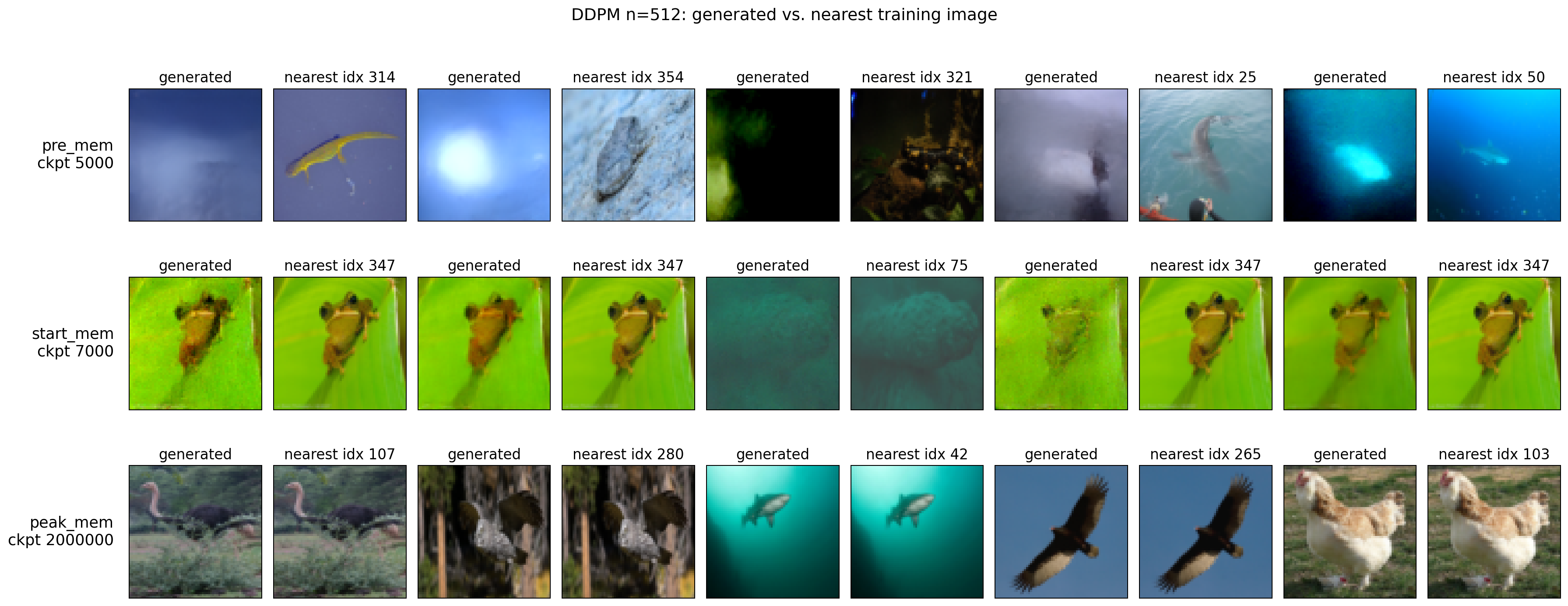}
    \caption{\textbf{Generated samples vs. nearest training image for
    ImageNet ($n=512$) DDPM.}
    pre\_mem: step $5{,}000$, $f_{\mathrm{mem}}=0$, five random samples.
    start\_mem: step $7{,}000$, $f_{\mathrm{mem}}=0.03\%$
    (3 copies of $10^4$); the three copies, followed in the two
    rightmost pairs by the non-copies with the smallest gap ratios.
    peak\_mem: step $2{,}000{,}000$, $f_{\mathrm{mem}}=85.0\%$,
    five randomly selected copies. This model never reaches the
    full-memorization threshold.}
    \label{fig:imagenet-stages-n512}
\end{figure}

\begin{figure}[!htbp]
    \centering
    \includegraphics[width=0.9\linewidth]{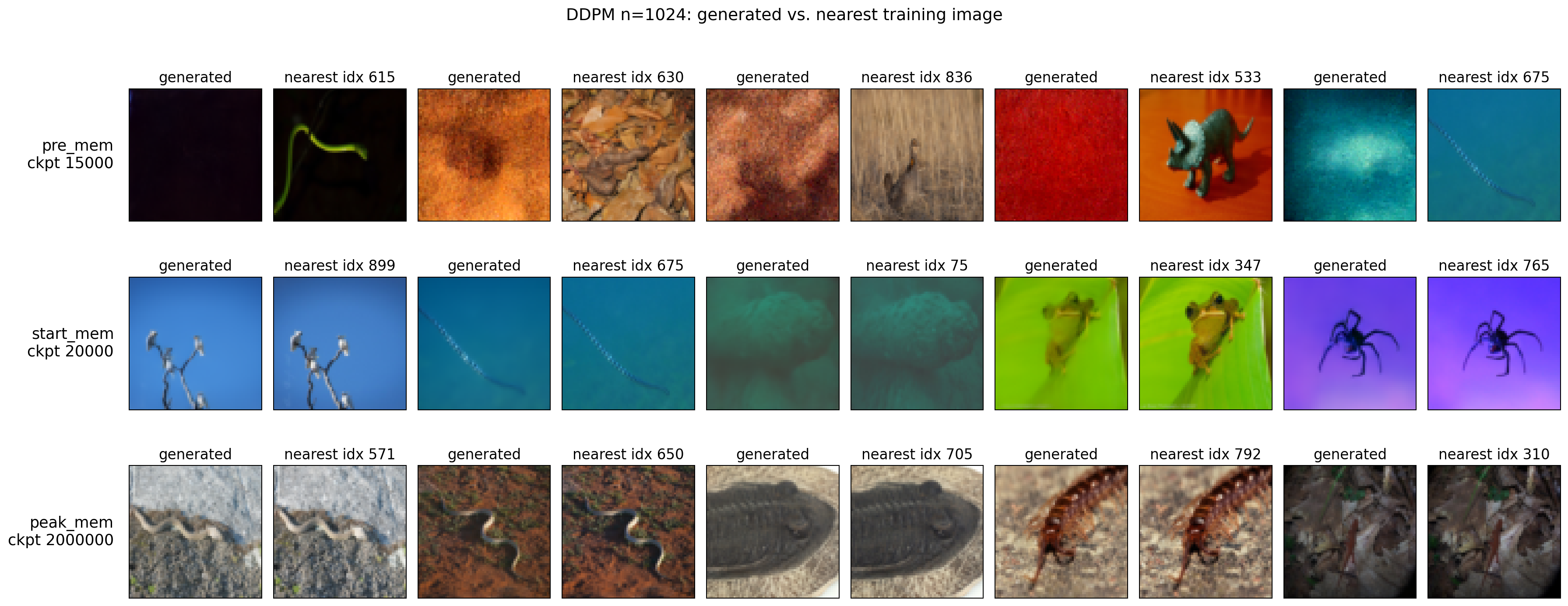}
    \caption{\textbf{Generated samples vs. nearest training image for
    ImageNet ($n=1024$) DDPM.}
    pre\_mem: step $15{,}000$, $f_{\mathrm{mem}}=0$, five random samples.
    start\_mem: step $20{,}000$, $f_{\mathrm{mem}}=0.10\%$
    (10 copies of $10^4$), five randomly selected copies.
    peak\_mem: step $2{,}000{,}000$, $f_{\mathrm{mem}}=19.9\%$,
    five randomly selected copies. This model never reaches the
    full-memorization threshold.}
    \label{fig:imagenet-stages-n1024}
\end{figure}

\begin{figure}[!htbp]
    \centering
    \includegraphics[width=0.9\linewidth]{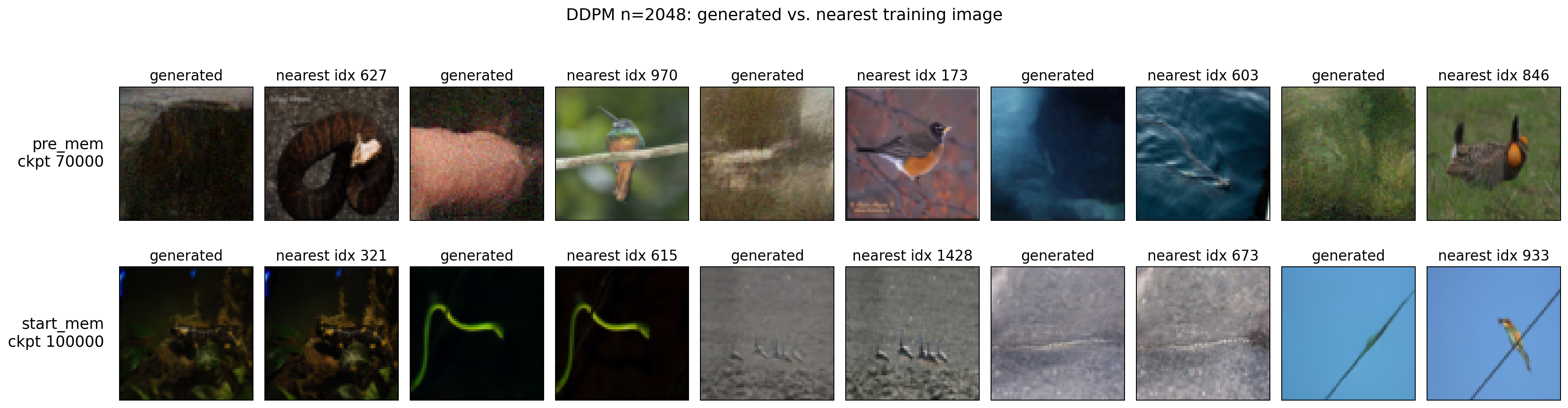}
    \caption{\textbf{Generated samples vs. nearest training image for
    ImageNet ($n=2048$) DDPM.}
    pre\_mem: step $70{,}000$, $f_{\mathrm{mem}}=0$, five random samples.
    start\_mem: step $100{,}000$, $f_{\mathrm{mem}}=0.02\%$
    (2 copies of $10^4$); the two copies, followed in the three
    rightmost pairs by the non-copies with the smallest gap ratios.
    There is no full\_mem row: $f_{\mathrm{mem}}$ peaks at $0.57\%$.}
    \label{fig:imagenet-stages-n2048}
\end{figure}

\begin{figure}[!htbp]
    \centering
    \includegraphics[width=0.9\linewidth]{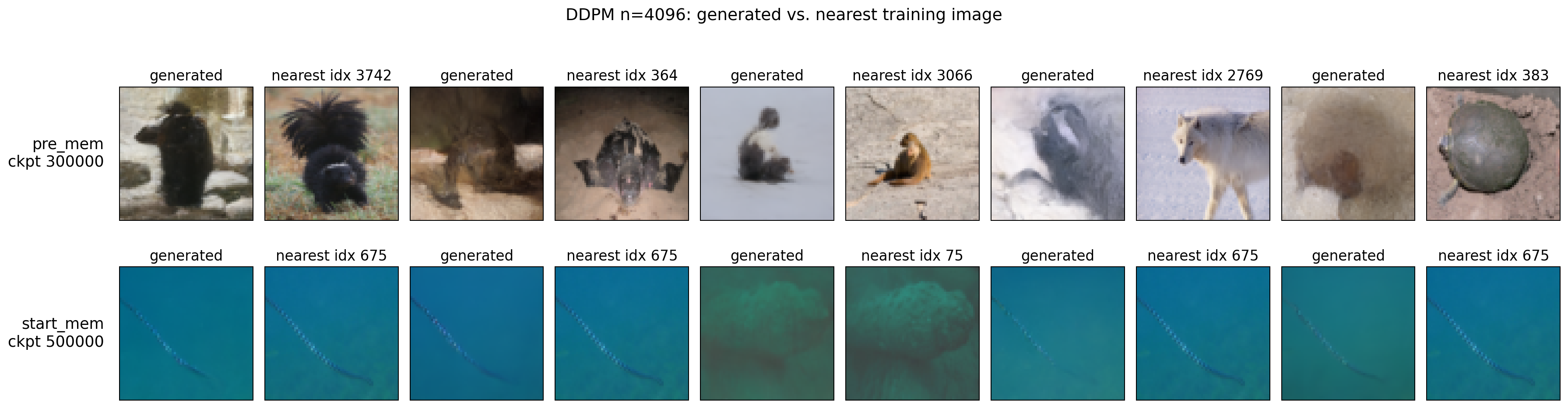}
    \caption{\textbf{Generated samples vs. nearest training image for
    ImageNet ($n=4096$) DDPM.}
    pre\_mem: step $300{,}000$, $f_{\mathrm{mem}}=0$, five random samples.
    start\_mem: step $500{,}000$, an isolated checkpoint with
    $f_{\mathrm{mem}}=0.03\%$ (3 copies of $10^4$); the three copies,
    followed in the two rightmost pairs by the non-copies with the
    smallest gap ratios. Here start\_mem labels the isolated copy event.
    No sustained memorization is observed, and there is no full\_mem row.}
    \label{fig:imagenet-stages-n4096}
\end{figure}

\begin{figure}[!htbp]
    \centering
    \includegraphics[width=0.9\linewidth]{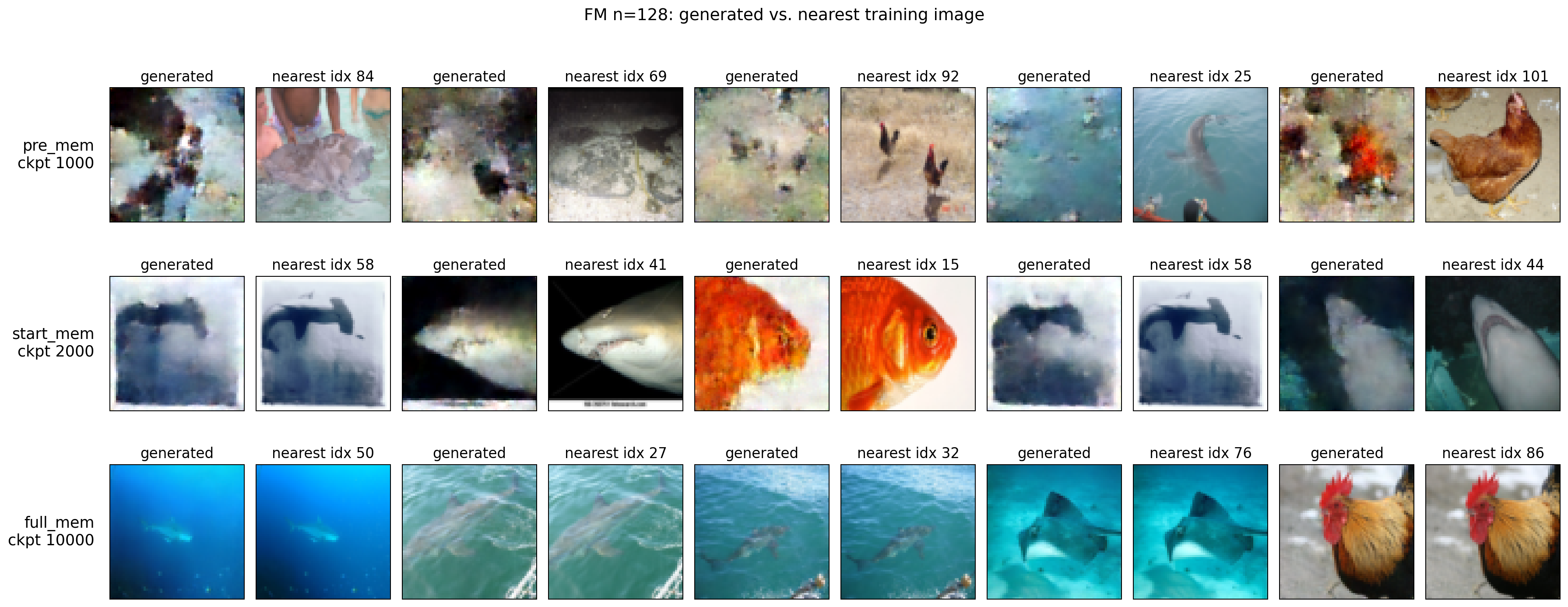}
    \caption{\textbf{Generated samples vs. nearest training image for
    ImageNet ($n=128$) flow matching.}
    pre\_mem: step $1{,}000$, $f_{\mathrm{mem}}=0$, five random samples.
    start\_mem: step $2{,}000$, $f_{\mathrm{mem}}=0.10\%$
    (10 copies of $10^4$), five randomly selected copies.
    full\_mem: step $10{,}000$, $f_{\mathrm{mem}}=95.7\%$,
    five randomly selected copies.}
    \label{fig:imagenet-stages-fm-n128}
\end{figure}

\begin{figure}[!htbp]
    \centering
    \includegraphics[width=0.9\linewidth]{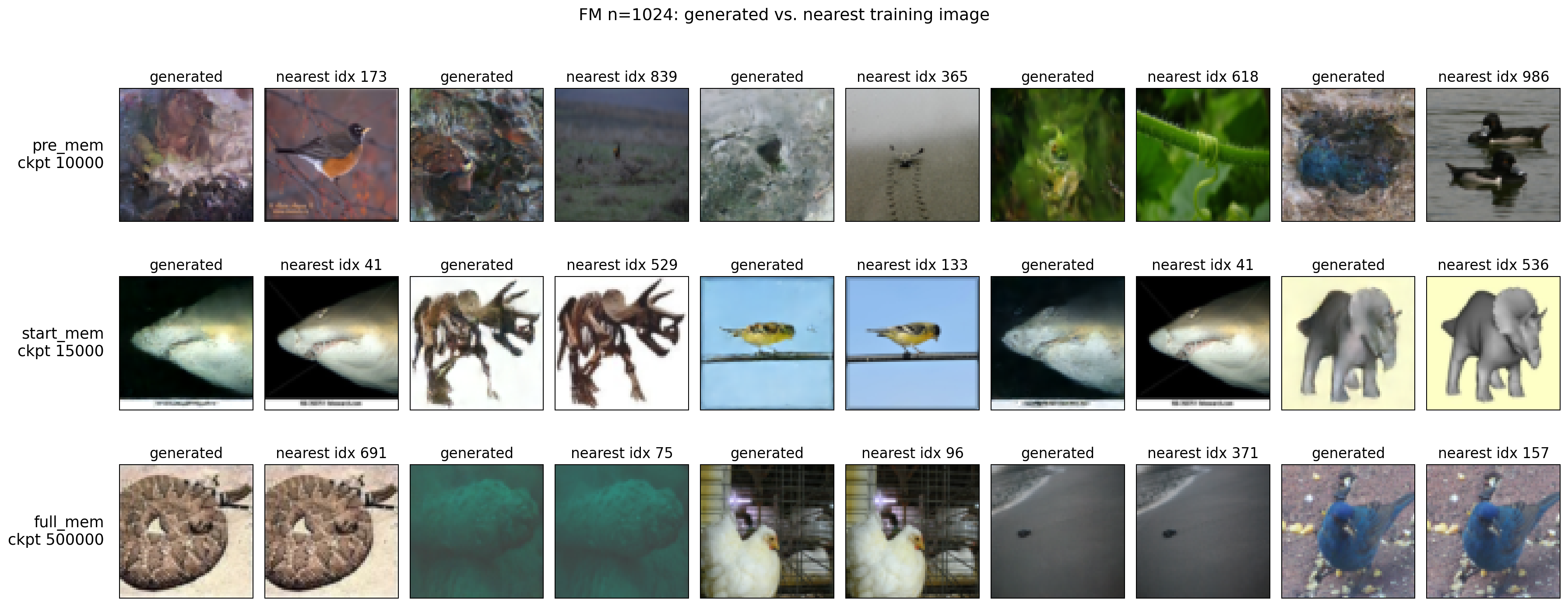}
    \caption{\textbf{Generated samples vs. nearest training image for
    ImageNet ($n=1024$) flow matching.}
    pre\_mem: step $10{,}000$, $f_{\mathrm{mem}}=0$, five random samples.
    start\_mem: step $15{,}000$, $f_{\mathrm{mem}}=0.46\%$
    (46 copies of $10^4$), five randomly selected copies.
    full\_mem: step $500{,}000$, $f_{\mathrm{mem}}=96.2\%$,
    five randomly selected copies.}
    \label{fig:imagenet-stages-fm-n1024}
\end{figure}

\clearpage

\section{Additional Experiments}
\label{app: additional}
We provide additional experiments on Imagenet and CelebA. Figures \ref{fig:celeba_128_mem_curv}-\ref{fig:imagenet_10240_mem_curv} plot the average training input loss curvature over diffusion time and fraction of generated samples that are considered memorized with memorization threshold $k=1/3$ vs. training step for DDPM noise prediction and flow matching for the datasets CelebA with training set size $n\in\{128,1024\}$ and Imagenet with training set size $n\in\{128,1024,10240\}$, respectively. DDPM and flow matching on CelebA with $n=128$ is a memorizing model, since fraction memorization starts increasing rapidly to $100\%$ around training step $\tau =10^3$, whereas DDPM and flow matching on CelebA with $n=10240$ is a generalizing model with fraction memorization increasing much later around training step $\tau=10^5$ and never reaching full memorization of $100\%$ before training ends. We observe that the average input loss curvature increases more drastically for the memorizing model as compared to the generalizing model. A similar trend can be seen for the DDPM and flow matching models for Imagenet as training size $n$ increases. 

For the flow matching model on CelebA with training size $n=128$, Figure \ref{fig:celeba_fm_128_train_val_curv} shows the average training and validation input loss curvatures over diffusion time vs. training step, as well as the relative training and validation curvature gaps, which is calculated as $|\mathbb{E}_{x \sim \text{train}}[\|\nabla^{2}_{x}\ell(\phi(\tau), x)\|^{2}_{F}] - \mathbb{E}_{x \sim \text{val.}}[\|\nabla^{2}_{x}\ell(\phi(\tau), x)\|^{2}_{F}]|/\mathbb{E}_{x \sim \text{train}}[\|\nabla^{2}_{x}\ell(\phi(\tau), x)\|^{2}_{F}]$ and $|\mathbb{E}_{x \sim \text{train}}[\|\nabla^{2}_{x}\ell(\phi(\tau), x)\|^{2}_{F}] - \mathbb{E}_{x \sim \text{val.}}[\|\nabla^{2}_{x}\ell(\phi(\tau), x)\|^{2}_{F}]|/\mathbb{E}_{x \sim \text{val}}[\|\nabla^{2}_{x}\ell(\phi(\tau), x)\|^{2}_{F}]$, vs. training step. We observe that the training curvature separates from the validation curvature right before training step $\tau=10^3$, which is around where memorization begins, as shown in Figure \ref{fig:celeba_128_mem_curv}.

Figure \ref{fig:imagenet_10240_mem} shows the fraction memorization score for DDPM noise prediction and flow matching on Imagenet $(n=10240)$ with different values for the memorization threshold $k \in\{1/3,0.8,0.9\}$. There is no trend in the nearest-neighbor gap ratios for DDPM even when $k$ is increased. However, for flow matching, when $k$ is large, the nearest-neighbor gap ratios increase as training continues. Nonetheless, this pattern in the gap ratios does not indicate that this model is a memorizing model, since for $k=1/3$, the fraction memorized is approximately $0\%$ for all training steps. 

Figure \ref{fig:ddpm_sample_imagenet_10240} shows the average training curvature over diffusion time and fraction memorized with $k=1/3$ vs. training step for DDPM sample prediction on Imagenet $(n=10240)$ as well as the fraction memorization with $k \in \{1/3,0.8,0.9\}$. This model is a generalizing model, since for $k=1/3$, approximately $0\%$ of the generated samples are considered memorized for all training steps, even though for large values of $k$, the nearest-neighbor gap ratios increase as training steps increase. We also observe that the average input loss curvature barely changes across training steps, validating the connection between memorization and curvature. 

In Figure \ref{fig:celeba_vs_imagenet_curv}, the average training input loss curvature over diffusion time vs. training step is plotted for DDPM noise prediction and flow matching on the CelebA and Imagenet datasets with training set size $n=10240$. CelebA has larger curvature values than Imagenet for both models, which correspond to higher memorization scores as seen in Figures \ref{fig:celeba_10240_mem_curv} and \ref{fig:imagenet_10240_mem_curv}. In particular, DDPM and flow matching for CelebA $(n=10240)$ have fraction memorization scores of around $23\%$ and $40\%$, respectively, at the end of training, whereas DDPM and flow matching for Imagenet $(n=10240)$ have fraction memorization scores of around $0\%$ and $0.1\%$, respectively, at the end of training. 

Figure \ref{fig:ddpm_1024_leave10out_mem_curv} compares two DDPM noise prediction models on Imagenet, one that is trained on a training set of size $n=1024$ and another that is trained on the same training set with the top $10$ memorized samples from the full $n=1024$ model removed. The figure plots fraction memorization and average training curvature over diffusion time vs. training step for both the full and leave-10-out models. The leave-10-out model has a lower memorization score at the end of training and lower input loss curvature for all training steps, indicating that it is worthwhile to remove the top memorized samples from a model to improve generalization. Figure \ref{fig:fm_1024_leave10out_mem_curv} shows similar results for flow matching. We repeat these experiments for Imagenet $n=128$ in Figures \ref{fig:ddpm_128_leave10out_mem_curv} and \ref{fig:fm_128_leave10out_mem_curv}. Figures \ref{fig:fm_128_loss_diff_curv} and \ref{fig:fm_1024_loss_diff_curv} plot the difference between the per sample losses of the full flow matching models and the leave-10-out models vs. training step, as well as the per sample curvatures of the leave-10-out models, for each of the top $10$ left-out samples.

\begin{figure}[h]
    \centering
    \begin{subfigure}[b]{0.49\linewidth}
        \centering
        \includegraphics[width=\linewidth]{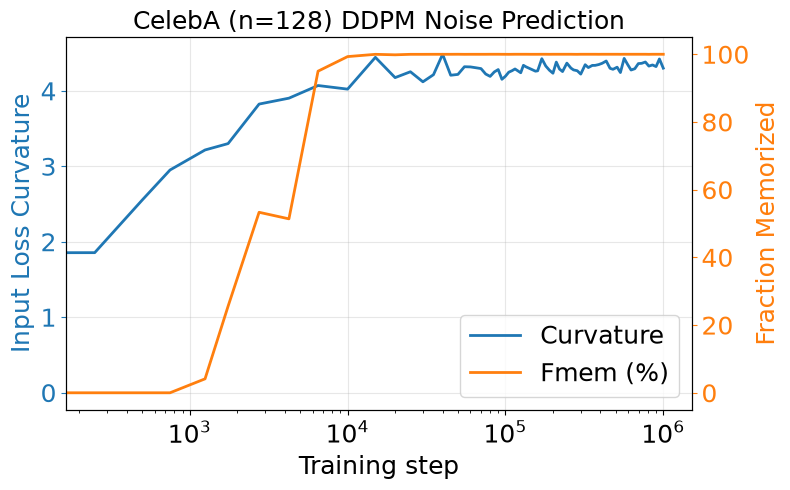}
        \caption{}
        \label{fig:left}
    \end{subfigure}
    \hfill
    \begin{subfigure}[b]{0.49\linewidth}
        \centering
        \includegraphics[width=\linewidth]{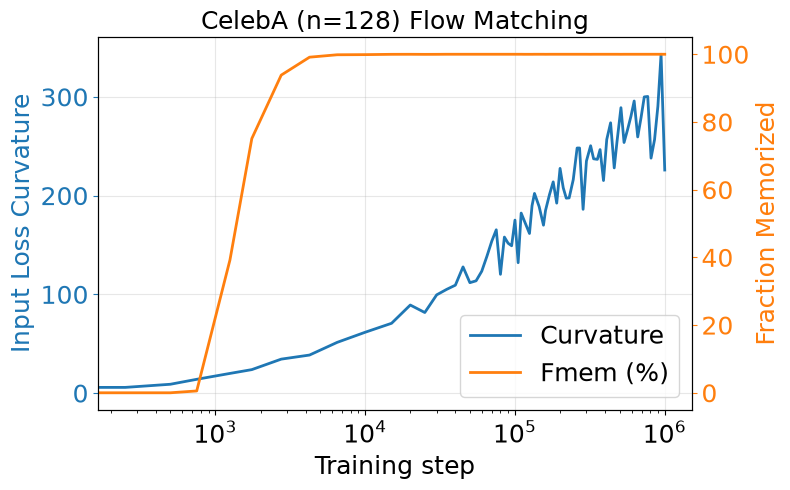}
        \caption{}
        \label{fig:right}
    \end{subfigure}
    \caption{\textbf{Curvature and memorization for CelebA (n=128)}: average input loss curvature over diffusion time and training set and fraction memorized (\%) with memorization threshold $k=1/3$ vs. training step for (a) DDPM noise prediction and (b) flow matching.}
    \label{fig:celeba_128_mem_curv}
\end{figure}

\begin{figure}[h]
    \centering
    \begin{subfigure}[b]{0.49\linewidth}
        \centering
        \includegraphics[width=\linewidth]{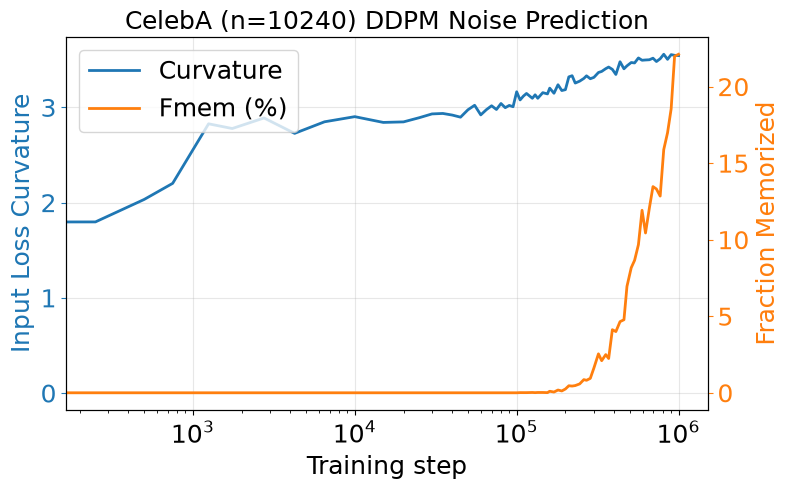}
        \caption{}
        \label{fig:left}
    \end{subfigure}
    \hfill
    \begin{subfigure}[b]{0.49\linewidth}
        \centering
        \includegraphics[width=\linewidth]{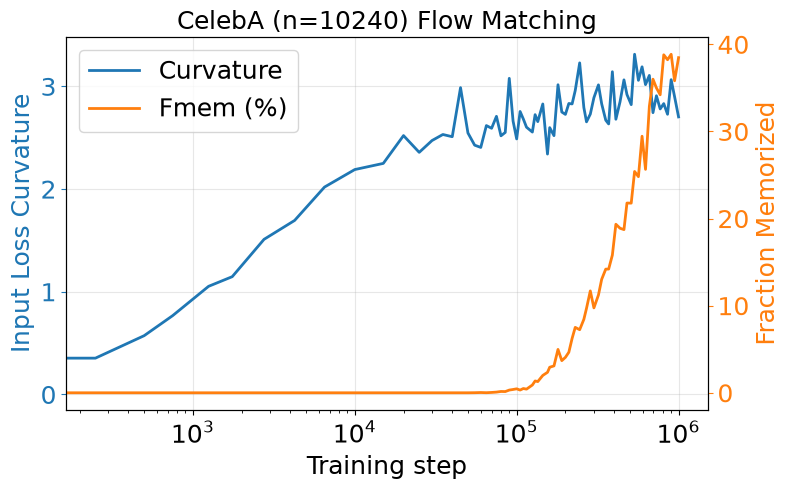}
        \caption{}
        \label{fig:right}
    \end{subfigure}
    \caption{\textbf{Curvature and memorization for CelebA (n=10240)}: average input loss curvature over diffusion time and training set and fraction memorized (\%) with memorization threshold $k=1/3$ vs. training step for (a) DDPM noise prediction and (b) flow matching.}
    \label{fig:celeba_10240_mem_curv}
\end{figure}

\begin{figure}[h]
    \centering
    \begin{subfigure}[b]{0.49\linewidth}
        \centering
        \includegraphics[width=\linewidth]{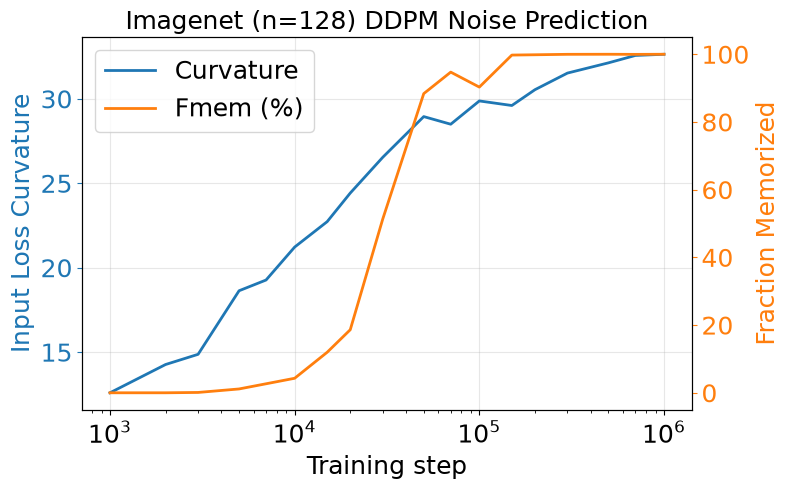}
        \caption{}
        \label{fig:left}
    \end{subfigure}
    \hfill
    \begin{subfigure}[b]{0.49\linewidth}
        \centering
        \includegraphics[width=\linewidth]{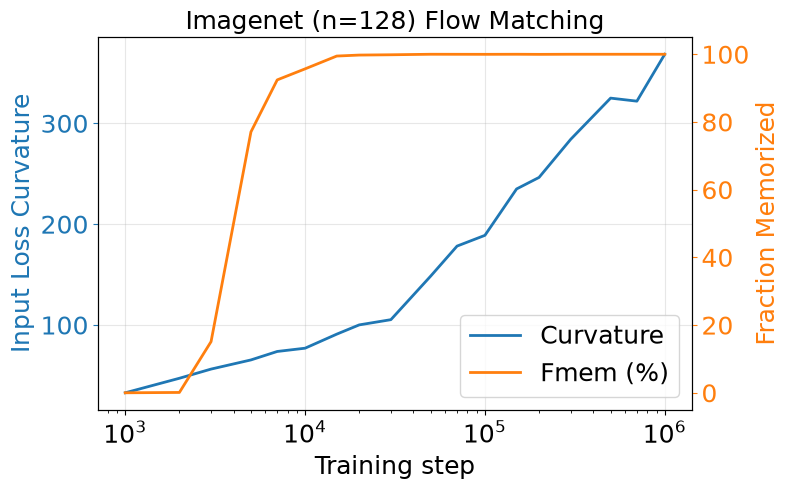}
        \caption{}
        \label{fig:right}
    \end{subfigure}
    \caption{\textbf{Curvature and memorization for Imagenet (n=128)}: average input loss curvature over diffusion time and training set and fraction memorized (\%) with memorization threshold $k=1/3$ vs. training step for (a) DDPM noise prediction and (b) flow matching.}
    \label{fig:imagenet_128_mem_curv}
\end{figure}

\begin{figure}[h]
    \centering
    \begin{subfigure}[b]{0.49\linewidth}
        \centering
        \includegraphics[width=\linewidth]{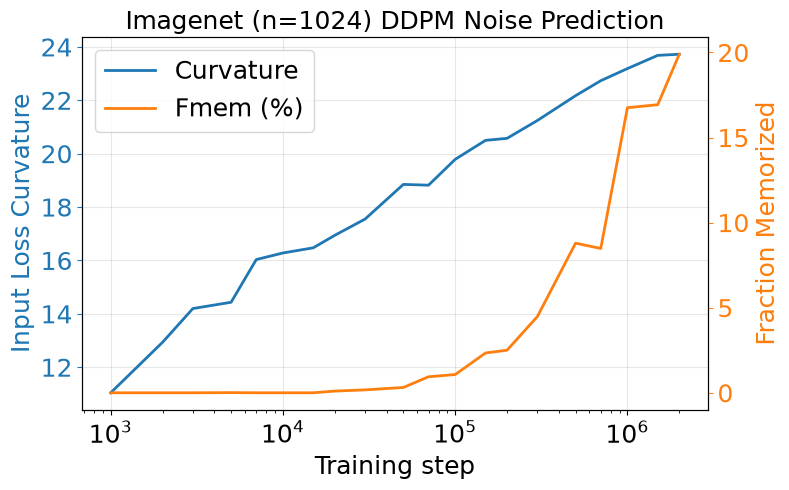}
        \caption{}
        \label{fig:left}
    \end{subfigure}
    \hfill
    \begin{subfigure}[b]{0.49\linewidth}
        \centering
        \includegraphics[width=\linewidth]{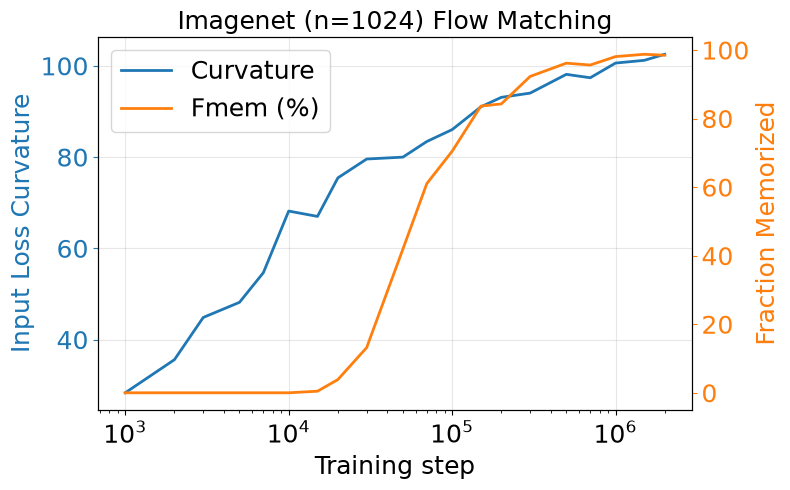}
        \caption{}
        \label{fig:right}
    \end{subfigure}
    \caption{\textbf{Curvature and memorization for Imagenet (n=1024)}: average input loss curvature over diffusion time and training set and fraction memorized (\%) with memorization threshold $k=1/3$ vs. training step for (a) DDPM noise prediction and (b) flow matching.}
    \label{fig:imagenet_1024_mem_curv}
\end{figure}

\begin{figure}[h]
    \centering
    \begin{subfigure}[b]{0.49\linewidth}
        \centering
        \includegraphics[width=\linewidth]{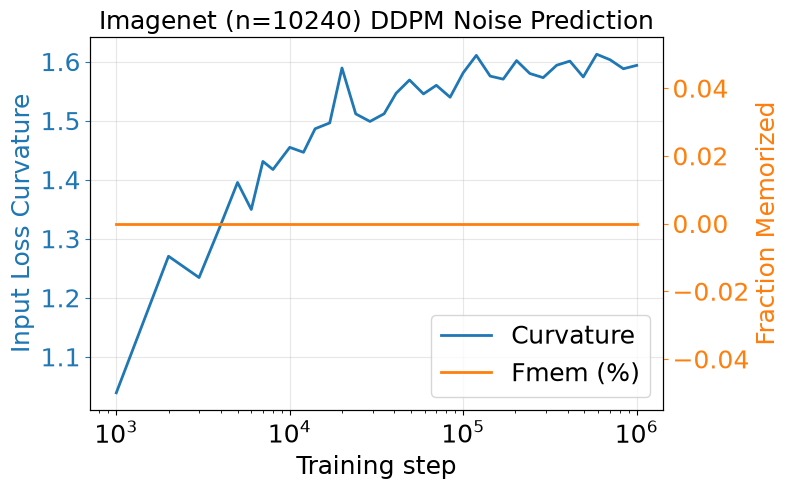}
        \caption{}
        \label{fig:left}
    \end{subfigure}
    \hfill
    \begin{subfigure}[b]{0.49\linewidth}
        \centering
        \includegraphics[width=\linewidth]{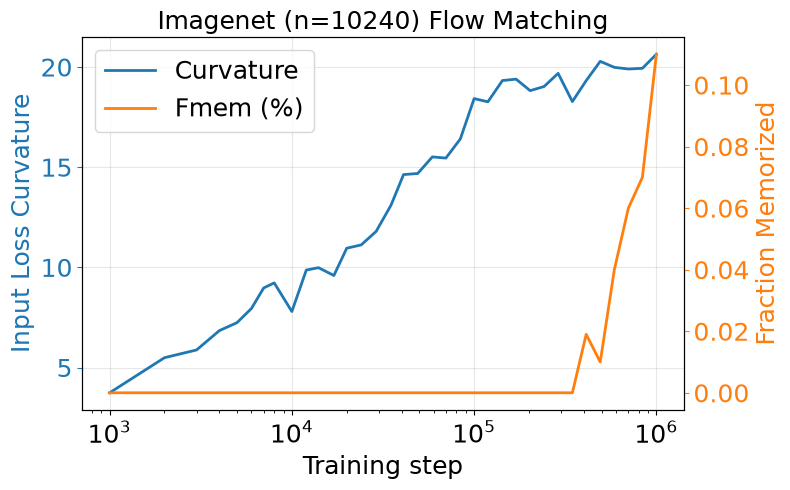}
        \caption{}
        \label{fig:right}
    \end{subfigure}
    \caption{\textbf{Curvature and memorization for Imagenet (n=10240)}: average input loss curvature over diffusion time and training set and fraction memorized (\%) with memorization threshold $k=1/3$ vs. training step for (a) DDPM noise prediction and (b) flow matching.}
    \label{fig:imagenet_10240_mem_curv}
\end{figure}

\begin{figure}[h]
    \centering
    \includegraphics[width=0.96\linewidth]{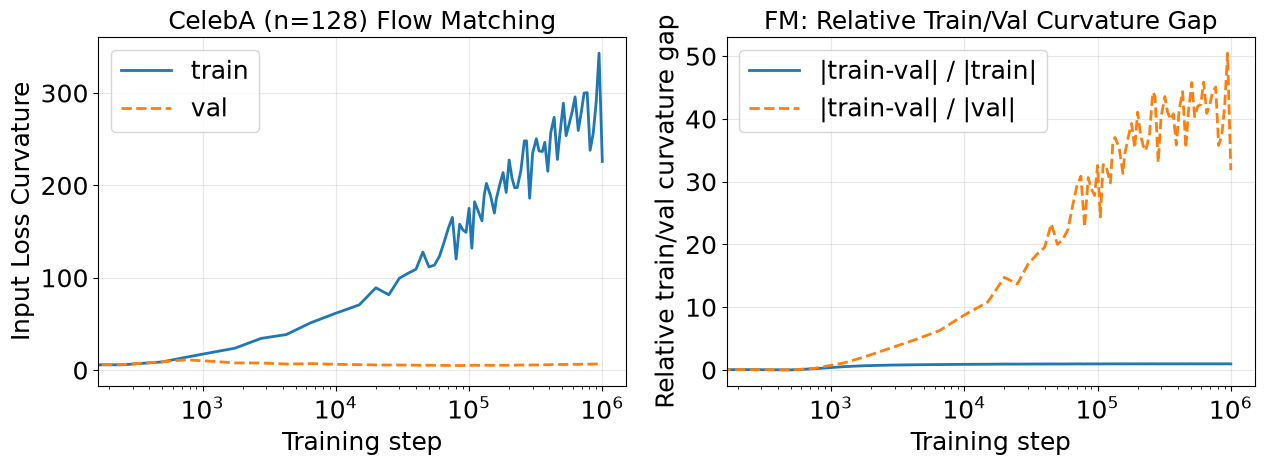}
    \caption{\textbf{Training vs. validation curvature for CelebA (n=128) flow matching}: (left) average input loss curvature over diffusion time and training (blue) / validation (orange) set, (right) relative training/validation curvature gap calculated as $|\mathbb{E}_{x \sim \text{train}}[\|\nabla^{2}_{x}\ell(\phi(\tau), x)\|^{2}_{F}] - \mathbb{E}_{x \sim \text{val.}}[\|\nabla^{2}_{x}\ell(\phi(\tau), x)\|^{2}_{F}]|/\mathbb{E}_{x \sim \text{train}}[\|\nabla^{2}_{x}\ell(\phi(\tau), x)\|^{2}_{F}]$ (blue) and $|\mathbb{E}_{x \sim \text{train}}[\|\nabla^{2}_{x}\ell(\phi(\tau), x)\|^{2}_{F}] - \mathbb{E}_{x \sim \text{val.}}[\|\nabla^{2}_{x}\ell(\phi(\tau), x)\|^{2}_{F}]|/\mathbb{E}_{x \sim \text{val}}[\|\nabla^{2}_{x}\ell(\phi(\tau), x)\|^{2}_{F}]$ (orange).}
    \label{fig:celeba_fm_128_train_val_curv}
\end{figure}

\begin{figure}[h]
    \centering
    \begin{subfigure}[b]{0.49\linewidth}
        \centering
        \includegraphics[width=\linewidth]{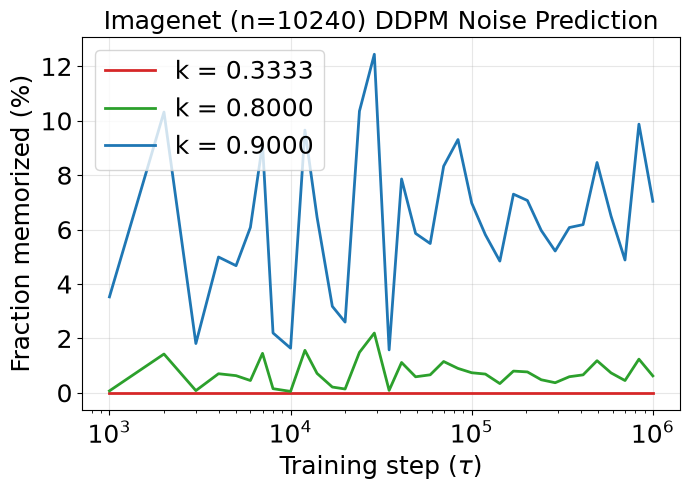}
        \caption{}
        \label{fig:left}
    \end{subfigure}
    \hfill
    \begin{subfigure}[b]{0.49\linewidth}
        \centering
        \includegraphics[width=\linewidth]{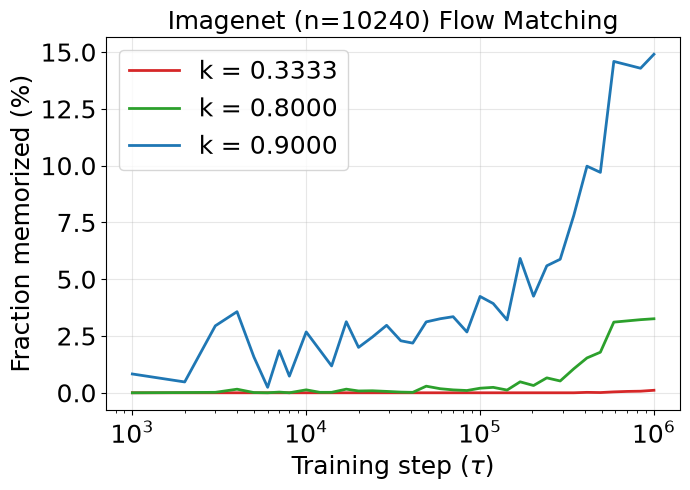}
        \caption{}
        \label{fig:right}
    \end{subfigure}
    \caption{\textbf{Memorization for Imagenet (n=10240)}: fraction memorized (\%) with memorization threshold $k \in \{1/3, 0.8, 0.9\}$ vs. training step for (a) DDPM noise prediction and (b) flow matching.}
    \label{fig:imagenet_10240_mem}
\end{figure}

\begin{figure}[h]
    \centering
    \begin{subfigure}[b]{0.52\linewidth}
        \centering
        \includegraphics[width=\linewidth]{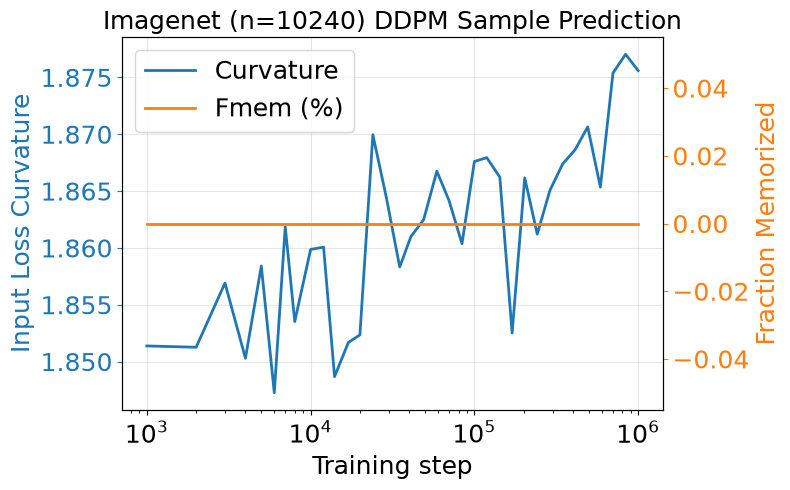}
        \caption{}
        \label{fig:left}
    \end{subfigure}
    \hfill
    \begin{subfigure}[b]{0.45\linewidth}
        \centering
        \includegraphics[width=\linewidth]{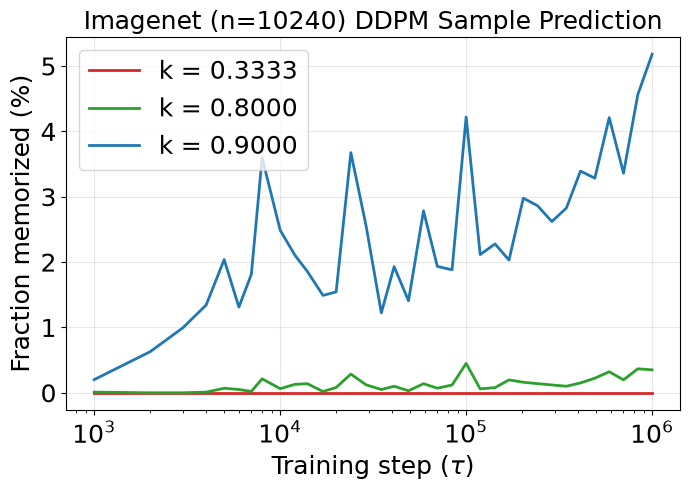}
        \caption{}
        \label{fig:right}
    \end{subfigure}
    \caption{\textbf{Curvature and memorization for Imagenet (n=10240) DDPM Sample Prediction}: (a) average input loss curvature over diffusion time and training set and fraction memorized (\%)
    with memorization threshold $k=1/3$ vs. training step, and (b) fraction memorized (\%) with memorization threshold $k\in\{1/3,0.8,0.9\}$ vs. training step.} \label{fig:ddpm_sample_imagenet_10240}
\end{figure}

\begin{figure}[h]
    \centering
    \begin{subfigure}[b]{0.49\linewidth}
        \centering
        \includegraphics[width=\linewidth]{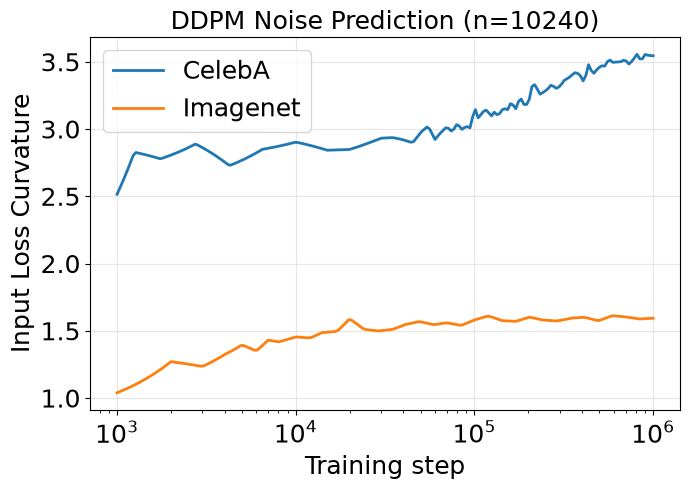}
        \caption{}
        \label{fig:left}
    \end{subfigure}
    \hfill
    \begin{subfigure}[b]{0.49\linewidth}
        \centering
        \includegraphics[width=\linewidth]{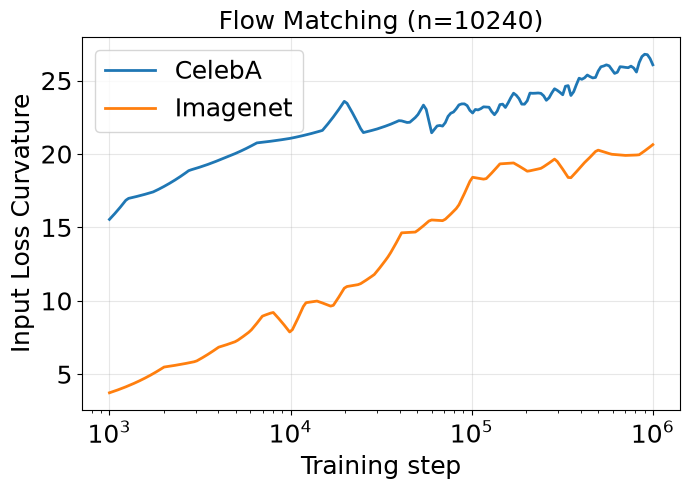}
        \caption{}
        \label{fig:right}
    \end{subfigure}
    \caption{\textbf{CelebA vs. Imagenet (n=10240) Curvature}: average input loss curvature over diffusion time and training set vs. training step for (a) DDPM noise prediction and (b) flow matching.}
    \label{fig:celeba_vs_imagenet_curv}
\end{figure}

\begin{figure}[h]
    \centering
    \begin{subfigure}[b]{0.49\linewidth}
        \centering
        \includegraphics[width=\linewidth]{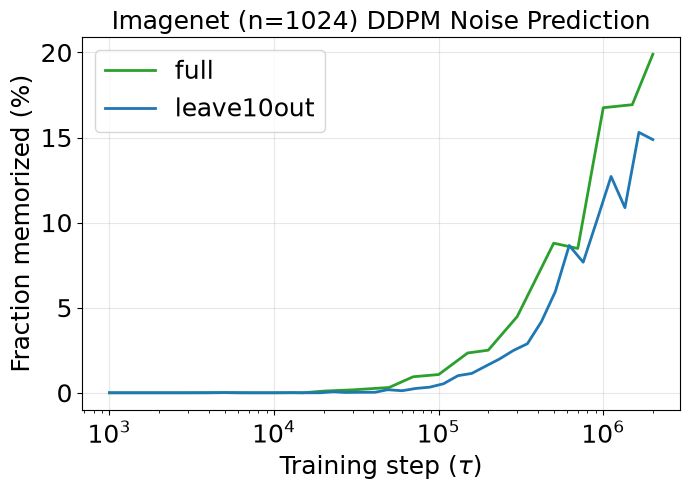}
        \caption{}
        \label{fig:left}
    \end{subfigure}
    \hfill
    \begin{subfigure}[b]{0.49\linewidth}
        \centering
        \includegraphics[width=\linewidth]{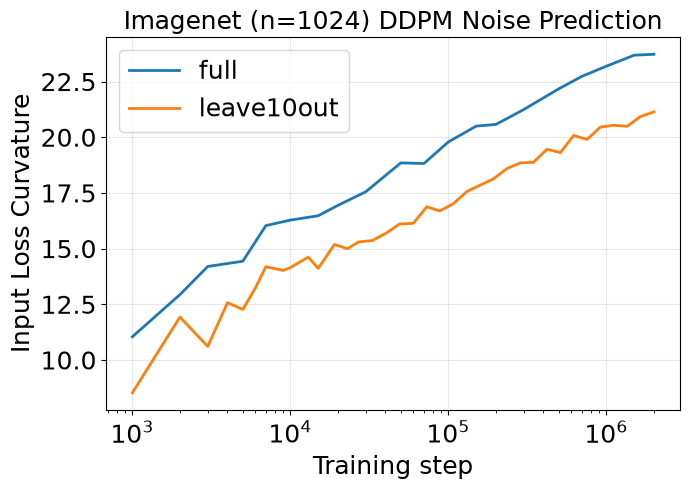}
        \caption{}
        \label{fig:right}
    \end{subfigure}
    \caption{\textbf{Memorization and curvature for Imagenet (n=1024) DDPM noise prediction - full vs. leave-10-out training}: (a) fraction memorized (\%) with memorization threshold $k=1/3$ vs. training step and (b) average input loss curvature over diffusion time and training set vs. training step for full $n=1024$ training set and leave-10-out training set with the top $10$ memorized training samples removed.}
    \label{fig:ddpm_1024_leave10out_mem_curv}
\end{figure}

\begin{figure}[h]
    \centering
    \begin{subfigure}[b]{0.49\linewidth}
        \centering
        \includegraphics[width=\linewidth]{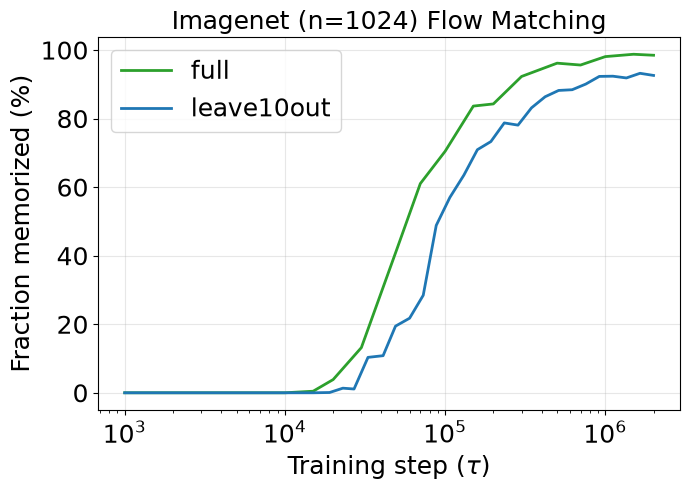}
        \caption{}
        \label{fig:left}
    \end{subfigure}
    \hfill
    \begin{subfigure}[b]{0.49\linewidth}
        \centering
        \includegraphics[width=\linewidth]{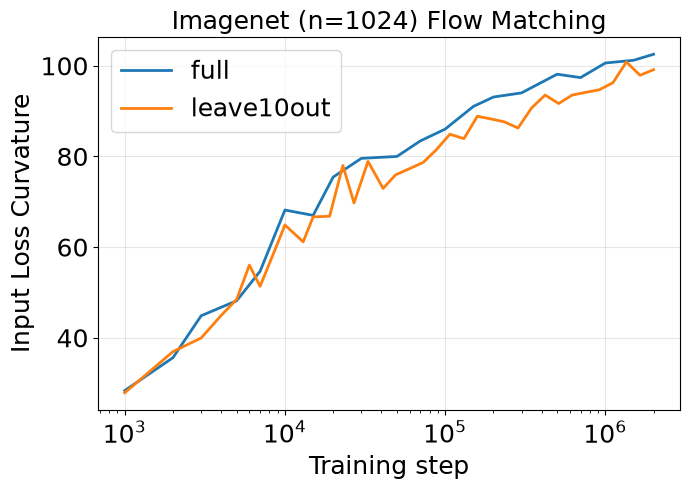}
        \caption{}
        \label{fig:right}
    \end{subfigure}
    \caption{\textbf{Memorization and curvature for Imagenet (n=1024) flow matching - full vs. leave-10-out training}: (a) fraction memorized (\%) with memorization threshold $k=1/3$ vs. training step and (b) average input loss curvature over diffusion time and training set vs. training step for full $n=1024$ training set and leave-10-out training set with the top $10$ memorized training samples removed.}
    \label{fig:fm_1024_leave10out_mem_curv}
\end{figure}

\begin{figure}[h]
    \centering
    \begin{subfigure}[b]{0.49\linewidth}
        \centering
        \includegraphics[width=\linewidth]{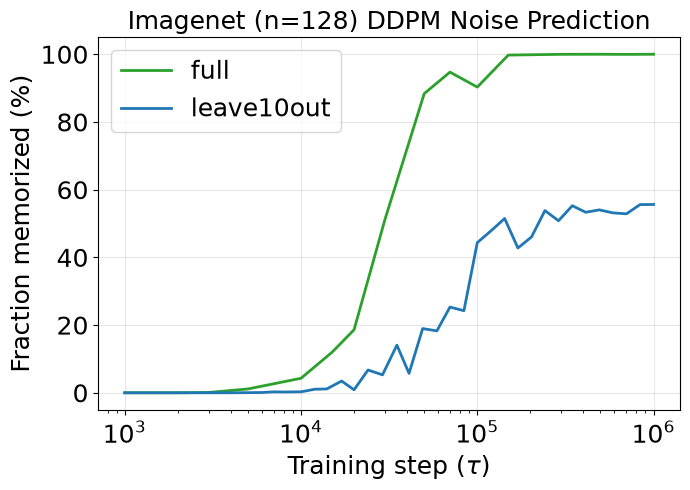}
        \caption{}
        \label{fig:left}
    \end{subfigure}
    \hfill
    \begin{subfigure}[b]{0.49\linewidth}
        \centering
        \includegraphics[width=\linewidth]{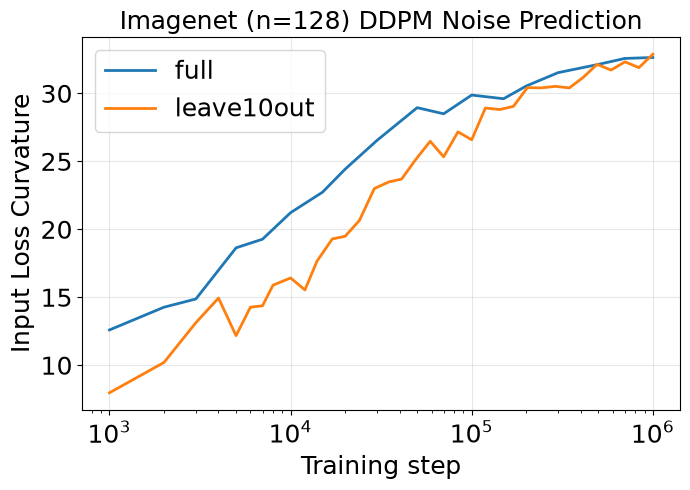}
        \caption{}
        \label{fig:right}
    \end{subfigure}
    \caption{\textbf{Memorization and curvature for Imagenet (n=128) DDPM noise prediction - full vs. leave-10-out training}: (a) fraction memorized (\%) with memorization threshold $k=1/3$ vs. training step and (b) average input loss curvature over diffusion time and training set vs. training step for full $n=128$ training set and leave-10-out training set with the top $10$ memorized training samples removed.}
    \label{fig:ddpm_128_leave10out_mem_curv}
\end{figure}

\begin{figure}[h]
    \centering
    \begin{subfigure}[b]{0.49\linewidth}
        \centering
        \includegraphics[width=\linewidth]{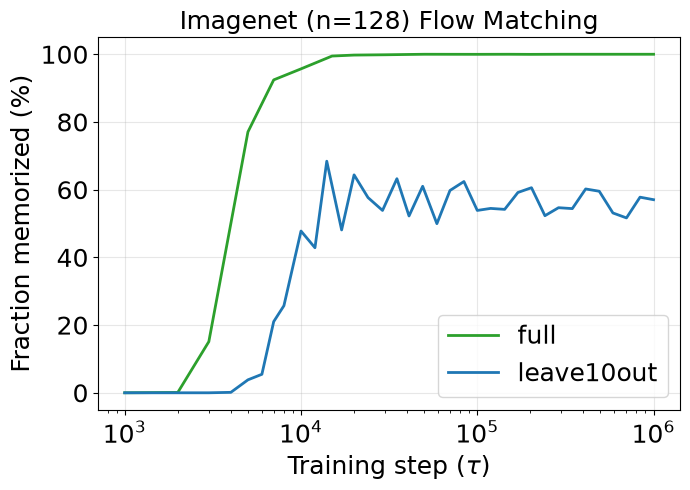}
        \caption{}
        \label{fig:left}
    \end{subfigure}
    \hfill
    \begin{subfigure}[b]{0.49\linewidth}
        \centering
        \includegraphics[width=\linewidth]{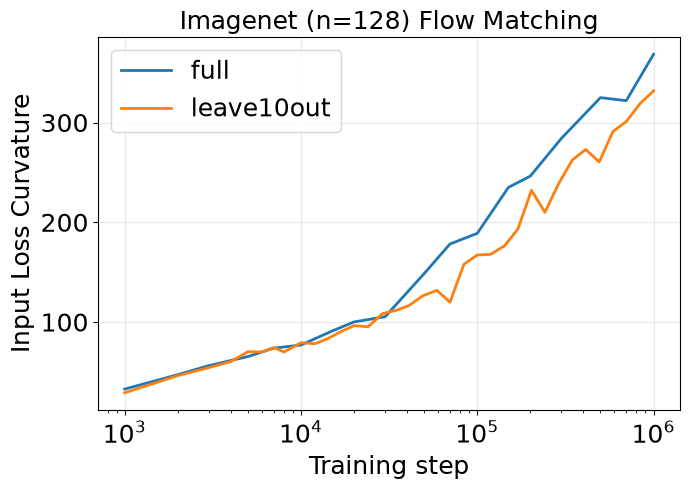}
        \caption{}
        \label{fig:right}
    \end{subfigure}
    \caption{\textbf{Memorization and curvature for Imagenet (n=128) flow matching - full vs. leave-10-out training}: (a) fraction memorized (\%) with memorization threshold $k=1/3$ vs. training step and (b) average input loss curvature over diffusion time and training set vs. training step for full $n=128$ training set and leave-10-out training set with the top $10$ memorized training samples removed.}
    \label{fig:fm_128_leave10out_mem_curv}
\end{figure}

\begin{figure}[h]
    \centering
    \includegraphics[width=0.96\linewidth]{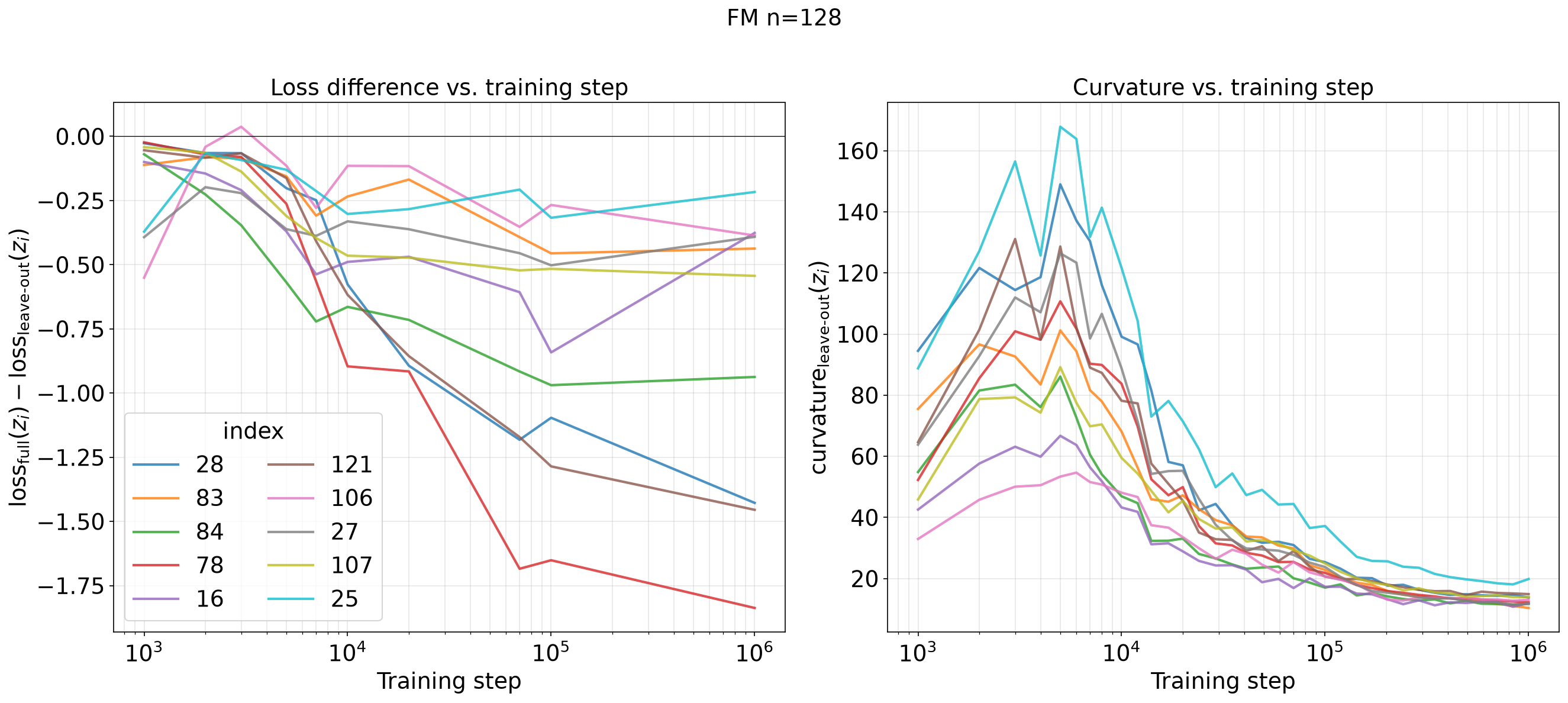}
    \caption{\textbf{Loss difference and curvature for Imagenet (n=128) flow matching - leave-10-out-training}: (left) difference between per sample loss of full model and leave-10-out model vs. training step for each of the 10 left-out training samples, (right) per sample curvature of the leave-10-out model vs. training step for each of the 10 left-out training samples.}
    \label{fig:fm_128_loss_diff_curv}
\end{figure}

\begin{figure}[h]
    \centering
    \includegraphics[width=0.96\linewidth]{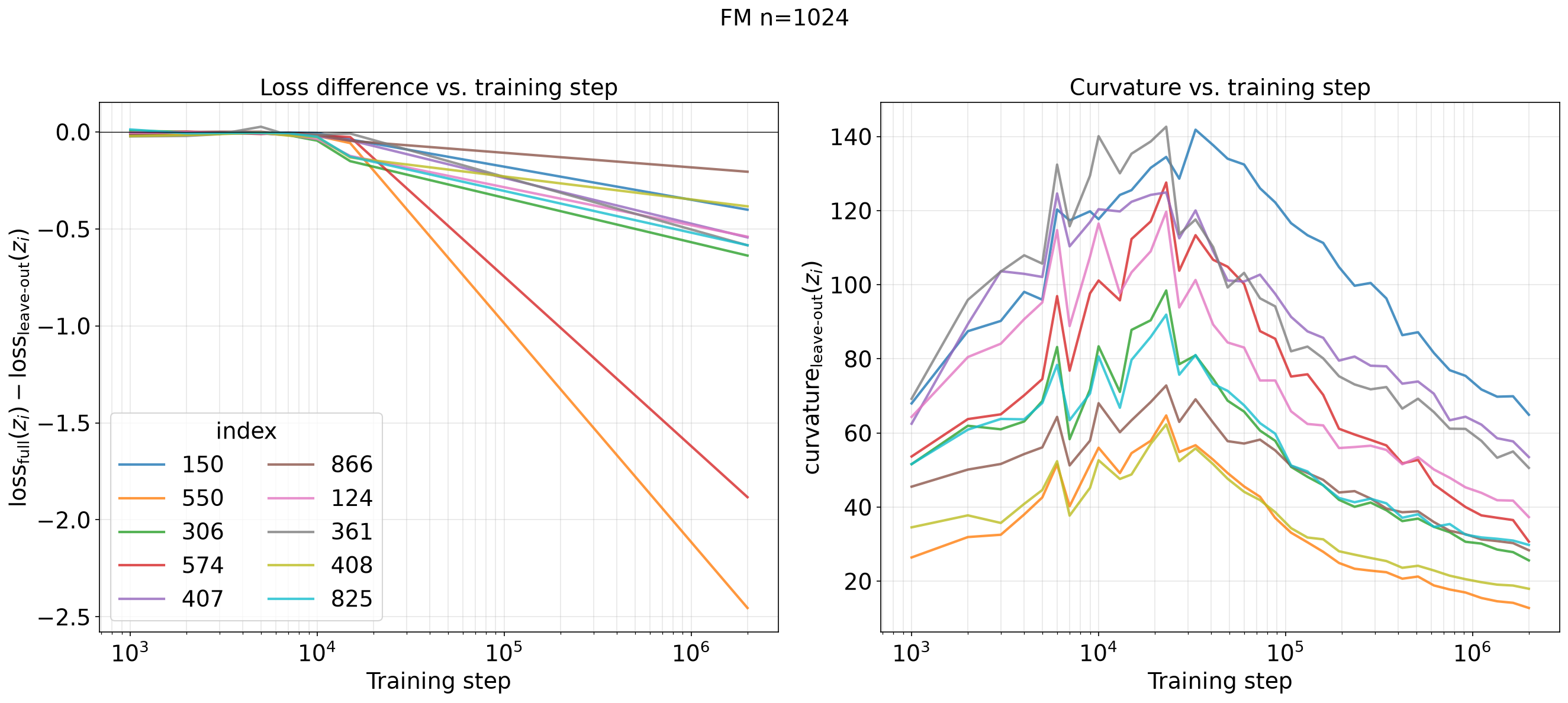}
    \caption{\textbf{Loss difference and curvature for Imagenet (n=1024) flow matching - leave-10-out-training}: (left) difference between per sample loss of full model and leave-10-out model vs. training step for each of the 10 left-out training samples, (right) per sample curvature of the leave-10-out model vs. training step for each of the 10 left-out training samples.}
    \label{fig:fm_1024_loss_diff_curv}
\end{figure}

\begin{figure}[h]
    \centering
    \begin{subfigure}[b]{0.49\linewidth}
        \centering
        \includegraphics[width=\linewidth]{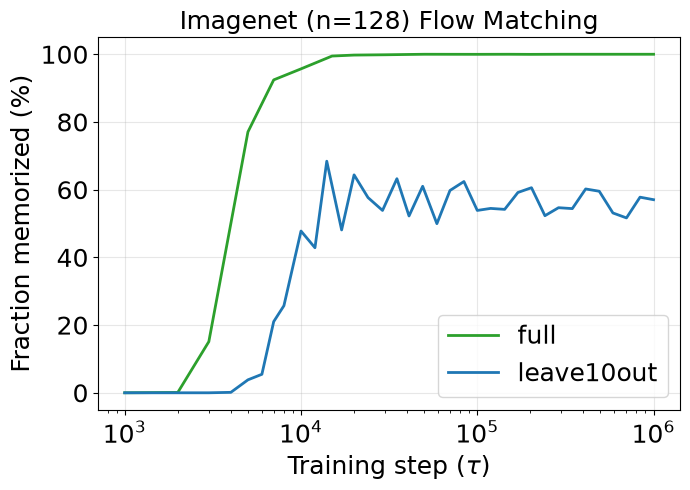}
        \caption{}
        \label{fig:left}
    \end{subfigure}
    \hfill
    \begin{subfigure}[b]{0.49\linewidth}
        \centering
        \includegraphics[width=\linewidth]{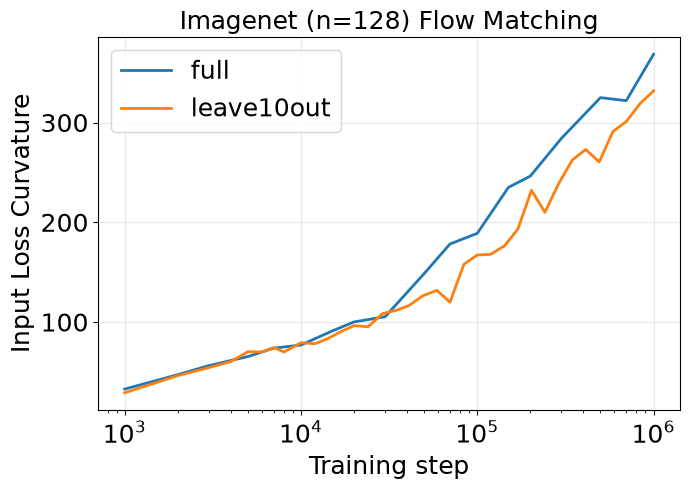}
        \caption{}
        \label{fig:right}
    \end{subfigure}
    \caption{\textbf{Memorization and curvature for Imagenet (n=128) flow matching - full vs. leave-10-out training}: (a) fraction memorized (\%) with memorization threshold $k=1/3$ vs. training step and (b) average input loss curvature over diffusion time and training set vs. training step for full $n=128$ training set and leave-10-out training set with the top $10$ memorized training samples removed.}
    \label{fig: fm_128_leave10out_mem_curv}
\end{figure}

\section{Additional ImageNet DDPM experiments}
\label{app:imagenet-ddpm}

\subsection{Experimental Details}

We use $64\times64$ RGB ImageNet images and nested training sets of size
$n\in\{128,256,512,1024,2048,4096\}$, taking the first $n$ images in a fixed
dataset ordering. We evaluate all models on the same disjoint set of
$1{,}024$ held-out images. Each training size corresponds to one DDPM trained
with the noise-prediction objective. The U-Net has base width $64$ and channel
multipliers $(1,2,4,4)$. We use Adam with learning rate $10^{-4}$ and a linear
noise schedule with $1{,}000$ diffusion steps. We train all models for $10^6$
updates and continue training those with $n\in\{512,1024,2048,4096\}$ for
another $10^6$ updates to examine their behavior over longer training.

For the curvature, gradient, and loss measurements, we convert the predicted
noise to a clean-image estimate $\hat{x}_0$ and evaluate the mean squared
reconstruction error
\[
    \ell_\tau(x;t,\epsilon)
    = \frac{1}{d}\left\|\hat{x}_0(x_t,t;\phi(\tau))-x\right\|_2^2,
    \qquad d=3\cdot64^2,
\]
where $d$ is the number of scalar image values across the three RGB channels,
so $1/d$ averages the squared error over all channels and pixels.
Here $x$ is the clean image after the input normalization used for training,
and $x_t$ is its noisy version at diffusion step $t$, obtained using Gaussian
noise $\epsilon$. The training checkpoint is indexed by $\tau$, and
$\phi(\tau)$ denotes the model parameters at that checkpoint.
Derivatives are taken with respect to $x$, including its contribution through
$x_t$. We estimate the input-Hessian trace using $32$ Monte Carlo draws per
image. Each draw samples $t$ uniformly from $\{1,\ldots,999\}$, Gaussian
forward noise $\epsilon$, and a Rademacher vector $v$, and evaluates
$v^\top\nabla_x^2\ell_\tau(x;t,\epsilon)v$.
The gradient curves average $\|\nabla_x\ell_\tau(x;t,\epsilon)\|_2$ over $32$
draws, and the loss curves average $\ell_\tau$ over $64$ draws. Training means
use all $n$ images; validation means use the fixed held-out set. These
measurements do not require generating samples from the model.

At each checkpoint used to evaluate memorization, we generate $10{,}000$
images using $100$ deterministic DDIM steps ($\eta=0$). A generated image is
classified as a copy when the ratio of its nearest to second-nearest
training-image distances is below $1/3$. We use Euclidean distances in the
normalized image space used for training.
The copy count $F(i,\tau)$ is the number of generated copies assigned to
training image $i$ at checkpoint $\tau$, and $f_{\mathrm{mem}}(\tau)$ is the
fraction of generations classified as copies.

\subsection{Supporting Figures}

Figure~\ref{fig:imagenet-ddpm-copied-groups} compares the mean curvature of
training images copied at a checkpoint with that of the remaining training
images. The copied group generally has higher curvature, although the
separation varies with training size and checkpoint.
Figure~\ref{fig:imagenet-ddpm-maxfreq-gap} compares the maximum copy count with
the relative train--validation curvature gap. Smaller training sets tend to
exhibit both larger maximum copy counts and larger curvature gaps.

% Keep these figures near the text and top-align any figure-only pages.
\begingroup
\makeatletter
\setlength{\@fptop}{0pt}
\makeatother
\begin{figure}[!htbp]
    \centering
    \includegraphics[width=\linewidth,height=0.76\textheight,keepaspectratio]{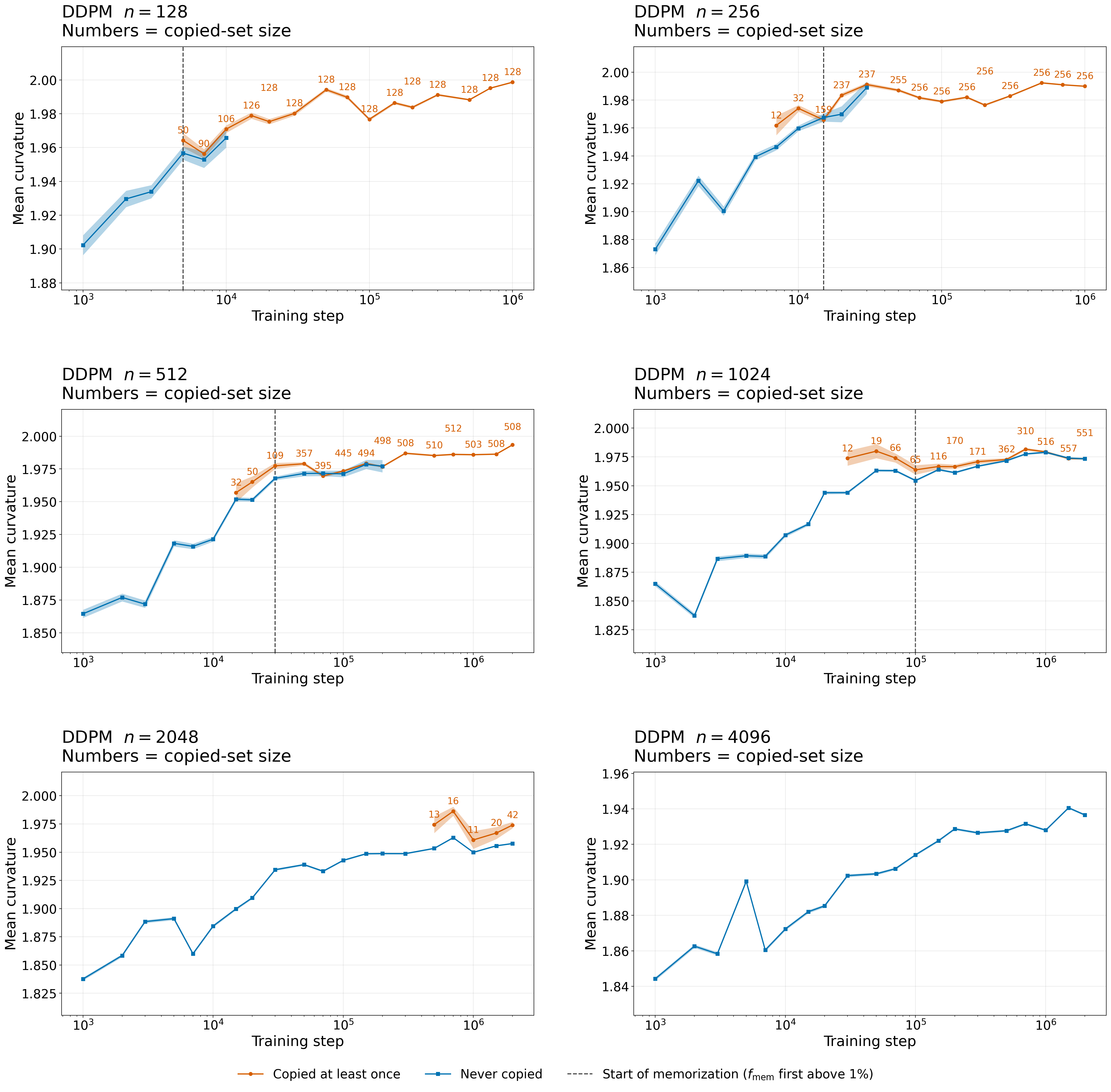}
    \caption{\textbf{Curvature of copied and non-copied training images.}
    Each panel shows one ImageNet DDPM training size. Orange and blue curves
    average the input-Hessian trace over images with $F(i,\tau)>0$ and
    $F(i,\tau)=0$, respectively. Group membership is determined separately
    at each checkpoint; ``never copied'' in the legend refers only to that
    checkpoint's generations. Shading shows one standard error across images
    in the group, and orange numbers give the copied-group size. A group is
    plotted only when it contains at least ten images. Dashed vertical lines
    mark the first evaluated checkpoint with $f_{\mathrm{mem}}>1\%$, used
    as a reference threshold.}
    \label{fig:imagenet-ddpm-copied-groups}
\end{figure}

\begin{figure}[!htbp]
    \centering
    \includegraphics[width=0.9\linewidth]{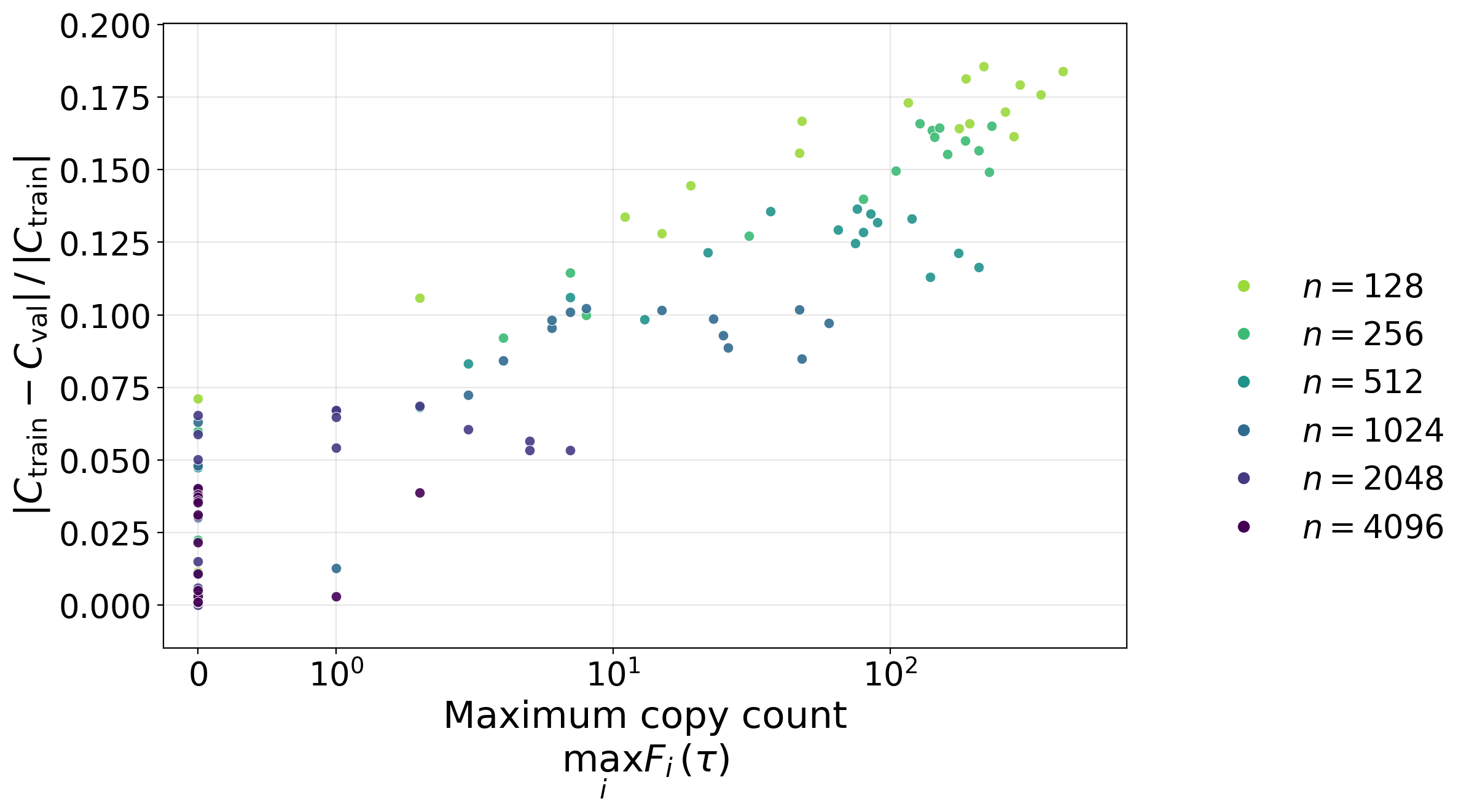}
    \caption{\textbf{Maximum copy count and relative curvature gap.}
    Each point represents one evaluated checkpoint of an ImageNet DDPM;
    colors indicate training-set size. The horizontal coordinate is
    $\max_i F(i,\tau)$ among $10{,}000$ generations. The vertical coordinate
    is $|C_{\mathrm{train}}(\tau)-C_{\mathrm{val}}(\tau)|/
    |C_{\mathrm{train}}(\tau)|$, where the two means use all training images
    and the same $1{,}024$ held-out images, respectively. Curvature is the
    input-Hessian trace estimated as described above. The horizontal scale
    is logarithmic above one and linear near zero to retain checkpoints
    with no observed copies.}
    \label{fig:imagenet-ddpm-maxfreq-gap}
\end{figure}
\clearpage
\endgroup

\end{document}